\newtheorem{example}{Example}
\newtheorem{theorem}{Theorem}
\newtheorem{lemma}{Lemma}
\newtheorem{claim}{Claim}
\newtheorem{definition}{Definition}
\newcommand{\Amc}{\ensuremath{\mathcal{A}}}
\newcommand{\Cmc}{\ensuremath{\mathcal{C}}}
\newcommand{\Emc}{\ensuremath{\mathcal{E}}}
\newcommand{\Imc}{\ensuremath{\mathcal{I}}}
\newcommand{\Jmc}{\ensuremath{\mathcal{J}}}
\newcommand{\Kmc}{\ensuremath{\mathcal{K}}}
\newcommand{\Lmc}{\ensuremath{\mathcal{L}}}
\newcommand{\Mmc}{\ensuremath{\mathcal{M}}}
\newcommand{\Tmc}{\ensuremath{\mathcal{T}}}
\newcommand{\Vmc}{\ensuremath{\mathcal{V}}}
\newcommand{\A}{\Amc} 
\newcommand{\T}{\Tmc} 
\newcommand{\K}{\Kmc} 
\newcommand{\I}{\Imc}
\newcommand{\J}{\Jmc}
\newcommand{\ISA}{\sqsubseteq}
\newcommand{\cnames}{\mathsf{N_C}}
\newcommand{\inames}{\mathsf{N_I}}
\newcommand{\rnames}{\mathsf{N_R}}
\newcommand{\rnamespm}{\mathsf{N_R^\pm}}
\newcommand{\tbox}{\Tmc}
\newcommand{\abox}{\Amc}
\newcommand{\kb}{\Kmc}
\newcommand{\dllite}{\textup{DL-Lite}}
\newcommand{\dllitebool}{\ensuremath{\dllite_\textup{bool}}}
\newcommand{\dllitehorn}{\ensuremath{\dllite_\textup{horn}}}
\newcommand{\dllitecore}{\ensuremath{\dllite_\textup{core}}}
\newcommand{\ALC}{\mathcal{ALC}}
\newcommand{\ALCIO}{\mathcal{ALCIO}}
\newcommand{\EL}{\mathcal{EL}}
\newcommand{\ELbot}{\mathcal{EL_\bot}}
\newcommand{\ELIO}{\mathcal{ELIO}}
\newcommand{\ExpSpace}{\ensuremath{\mathsf{ExpSpace}}}
\newcommand{\NExp}{\ensuremath{\mathsf{NExp}}}
\newcommand{\NP}{\ensuremath{\mathsf{NP}}}
\newcommand{\NL}{\ensuremath{\mathsf{NL}}}
\newcommand{\PTime}{\ensuremath{\mathsf{P}}}
\newcommand{\sizeof}[1]{\ensuremath{{\left|#1\right|}}}
\newcommand{\cstyle}[1]{\ensuremath{\mathsf{#1}}}
\newcommand{\rstyle}[1]{\ensuremath{\mathsf{#1}}}
\newcommand{\istyle}[1]{\ensuremath{\mathsf{#1}}}
\renewcommand{\emptyset}{\varnothing}
\newcommand{\successor}{\ensuremath{\sigma}}
\newcommand{\cn}[1]{\ensuremath{\mathsf{#1}}\xspace}
\newcommand{\mms}{\textsc{MinModSat}\xspace}
\newcommand{\ccol}{\textsc{Cert3Col}\xspace}
\newcommand{\coccol}{\textsc{coCert3Col}\xspace}
\newcommand{\depgraph}{\mathit{DG}}
\newcommand{\type}{\mathsf{tp}}
\newcommand{\leaft}{\cn{L}_{6n+2}^{(3)}}
\newcommand{\leafone}{\mathrm{Leaf_1}}
\newcommand{\leaftwo}{\mathrm{Leaf_2}}
\newcommand{\leafthree}{\mathrm{Leaf_3}}
\newcommand{\leafred}{\mathrm{Leaf_R}}
\newcommand{\leafblue}{\mathrm{Leaf_B}}
\newcommand{\leafgreen}{\mathrm{Leaf_G}}
\newcommand{\leafi}{\mathrm{Leaf}_i}
\newcommand{\leafc}{\mathrm{Leaf}_C}
\title{Minimal Model Reasoning in Description Logics: Don't Try This at Home!}
\author{%
	Federica Di Stefano$^1$\and
	Quentin Manière$^{2,3}$\and
	Magdalena Ortiz$^1$\and
	Mantas \v{S}imkus$^1$
	\affiliations
	$^1$Institute of Logic and Compuation, TU Wien\\
	$^2$Department of Computer Science, Leipzig University, Germany \\
	$^3$Center for Scalable Data Analytics and Artificial Intelligence (ScaDS.AI), Dresden/Leipzig, Germany
	\emails  
	quentin.maniere@uni-leipzig.de,
	\{federica.stefano\textbar magdalena.ortiz\textbar mantas.simkus\}@tuwien.ac.at
}
\begin{document}

	\maketitle

	\begin{abstract}
		Reasoning with minimal models has always been at the core of many knowledge representation techniques, but we still have only a limited understanding of this problem in \emph{Description Logics (DLs)}.
		Minimization of \emph{some} selected predicates---letting the remaining predicates \emph{vary} or be \emph{fixed}, as proposed in \emph{circumscription}---has been explored and exhibits high complexity.
		The case of `pure' minimal models, where the extension of \emph{all} predicates must be minimal, has remained largely uncharted. 
		We address this problem in popular DLs and obtain surprisingly negative results: concept satisfiability in minimal models is undecidable already for $\EL$.
          This undecidability also extends to a very restricted fragment of \emph{tuple-generating dependencies}. 
 		To regain decidability, we impose acyclicity conditions on the TBox that bring the worst-case complexity below double exponential time and allow us to establish a connection with the recently studied pointwise circumscription; we also derive results in data complexity.
		We conclude with a brief excursion to the DL-Lite family, where a positive result was known for $\dllitecore$, but our investigation establishes $\ExpSpace$-hardness already for its extension $\dllitehorn$.
	\end{abstract}	
	
	\section{Introduction}

    Reasoning with \emph{minimal models} has always been at the core of many \emph{Knowledge Representation (KR)} languages.
It is most prominent in  formalisms for non-monotonic reasoning, 
from default logic \cite{reiter1980defaultlogic} and circumscription \cite{mccarthy1980circumscription}
to answer set programming  \cite{gel88} and it plays a crucial role in classical KR problems like abduction and diagnosis \cite{reiter1987diagnosis}. Finding minimal models and reasoning about them has been a recurring topic in the KR literature for many years; see~\cite{amai/Ben-EliyahuD96,DBLP:journals/ai/Ben-Eliyahu-ZoharyP97,DBLP:conf/kr/LacknerP12,DBLP:journals/ai/AngiulliBFP14,10.1093/logcom/exu053}. 

When reasoning from a knowledge base, \emph{minimal models} provide a natural and intuitive counterpart to traditional open-world semantics and classical entailment, which can easily exclude some expected consequences (e.g., a query may be not entailed due to a counter-example model that includes unexpected and unjustified facts).
   In contrast, considering only those models in which all facts are strictly necessary and justified may lead to more intuitive reasoning.
   We illustrate this in a very simple example.

 \begin{example}
 Under the standard semantics, the inclusion $\mathsf{ScandCountry} \ISA \mathsf{NatoMember}$ is not entailed by the following six assertions, since there may be unknown Scandinavian countries that are not in NATO:
$\mathsf{ScandCountry}(no)$, $\mathsf{ScandCountry}(se)$, $\mathsf{ScandCountry}(dk)$, $\mathsf{NatoMember}(no)$, $\mathsf{NatoMember}(se)$, $\mathsf{NatoMember}(dk)$.
However, the entailment does hold under the minimal model semantics; equivalently, the concept $\mathsf{ScandCountry} \sqcap \lnot \mathsf{NatoMember}$ is not satisfiable in the minimal models.
    \end{example}

Despite the strong motivation, there are still big gaps in our understanding of minimal model reasoning in \emph{Description Logics (DLs)}. 
Predicate minimization has been explored in the context of \emph{circumscription in DLs}, but most existing results spell out the high complexity that results from combining minimized predicates with varying or fixed predicates; see, e.g.,\,\cite{BonattiLW09,LutzMN23}. Specifically, when varying predicates are allowed  (e.g.\,satisfiability of a concept for general circumscription in $\ALC$), reasoning becomes quickly undecidable. 
But the case of purely \emph{minimal models}, where nothing can be removed from the extension of any predicate while preserving modelhood, remained largely unexplored. It was however established recently that for the DL $\ELIO$---a relatively expressive DL with \textsc{ExpTime}-complete concept satisfiability problem for classical semantics---basic minimal model reasoning becomes undecidable~\cite{DistefanoS24}. A positive result was established for $\dllitecore$: here minimal model reasoning exhibits the same worst-case complexity as in the classical case~\cite{BonattiD0S23}. It is thus natural to explore whether similar positive or negative results can be obtained for other lightweight DLs like $\EL$ or other DL-Lite variants.

In this paper we investigate these questions, and provide the following contributions:
\begin{itemize}
\item We show that concept satisfiability in a minimal model is undecidable for the DL $\EL$. 
The decidability status of minimal model reasoning 
has been open for several years, and the negative outcome is somewhat surprising. 
It
contrasts with
 the complexity of the
  classical semantics for $\EL$, which supports tractable reasoning
  for basic reasoning tasks. Our undecidability proof does not use the $\top$-concept, and it thus carries over to \emph{guarded tuple generating dependencies (TGDs)} of very restricted shapes.
\item To regain decidability, we impose two types of \emph{acyclicity
    conditions} on the TBox, which are defined in terms of a
  dependency graph on the predicates of a knowledge base.  If we
  restrict our attention to \emph{strongly acyclic} TBoxes, we can
  import results from \emph{pointwise circumscription}
  of~\cite{DistefanoOS23} to show that $\ELIO$ not only becomes
  decidable, but is feasible in non-deterministic exponential time.
  We also explore \emph{weak acyclicity}, a common notion in the
  setting of TGDs in database theory
  \cite{FAGIN200589,10.1145/3034786.3034794,jair/GrauHKKMMW13}. Weakly acyclic $\EL$ and
  $\ELIO$ remain decidable; we get tight $\NExp^\NP$ bounds on
  combined complexity, and $\Sigma_2^P$ on data complexity.

    \item 
We conclude the paper with a minor excursion into DL-Lite, but even there we find a challenging panorama: satisfiability in minimal models is already $\ExpSpace$-hard for $\dllitehorn$.
\end{itemize}

\noindent
The present document, with full proofs in the appendix, is the long version of a paper published at KR 2025.

	\section{Preliminaries}

    We briefly recall the syntax and semantics of DLs studied in this paper and refer to \cite{bookdls} for more details. 
    We consider countably infinite pairwise disjoint sets  
$\cnames$, 
$\rnames$ and  $\inames$ of \emph{concept}, \emph{role} and \emph{individual names}, respectively, and use $\rnamespm$ to denote the set $\rnames \cup \{r^- \mid r \in \rnames \}$. \emph{Concepts}
in $\ALCIO$ follow the syntax $C := A \mid \{a\} \mid \top \mid  \lnot C \mid C \sqcap C \mid \exists r.C$, where 
$A \in \cnames$, 
$r\in \rnamespm$ and  $a \in \inames$. 
By removing \emph{negated concepts} $\lnot C$ from this grammar, we obtain concepts in $\ELIO$; by 
 removing \emph{nominals} $\{a\}$ and requiring $r\in \rnames$ we obtain $\ALC$. The intersection of 
 the $\ELIO$ and $\ALC$ concept languages is called $\EL$. 
 In $\ELIO_\bot$ and $\ELbot$ we extend $\ELIO$ and $\EL$ with negation, but only in concepts of the form $\lnot \top$, which we equivalently write $\bot$.  
In $\ALCIO$ we use $C \sqcup D$ as a shortcut for  
$\lnot (\lnot C \sqcap \lnot D)$ and 
 $\forall r.C$ as a shortcut for 
 $\lnot(\exists r.\lnot C)$. 

 Let $\Lmc$ be a DL. 
A \emph{TBox} $\Tmc$ (in $\Lmc$) is a finite set of \emph{concept inclusions} $C \sqsubseteq D$ where $C$ and $D$ are concepts in $\Lmc$. 
An \emph{ABox} $\Amc$ is a finite set of \emph{assertions} of the forms $A(a)$ and $r(a,b)$ with $A \in \cnames$, 
$r\in \rnamespm$ and  $a,b \in \inames$. 
A pair $\Kmc=(\Tmc,\Amc)$ of a TBox and an ABox is a \emph{knowledge base (KB)}. 

The semantics of DLs is defined using interpretations $\Imc = (\Delta^\Imc,\cdot^\Imc)$, where $\Delta^\Imc$ is a non-empty domain and the interpretation function $\cdot^\Imc$ maps 
each $A \in \cnames$ to a set $A^\Imc \subseteq \Delta^\Imc$,  
each $r \in \rnames$ to a set of pairs $r^\Imc \subseteq \Delta^\Imc \times \Delta^\Imc$, and  
each $a \in \inames$ to an element $a^\Imc \in \Delta^\Imc$. The interpretation function extends to all concepts as usual, and we call $\Imc$ a \emph{model} of a concept $C$ if $C^\Imc \neq \emptyset$.
For $\alpha$ a concept inclusion, assertion, TBox, ABox or KBs, \emph{modelhood} $\Imc \models \alpha$ is standard. 
We say that $\Imc$ makes the \emph{unique name assumption (UNA)} if $a^\Imc \neq b^\Imc$ for every $a,b \in \inames$ with $a \neq b$. 
When considering $\EL$ and $\ELIO$,
we make the UNA unless stated otherwise. 
In DLs containing $\ELbot$ this assumption is irrelevant since the UNA can be simulated in the usual way.  

 \begin{definition}\label{minimal:model}
    For interpretations $\Imc$ and $\Jmc$, we let $\Imc\subseteq \Jmc$ if 
        \begin{enumerate}[(i)]
            \item $\Delta^{\Imc}=\Delta^{\Jmc}$ and $a^{\Imc}=a^{\Jmc}$ for all $a\in \inames$;
            \item $p^{\Imc}\subseteq p^{\Jmc}$ for all predicates $p\in \cnames \cup \rnames$.
        \end{enumerate}

        We write $\Imc\subsetneq \Jmc$ if $\Imc\subseteq \Jmc$ and $p^{\Imc}\subsetneq p^{\Jmc}$
        for some $p\in \cnames \cup \rnames$. 
          We call $\Imc$ a \emph{minimal model} of a KB $\Kmc$, if (a)
            $\Imc\models \Kmc$,  and (b) there exists no $\Jmc\subsetneq \I$ such that $\Jmc\models \Kmc$.

        \end{definition}

Interpretations with different domains are not comparable according to this definition, which coincides with the preference relation induced by a circumscription pattern where all predicates are minimized \cite{BonattiLW09}.

The reasoning task that we focus on is \emph{concept satisfiability in a minimal model} (\mms for short) defined as follows: Given 
 an $\Lmc$ KB $\Kmc$ and an $\Lmc$ concept $C$, decide whether there exists a minimal model $\I$ of $\Kmc$ with $C^\Imc \neq \emptyset$.

    \begin{example}
      Take a TBox $\T$ stating that (movie) \emph{fans} must
      \emph{like} some movie, while \emph{critics} always
      \emph{dislike} something:
      \[ \mathsf{Fan} \sqsubseteq \exists \mathsf{likes}
        . \mathsf{Movie} \qquad \mathsf{Critic} \sqsubseteq \exists
        \mathsf{dislikes} .\top \] Consider also ABoxes as follows:
      \[\A_1= \{\mathsf{Fan}(ann)\}\qquad \A_2=
        \{\mathsf{Fan}(ann),\mathsf{Critic}(bob)\} \] We are 
      interested in the satisfiability of the concept
      $C = \mathsf{Movie}\sqcap \exists \mathsf{dislikes}^{-}.\top$,
      i.e.\ the existence of a movie that is disliked by someone.
      Observe that $C$ is not satisfiable in a minimal model of
      $\K_1=(\T,\A_1)$, because $\K_1$ has no justification
      of an object (person) that dislikes something. However, $C$ is
       satisfiable in a minimal model of $\K_2=(\T,\A_2)$ (in
      this model $ann$ likes a movie that $bob$ dislikes).
    \end{example}

{Since traditional reductions between basic reasoning tasks do not directly apply to minimal model reasoning, we do not study them here  and we focus on concept satisfiability only.}

	\section{Undecidability of \mms}

        Before we present our main results, we first provide as a `warm-up' a proof of $\Sigma^P_2$-hardness in data complexity of \mms in $\mathcal{ALC}$. The proof is not presented for the complexity result, which is subsumed by tighter bounds in the following sections, but to provide a gentle introduction to the \emph{flooding} technique that will be used heavily in the later reductions. This technique, known as  \emph{saturation} in disjunctive logic programming~\cite{DBLP:journals/amai/EiterG95}, simulates  the universal quantification required for minimization, i.e., testing that \emph{all} substructures are \emph{not models}. Intuitively, a ``flooded''  interpretation contains objects that satisfy a given disjunctive concept in more than one way. At the core of this are \emph{cyclic dependencies} between some concept names  $A_1,A_2$ that may appear together in some disjunction $A_1\sqcup A_2$  on the right-hand-side of a concept inclusion.
        Intuitively, verifying that  $e\in (A_1\sqcap A_2)^\I$ holds in a minimal model $\I$ may require a \emph{case analysis}: we may need to check that $e\in A_1^\I$ implies $e\in A_2^\I$, and that $e\in A_2^\I$ implies $e\in A_1^\I$.          
        Such case-based verification  can be used for testing for crucial properties (errors in a coloring, in a grid construction, etc.), and a flooded minimal model implies that every possible way of avoiding the flooding failed, thus implicitly quantifying over the domain of the structure. 

\begin{figure}
  \centering
            \includegraphics[width=1\linewidth]{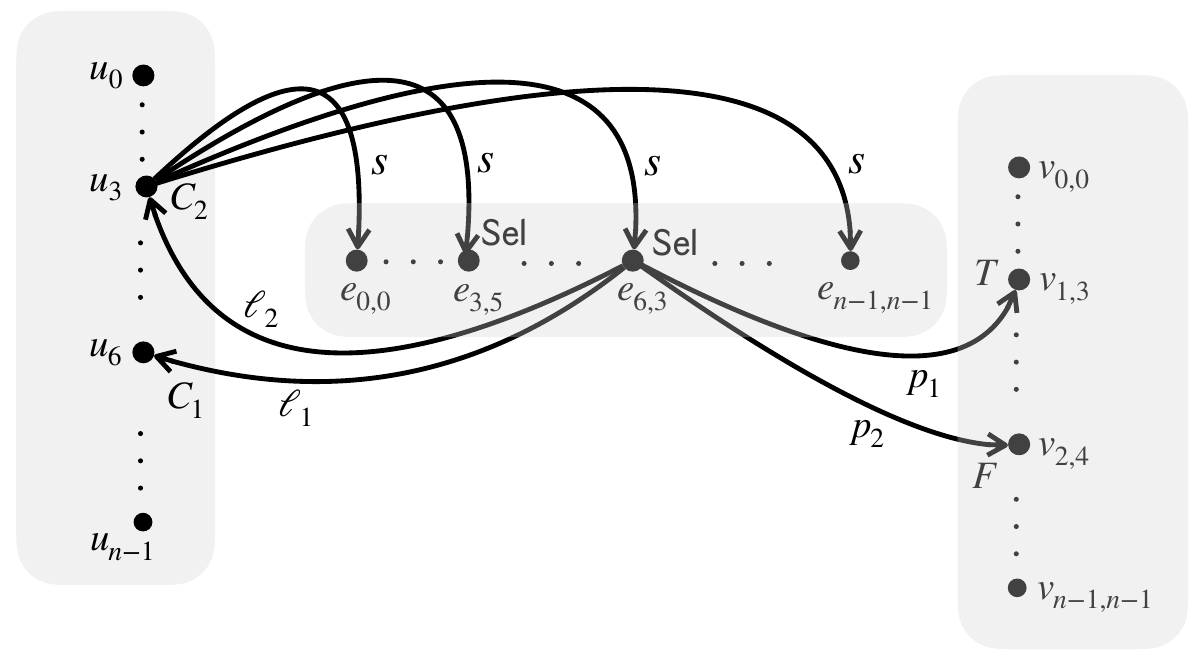}
            \caption{The structure of an ABox $\A$ in
              Example~\ref{example:flooding-technique}. Roles
              $p_1,p_2$ indicate that the first and second 
              literals in the labeling of the edge $(6,3)$ are
              $v_{1,3}$ and $v_{2,4}$, respectively. The figure also
              indicates an interpretation  where $v_{1,3}$ and $ v_{2,4}$ are
              respectively true and false, while the vertices $6$ and
              $3$ are colored with colors $C_1$ and $C_2$. 
              The presence of $\mathsf{Sel}$ indicates that the disjunctions of literals at edges $\{3,5 \}$ and $\{ 6,3 \}$ evaluate to true in the given interpretation.}
  \label{fig:example}
\end{figure}
\begin{example} 
          \label{example:flooding-technique} 
We show a reduction from (the complement of) \ccol,
a $\Pi^P_2$-hard problem \cite{logcom/stewart91}, to \mms. As in~\cite{BonattiLW09}, we define an instance of \ccol (of size $n$) as an undirected graph $G$ with vertices $\{0, 1,\ldots,n-1\}$
        where each edge is labeled with a disjunction of two literals over the Boolean variables $\{V_{i,j} \mid i, j < n\}$. $G$ is a positive instance of \ccol if for every truth assignment $t$, the subgraph $G_t$ of $G$ that contains those edges whole labels evaluate to \emph{true} under $t$ is 3-colorable. 

          We represent $G$ as an ABox $\A$. 
          We then provide a fixed TBox $\T$ and concept $D$ such that $D$ is satisfiable in a minimal model of $(\T, \A)$ iff $G$ is \emph{not} a positive instance of 
          \ccol.
          The stucture of $\A$ is illustrated in Figure~\ref{fig:example}.
          \\[3pt]\noindent
          \textbf{Creating the individuals.} To represent the vertices of $G$,
          we add the assertions $N(u_0),\ldots,N(u_{n-1})$ to
          $\A$. Similarly, to represent the propositional variables,
          we include the assertion $V(v_{i,j})$ for all
          $0 \leq i,j < n$. To represent the edges of $G$, we add to
          $\A$ the assertion $E(e_{i,j})$, for all $0 \leq i,j <
          n$.
		  \\[3pt]\noindent
          \textbf{Connecting the individuals.} We use two 
          roles $\ell_1,\ell_2$ to connect each $e_{i,j}$ to the
          individuals that represent the vertices $i$ and
          $j$: we add $\ell_1(e_{i,j},u_i)$ and
          $\ell_2(e_{i,j},u_j)$ for all $0 \leq i,j < n$. Similarly,
          using the roles $p_1,p_2$, we connect each $e_{i,j}$ to the two individuals that represent the literals in the label between $i$ and $j$: we
          add $p_1(e_{i,j},v_{k_1,k_2})$ and
          $p_2(e_{i,j},v_{k_3,k_4})$, where $v_{k_1,k_2} $ and
          $v_{k_3,k_4}$ are the first and second variables in the label of the edge between $i$ and $j$, respectively. In addition, if $v_{k_1,k_2}$ occurs positively (resp., negatively) in the label, we add $\mathsf{pos}_1(e_{i,j})$ (resp., $\mathsf{neg}_1(e_{i,j})$)
          to $\A$. The second literal is treated similarly, adding 
          $\mathsf{pos}_2(e_{i,j})$ or $\mathsf{neg}_2(e_{i,j})$
          depending on whether $v_{k_3,k_4}$ is positive or negative in the edge represented by $e_{i,j}$. 
          Finally, we use a role $s$ to connect each individual $u_k$ to all the individuals $e_{i,j}$ representing an edge: we
          add $s(u_k,e_{i,j})$ to $\A$, for all $0 \leq k,i,j < n$.
          This completes
          the construction of $\A$.
		  \\[3pt]\noindent
          \textbf{Building the TBox.} 
          $\T$ consists of the following inclusions; note that these do not depend on the instance $G$. 
          We  introduce two disjunctions that enforce in each interpretation a `guess' of a truth value for each individual representing a propositional
          variable, and a `guess' of a color assignment for each 
          individual corresponding to a vertex. 
          \[V\sqsubseteq T \sqcup F \qquad N \sqsubseteq C_1\sqcup
            C_2\sqcup C_3\]
            With the following pair of inclusions we select
          the edges of $G$ that are labeled with a disjunction that evaluates to \emph{true}:
          \[ (\mathsf{pos}_i\sqcap \exists p_i.T) \sqcup
            (\mathsf{neg}_i\sqcap \exists p_i.F) \sqsubseteq
            \mathsf{Sel} \mbox{\quad for~} i\in\{1,2\} \] 
        Now comes the interesting part: if a selected edge is wrongly colored (i.e.\,the same color is found at both ends), then all vertices are ``flooded'' with all
          colors.
 \[\exists s.( \mathsf{Sel}  \sqcap \exists \ell_1. C_i \sqcap \exists  \ell_2. C_i  )\sqsubseteq  C_1\sqcap C_2\sqcap C_3  \mbox{~~for~}  i\in\{1,2,3\}  \]
   Finally, we take $C_1\sqcap C_2$ as our goal
   concept, which is meant to detect `flooding' of the structure. 
   It can be easily verified that 
   $(C_1\sqcap C_2)$ is satisfiable in a minimal model of $(\T, \A)$ iff $G$ is \emph{not} a positive instance of \ccol.
\end{example}
 
This reduction will be adapted below to show 
$\Sigma^{P}_{2}$-hardness in data complexity for \emph{weakly acyclic} $\mathcal{EL}$ KBs.
However, we want to stress that there are no
existential concepts on the right-hand-side of inclusions  in Example~\ref{example:flooding-technique},  and 
that all these inclusions can be written in \emph{Disjunctive Datalog (DD)}
using rules where all relations  have
arity $\leq 2$.
Thus we 
(slightly) strengthen the $\Sigma^{P}_{2}$-hardness proof for data complexity of DD in~\cite{DBLP:journals/tods/EiterGM97}, which uses a relation of arity 5. 

\smallskip
	
We  now show our first and most surprising major result: minimal model reasoning is undecidable already in $\EL$. 
	\begin{theorem}
		\label{theorem:el-is-undecidable-without-top}
		$\mms$ in $\EL$ is undecidable.
		This holds even if the $\top$-concept is disallowed.
	\end{theorem}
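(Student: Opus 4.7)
The plan is to reduce from an undecidable problem, most naturally the domino tiling problem of $\mathbb{N} \times \mathbb{N}$ or, equivalently, the halting problem for a deterministic two-counter Minsky machine. Given an instance, I would construct an $\EL$ KB $(\T, \A)$ together with a target concept $D$ such that $D$ is satisfiable in a minimal model of $(\T, \A)$ if and only if the instance is a positive (or negative, depending on the direction) instance. The goal is to carry out the encoding using only concept names, conjunction and qualified existential restriction, never writing $\top$ on either side of an inclusion, so that the construction immediately transfers to guarded TGDs of the restricted shape mentioned in the introduction.

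The central technical ingredient is an adaptation of the flooding technique of Example~\ref{example:flooding-technique} to a language with no disjunction. The key observation is that in a minimal model of an $\EL$ KB, an existential axiom $A \ISA \exists r . B$ may be witnessed by re-using any already-present $B$-element instead of creating a fresh anonymous successor; when several named $B$-witnesses are seeded in $\A$, incomparable minimal models arise according to which witness is connected, and this choice acts as an implicit disjunction over which the outer existential quantifier in $\mms$ ranges. Planting two distinguished witnesses per ``guess bit'' and chaining them through further $\EL$ axioms then enables us to simulate propositional guesses, tile selections, or Minsky transition choices entirely inside $\EL$.

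Using this choice mechanism, the construction erects an unbounded grid (or tape) structure anchored at ABox individuals by axioms of the form $\mathrm{Cell} \ISA \exists h . \mathrm{Cell} \sqcap \exists v . \mathrm{Cell}$ (or analogous configuration-successor axioms) and propagates local data through $\EL$ axioms with conjunctive left-hand sides. Local constraints (tile compatibility, transition function, counter updates) are enforced by flooding sub-axioms: a violation triggers a designated concept to propagate across the relevant sub-structure, and $D$ detects this flooding through a fixed chain of existentials. Since every right-hand side is built only from concept names and $\exists r . (\cdot)$, the $\top$-concept is never required, and the whole reduction is expressible in the guarded-TGD fragment targeted by the corollary.

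The main obstacle is twofold. First, I must ensure that the minimality-induced ``disjunction'' faithfully simulates a genuine disjunctive guess: nothing in $\EL$ prevents minimality from collapsing two distinct guess-witnesses into a single element, which would short-circuit the encoding. I would counter this by pinning each witness to a separate concept name that appears only at that witness in $\A$, so that any identification would force a strictly larger extension on some predicate and thus violate minimality. Second, I must rule out ``cheap'' minimal models that satisfy $D$ by collapsing the grid or identifying cells that should stay distinct; the fix is to decorate each grid position with a predicate whose justification in the minimal model is uniquely anchored in the ABox, so that the intended grid is the only viable minimal skeleton. Reconciling these two requirements within the very limited $\EL$ syntax, and without $\top$, is what makes the argument substantially more delicate than the $\ELIO$ undecidability proof of \cite{DistefanoS24} and is the heart of the proof.
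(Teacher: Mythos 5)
Your high-level strategy --- reduce from a tiling/Minsky problem and adapt the flooding technique so that an undesired situation triggers a cascade of facts that a minimal model cannot avoid --- does match the paper's approach, which reduces from \recttile{} (finite rectangle tileability). But there is a genuine gap, and it lies exactly where you say the heart of the argument is.

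The central difficulty is \emph{not}, as you frame it, preventing two witnesses from collapsing into one. It is the opposite: given axioms like $\mathrm{Cell} \ISA \exists h.\mathrm{Cell} \sqcap \exists v.\mathrm{Cell}$, a minimal model will typically grow a \emph{tree} of anonymous elements, with the $h$-then-$v$ successor and the $v$-then-$h$ successor of a node being \emph{distinct}. In $\EL$ one has no inverse roles, no nominals and no equality, so there is simply no way to force confluence directly. The paper's key idea, which your proposal does not contain, is to turn this around: it introduces a concept $\cX$ that must be \emph{placed somewhere} along an $\rh$-$\rv$-path (via a propagated $\cpx$ concept), and an axiom $\exists \rh.\exists \rv.\cX \sqcap \exists \rv.\exists \rh.\cX \sqsubseteq \cflood$ that fires only at an element whose $h$-$v$ and $v$-$h$ successors coincide and both carry $\cX$. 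In a model whose skeleton fails to be a grid, the $\cX$-marker can always be routed to a defect point and flooding is avoided, so such a model can be made strictly smaller than a flooded one and hence the flooded one is not minimal. Only when the structure \emph{is} a grid is flooding unavoidable no matter where $\cX$ is placed, and only then does the goal survive minimization. Your pinning-via-anchored-concept-names trick does nothing to establish this confluence-or-detect dichotomy; anchoring helps keep the seeded witnesses apart, but it gives no handle on the anonymous tree unfolding.

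A second, smaller issue is your choice of source problem. Tiling $\mathbb{N}\times\mathbb{N}$ is $\Pi^0_1$, while the minimal-model satisfiability you are targeting is witnessed by a single (and in the paper, finite) minimal model, which makes $\Sigma^0_1$-style problems like \recttile{} or Minsky halting the natural sources. The paper also needs a dedicated mechanism (the nine position concepts $\cC,\cN,\dots$, the $\leadsto_h,\leadsto_v$ relations, and the back-propagated $\goodpos$/$\subgoal_1$ chain) to force the generated grid to be a \emph{finite} rectangle with consistent borders; your sketch has nothing in this role. Both omissions would need genuinely new ideas, not just fleshing out, before your outline could yield a correct proof.
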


		\newcommand{\start}{\cstyle{Start}}
		\newcommand{\finish}{\cstyle{Finish}}
		\newcommand{\continue}{\cstyle{Continue}}
		\newcommand{\hcentral}{\cstyle{HCentral}}
		\newcommand{\vcentral}{\cstyle{VCentral}}
		\newcommand{\inner}{\cstyle{Inner}}
		\newcommand{\croot}{\cstyle{Root}}
		\newcommand{\node}{\cstyle{Node}}
		\newcommand{\hrole}{\rstyle{h}}
		\newcommand{\vrole}{\rstyle{v}}
		\newcommand{\leaf}{\rstyle{Leaf}}
		\newcommand{\srole}{\rstyle{s}}
		\newcommand{\rrole}{\rstyle{r}}
		\newcommand{\chosen}{\cstyle{Ch}}
		\newcommand{\tile}{\cstyle{Tile}}
		\newcommand{\istile}{\cstyle{isT}}
		\newcommand{\istiled}{\cstyle{isTiled}}
		\newcommand{\south}{\cstyle{S}}
		\newcommand{\east}{\cstyle{E}}
		\newcommand{\north}{\cstyle{N}}
		\newcommand{\west}{\cstyle{W}}
		\newcommand{\positions}{\mathsf{Pos}}
		\newcommand{\ispos}{\cstyle{is}}
		\newcommand{\newtop}{\cstyle{Any}}
		\newcommand{\border}{\cstyle{Border}}
		\newcommand{\hend}{\cstyle{HEnd}}
		\newcommand{\vend}{\cstyle{VEnd}}
		\newcommand{\hpos}{\rstyle{hpos}}
		\newcommand{\vpos}{\rstyle{vpos}}
		\newcommand{\real}{\cstyle{Real}}
		\newcommand{\error}{\cstyle{Err}}
		\newcommand{\finite}{\cstyle{Finite}}
		\newcommand{\subgoal}{\cstyle{Subgoal}}
		\newcommand{\rpos}{\rstyle{pos}}
		\newcommand{\ctop}{\rstyle{Any}}
		\newcommand{\rh}{\rstyle{h}}
		\newcommand{\rv}{\rstyle{v}}
		\newcommand{\rx}{\rstyle{s}}
		\newcommand{\cflood}{\rstyle{Flood}}
		\newcommand{\hastile}{\rstyle{tile}}
		
		\newcommand{\cN}{\cstyle{N}}
		\newcommand{\cS}{\cstyle{S}}
		\newcommand{\cE}{\cstyle{E}}
		\newcommand{\cW}{\cstyle{W}}
		\newcommand{\cC}{\cstyle{C}}
		\newcommand{\cNE}{\cstyle{NE}}
		\newcommand{\cNW}{\cstyle{NW}}
		\newcommand{\cSE}{\cstyle{SE}}
		\newcommand{\cSW}{\cstyle{SW}}
		\newcommand{\rspy}{\rstyle{spy}}
		\newcommand{\cgoh}{\cstyle{H}}
		\newcommand{\cgov}{\cstyle{V}}
		\newcommand{\cx}{\cstyle{isX}}
		\newcommand{\cX}{\cstyle{X}}
		\newcommand{\cpx}{\cstyle{PX}}
		\newcommand{\raux}{\rstyle{aux}}
		\newcommand{\blank}{b}
		\newcommand{\tiles}{T}
		\newcommand{\goal}{\cstyle{Goal}}
		\newcommand{\valid}{\cstyle{GoodT}}
		\newcommand{\hvalid}{\cstyle{HGoodT}}
		\newcommand{\vvalid}{\cstyle{VGoodT}}
		\newcommand{\bvalid}{\cstyle{BGoodT}}
		\newcommand{\hconstraints}{\ensuremath{H}}
		\newcommand{\vconstraints}{\ensuremath{V}}
		\newcommand{\goodpos}{\cstyle{GoodP}}
		\newcommand{\hgoodpos}{\cstyle{HGoodP}}
		\newcommand{\vgoodpos}{\cstyle{VGoodP}}

\newcommand{\recttile}{\textsc{RectTile}\xspace}

		We reduce from \recttile, the rectangular tileabilty problem, known to be undecidable \cite{DBLP:journals/ejc/Yang14}:
given a set $\tiles$ of Wang tiles ({a.k.a.}\ dominos) and a special color $\blank$, decide whether $\tiles$ tiles some finite rectangle with $\blank$ on its sides.
		
		Consider an instance $(\tiles, \blank)$ of \recttile.
		We construct an $\EL$ KB $\kb = (\tbox, \abox)$ such that $(\tiles, \blank) \in \recttile$ iff the concept $\goal$ is minimally satisfiable w.r.t.\ $\kb$.
		
		Our main challenge is to guarantee that every minimal model satisfying $\goal$ features a rectangular grid representing a tiling of some rectangle.
		Elements in the grid, further referred to as \emph{nodes}, are identified by the concept $\node$.
		We distinguish nine types of positions in the grid, identified using abbreviations of \emph{north}, \emph{south}, \emph{east} and \emph{west}: being one of the four corners ($\cNE$, $\cNW$, $\cSE$, $\cSW$), lying on one of the four borders ($\cS$, $\cN$, $\cE$, $\cW$), or lying in the \emph{central} part of the grid ($\cC$).
		The following displays an example of how nodes (denoted by their positions) are intended to be arranged:
		\[
		\begin{array}{cccccc}
			\cNW & \cN & \cN & \cN & \cN & \cNE
			\\
			\cW & \cC & \cC & \cC & \cC & \cE
			\\
			\cW & \cC & \cC & \cC & \cC & \cE
			\\
			\cSW & \cS & \cS & \cS & \cS & \cSE
		\end{array}
		\]
		We set $\positions := \{ \cC, \cS, \cN, \cE, \cW, \cSE, \cSW, \cNE, \cNW \}$ the set of those nine concept names.
		We say that $p$ admits $p'$ as a valid horizontal successor, denoted $p \leadsto_h p'$ if, in the above example, $p$ appears on the same row and more to the left than some $p'$ (\emph{e.g.} $\cW \leadsto_h \cC$ and $\cW \leadsto_h \cE$, but $\cW \not\leadsto_h \cSE$).
		Similarly, we say that $p$ admits $p'$ as a valid vertical successor, denoted $p \leadsto_v p'$ if, in the above example, $p$ occurs on the same column and below some $p'$.
		Each of these concepts is made available on a dedicated individual in the ABox by adding an assertion $p(a_p)$ for each $p \in \positions$.
		
		A node $e$ having position $p$ is represented by a connection from $e$ to $a_p$ by the role $\rpos$.
		Each node justifies such a connection using the axiom $\node \sqsubseteq \exists \rpos.\ctop$, where $\ctop$ is an auxiliary concept name that is used in place of $\top$.
		Note that there is no guarantee that a node connects to one of the individuals $a_p$.
		However, such a connection is required to justify further nodes in the grid.
		Indeed, only the south-west corner is provided as part of the ABox, with assertions $\node(a)$, $\newtop(a)$ and $\rpos(a, a_\cSW)$.
		Further nodes are generated by existing nodes with a valid position and as horizontal or vertical neighbors, which is represented by respective roles $\rh$ and $\rv$.
		To this end, we add the following axioms:
		\begin{align*}
			\node \sqcap \exists \rpos.p & \sqsubseteq \exists \rh.\node
			& & \textrm{ for } p \in \positions \setminus \{ \cSE, \cE, \cNE \}
			\\
			\node \sqcap \exists \rpos.p & \sqsubseteq \exists \rv.\node
			& & \textrm{ for } p \in \positions \setminus \{ \cNW, \cN, \cNE \}.
		\end{align*}
		With only these axioms, it should be clear that, in a minimal model, every instance of the $\node$ concept in $\Imc$ is generated following a path of roles $\hrole$ and $\vrole$ from $a$ and has at most one $\rh$-successor and at most one $\rv$-successor that is a node.
		Given an interpretation $\Imc$, we say that a sequence $d_0, r_1, d_1 \dots, r_n, d_n$ is an $\hrole$-$\vrole$-path of nodes from $a$ to $d_n$ if:
		\begin{itemize}
			\item $d_0 = a^\Imc$ and for every $0 \leq i \leq n$, we have $d_i \in \node^\Imc$;
			\item for every $1 \leq i \leq n$, $r_i \in \{ \hrole, \vrole \}$ and $(d_{i-1}, d_i) \in {r}_i^\Imc$. 
		\end{itemize}
        {The following claim holds, also in the presence of the remaining inclusions that we add further in this construction:}
		\begin{claimrep}
			\label{claim:h-v-paths}
			Let $\Imc$ be a minimal model of $\kb$.
			If $d \in \node^\Imc$, then there exists an $\hrole$-$\vrole$-path of nodes from $a$ to $d$.
			Furthermore, there exists at most one element $e \in \node^\Imc$ such that $(d, e) \in \rh^\Imc$.
			The same holds for role $\rv$.
		\end{claimrep}
		
		\begin{proof}[Proof of Claim~\ref{claim:h-v-paths}]
			Let $\Imc$ be a minimal model of $\kb$ and $d \in \node^\Imc$.
			
			For the first part of the claim, assume by contradiction that there is no $\hrole$-$\vrole$-path from $a$ to $d$.
			Consider the interpretation $\Jmc$ obtained from $\Imc$ by removing from $\node^\Imc$ all such $e$ that do not have a $\hrole$-$\vrole$-path from $a$ to $e$.
			It is readily verified that $\Jmc$ is a model of $\kb$.
			Furthermore, from the assumption that $d$ is such an element, we have $\Jmc \subsetneq \Imc$, which contradicts the minimality of $\Imc$.
			
			For the second part of the claim, assume by contradiction that there exists more than one element $e \in \node^\Imc$ such that $(d, e) \in \rh^\Imc$.
			We distinguish three cases: if $d \in (\exists \hrole.(\node \sqcap \cpx))^\Imc$, then we can select $e_0 \in (\cpx \sqcap \node)^\Imc$ such that $(d, e_0) \in \hrole^\Imc$.
			Else, if $d \notin (\exists \hrole.(\node \sqcap \cpx))^\Imc$ but $d \in (\exists \hrole.\node)^\Imc$, then we select $e_0$ as $e_0 \in \node^\Imc$ such that $(d, e_0) \in \hrole^\Imc$.
			In both cases, note that, by assumption, there must exist some $e_1 \neq e_0$ such that $(d, e_1) \in \hrole^\Imc$ and $e_1 \in \node^\Imc$.
			We then define an interpretation $\Jmc$ obtained from $\Imc$ by dropping all $(d, e)$ from $\hrole^\Imc$ except for $(d, e_0)$.
			It is readily verified that $\Jmc$ is a model of $\kb$, notably using the careful choice of the selected $e_0$.
			From the existence of $e_1$, we also obtain $\Jmc \subsetneq \Imc$, contradicting the minimality of $\Imc$.
			
			The case of $\vrole$ is similar.
		\end{proof}
		
		Using the same mechanism as to `choose' positions, we ask each node to choose a tile with the following axiom $\node \sqsubseteq \exists \hastile.\newtop$ and assertions $t(a_t)$ for each $t \in \tiles$.
		
		We now require the assignments of positions and tiles to be consistent in a minimal model to satisfy respective subgoal-concepts $\subgoal_1$ and $\subgoal_2$.
		More formally, regarding positions, we aim for the following claim:
		\begin{claimrep}
			\label{claim:positions}
			Let $\Imc$ be a minimal model of $\kb$ s.t.\ $a^\Imc \in \subgoal_1^\Imc$.
			Then, for every $d \in \node^\Imc$, there exists a unique $p \in \positions$, that we denote $pos(d)$, such that $(d, a_p) \in \rpos^\Imc$.
			Furthermore, for every $d, e \in \node^\Imc$, the following properties hold:
			\begin{enumerate}
				\item if $(d, e) \in \hrole^\Imc$, then $pos(d) \leadsto_h pos(e)$;
				\item if $(d, e) \in \vrole^\Imc$, then $pos(d) \leadsto_v pos(e)$;
				\item for every $\hrole$-$\vrole$-path $d_0, r_1, d_1 \dots, r_n, d_n$ of nodes from $a$, there exists (a potentially longer) one $d_0, r_1, d_1 \dots, r_n, d_n, \dots, r_{n + k}, d_{n + k}$, with $k \geq 0$, and such that $pos(d_{n+k}) =\cNE$;
				\item if $pos(d) = \cNE$, then $d \notin (\exists \hrole)^\Imc$ and $d \notin (\exists \vrole)^\Imc$.
			\end{enumerate}
		\end{claimrep}
		
		\begin{proof}[Proof of Claim~\ref{claim:positions}]
			Let $\Imc$ be a minimal model of $\kb$ such that $a \in \subgoal_1^\Imc$.
			That every node $d \in \node^\Imc$ has at most one successor by $\rpos$ is straightforward as otherwise one could simply drop the $\rpos$-edge to any extra-successor while retaining modelhood.
			That this successor is among the $a_p$ for $p \in \positions$ comes from $a^\Imc \in \subgoal_1^\Imc$.
			If the said successor was not among the $a_p$'s then we can obtain a model $\Jmc$ of $\kb$ by simplifying the interpretation of concepts $\hgoodpos$, $\vgoodpos$ and $\goodpos$ along the $\hrole$-$\vrole$-path from $a$ to $d$ (note that this path must exists from Claim~\ref{claim:h-v-paths}), and removing $a^\Imc$ from $\subgoal_1^\Imc$, thus obtaining $\Jmc \subsetneq \Imc$, which contradicts the minimality of $\Imc$.
			
			For Point~1, assume by contradiction that there exists $(d, e) \in \hrole^\Imc$ such that for all positions $p, p' \in \positions$ with $p \leadsto_h p'$, either $d \notin (\exists \rpos.p)^\Imc$ or $e \notin (\exists \rpos.p')^\Imc$.
			Note that $(d, e) \in \hrole^\Imc$ guarantees that $d \in \exists \rpos.q$ for $q \in \positions \setminus \{ \cSE, \cE, \cNE \}$, as otherwise we could remove $(d, e)$ from $\hrole^\Imc$ and obtain a model $\Jmc \subsetneq \Imc$ contradicting the minimality of $\Imc$.
			From this remark and the assumption, it follows that neither the first nor the second rules from Figure~\ref{figure:extra-rules-positions} triggers.
			It is clear that $d \in \node^\Imc$ as otherwise we could remove $(d, e)$ from $\hrole^\Imc$.
			Now, from Claim~1, there exists a $\hrole$-$\vrole$-path $a = d_0, r_1, d_1, \cdots, r_n, d_n = d$ from $a$ to $d$.
			We construct an interpretation $\Jmc$ by removing $a$ from $\subgoal_1^\Imc$ and $d_k$ from:
			\begin{itemize}
				\item $\goodpos^\Imc$ (always); and from
				\item $\hgoodpos^\Imc$ if $(d_k, d_{k+1}) \in \hrole^\Imc$; and from
				\item $\vgoodpos^\Imc$ if $(d_k, d_{k+1}) \in \vrole^\Imc$.
			\end{itemize}
			Using the second part of Claim~1, it is readily verified that $\Jmc$ is a model of $\kb$.
			Furthermore, since $a \notin \subgoal_1^\Jmc$, we have $\Jmc \subsetneq \Imc$, contradicting the minimality of $\Imc$.
			
			Point~2 is proved similarly.
			
			For Point~3, assume by contradiction that we could find an infinite $\hrole$-$\vrole$-path $a = d_0, r_1, d_1, r_2, d_2, \dots$ starting from $a$ and that never encounters a node whose position is $\cNE$.
			We construct an interpretation $\Jmc$ by removing $a$ from $\subgoal_1^\Imc$ and $d_k$ from:
			\begin{itemize}
				\item $\goodpos^\Imc$ (always); and from
				\item $\hgoodpos^\Imc$ if $(d_k, d_{k+1}) \in \hrole^\Imc$; and from
				\item $\vgoodpos^\Imc$ if $(d_k, d_{k+1}) \in \vrole^\Imc$.
			\end{itemize}
			Using the second part of Claim~1, it is readily verified that $\Jmc$ is a model of $\kb$.
			Furthermore, since $a \notin \subgoal_1^\Jmc$, we have $\Jmc \subsetneq \Imc$, contradicting the minimality of $\Imc$.
			
			Point~4 simply follows from the observation that nodes with position $\cNE$ do not require $\hrole$ nor $\vrole$ successors.
		\end{proof}

		\begin{figure*}
			\begin{align*}
				\exists \rpos.p & \sqsubseteq \hgoodpos
				& & \textrm{ for } p \in \{ \cSE, \cE, \cNE \}
				& & &
				\exists \rpos.p \sqcap \exists \hrole.(\goodpos \sqcap \exists \rpos.p') & \sqsubseteq \hgoodpos
				& & \textrm{ for } p, p' \in \positions \textrm{ s.t.\ } p \leadsto_h p'
				\\
				\exists \rpos.p & \sqsubseteq \vgoodpos
				& & \textrm{ for } p \in \{ \cNW, \cN, \cNE \}
				& & &
				\exists \rpos.p \sqcap \exists \vrole.(\goodpos \sqcap \exists \rpos.p') & \sqsubseteq \vgoodpos
				& & \textrm{ for } p, p' \in \positions \textrm{ s.t.\ } p \leadsto_v p'
			\end{align*}
			\vspace*{-2.8ex}
			\[
			\hgoodpos \sqcap \vgoodpos \sqsubseteq \goodpos
			\qquad \qquad
			\goodpos \sqcap \croot \sqsubseteq \subgoal_1
			\]
			\caption{Additional rules to guarantee consistent positions.}
			\label{figure:extra-rules-positions}
		\end{figure*}
		To achieve the above, we add the rules in Figure~\ref{figure:extra-rules-positions}.
		Intuitively, we ensure that $\subgoal_1$ can only be obtained if a concept $\goodpos$ is seen at the root $a$, which is identified with a dedicated assertion $\croot(a)$.
		To derive $\goodpos$ at a given node $e$, we require all its node successors (there are at most two, due to Claim~\ref{claim:h-v-paths}) to already satisfy $\goodpos$ and for their respective positions to be valid successors of the position of $e$.
		In particular, nodes with position $\cNE$ trivially satisfy this condition as they expect no successors.
		Note that this also guarantees Point~3 in the above claim, as nodes along arbitrary long $\hrole$-$\vrole$-path of nodes starting from $a$ need not to satisfy $\goodpos$.
		
		\begin{toappendix}
			Figure~\ref{figure:extra-rules-tiles} presents the rules ensuring the consistency of the tiling, omitted in the main part of the paper as they are very similar in nature to those presented on Figure~\ref{figure:extra-rules-positions}.
			\begin{figure*}
				\centering
				\begin{align*}
					\exists \rpos.p & \sqsubseteq \hvalid
					& & \textrm{ for each } p \in \{ \cSE, \cE, \cNE \}
					\\
					\exists \rpos.p \sqcap \exists \hastile.{t} \sqcap \exists \hrole.(\valid \sqcap \exists \hastile.{t'}) & \sqsubseteq \hvalid
					& & \textrm{ for each } p \in \positions \setminus \{ \cSE, \cE, \cNE \} \textrm{ and } (t, t') \in \hconstraints
					\\
					\exists \rpos.p & \sqsubseteq \vvalid
					& & \textrm{ for each } p \in \{ \cNW, \cN, \cNE \}
					\\
					\exists \rpos.p \sqcap \exists \hastile.{t} \sqcap \exists \hrole.(\valid \sqcap \exists \hastile.{t'}) & \sqsubseteq \vvalid
					& & \textrm{ for each } p \in \positions \setminus \{ \cNW, \cN, \cNE \} \textrm{ and } (t, t') \in \vconstraints
					\\
					\exists \rpos.\cC & \sqsubseteq \bvalid
					\\
					\exists \rpos.\cNE \sqcap \exists \hastile.{t} & \sqsubseteq \bvalid
					& & \textrm{for each } t \in \tiles \text{ s.t.\ the north and east colors of } t \text{ are } \blank
					\\
					\exists \rpos.\cN \sqcap \exists \hastile.{t} & \sqsubseteq \bvalid
					& & \textrm{for each } t \in \tiles \text{ s.t.\ the north color of } t \text{ is } \blank
					\\
					& \cdots & & \cdots
					\\
					\exists \rpos.\cSW \sqcap \exists \hastile.{t} & \sqsubseteq \bvalid
					& & \textrm{for each } t \in \tiles \text{ s.t.\ the south and west colors of } t \text{ are } \blank
					\\
					\hvalid \sqcap \vvalid \sqcap \bvalid & \sqsubseteq \valid
					\\
					\valid \sqcap \croot & \sqsubseteq \subgoal_2
				\end{align*}
				\caption{Additional rules to guarantee a consistent tiling. Predicate $\bvalid$ verifies the color constraint on the borders.}
				\label{figure:extra-rules-tiles}
			\end{figure*}
		\end{toappendix}
		
		We denote $H$ the set of pairs of tiles $(t, t') \in \tiles \times \tiles$ such that the right color of $t$ is the same as the left color of $t'$, so that $(t, t') \in H$ iff $t$ is a valid immediate left-neighbor of $t'$.
		Similarly, we denote $V$ the set of pairs of tiles $(t, t') \in \tiles \times \tiles$ such that the top color of $t$ is the same as the bottom color of $t'$.
		With rules similar to those on Figure~\ref{figure:extra-rules-positions}, we can ensure consistency of the tiling if the concept $\subgoal_2$ is satisfied, which is summarized in the following claim:
		\begin{claimrep}
			\label{claim:tiles}
			Let $\Imc$ be a minimal model of $\kb$ s.t.\ $a^\Imc \in \subgoal_2^\Imc$.
			Then, for every $d \in \node^\Imc$, there exists a unique $t \in \tiles$, that we denote $tile(d)$, such that $(d, a_t) \in \hastile^\Imc$.
			Furthermore, for every $d, e \in \node^\Imc$, the following properties hold: 
			\begin{enumerate}
				\item if $(d, e) \in \hrole^\Imc$, then $(tile(d), tile(e)) \in \hconstraints$;
				\item if $(d, e) \in \vrole^\Imc$, then $(tile(d), tile(e)) \in \vconstraints$;
				\item if $pos(d) \in \positions \setminus \{ \cC \}$, then $tile(d)$ has color $\blank$ on the corresponding $pos(d)$-border(s).
			\end{enumerate}
		\end{claimrep}
		\begin{proof}
			Uniqueness of the successor by the role $\hastile$ is straightforward: if an element $d \in \node^\Imc$ has several $\hastile$-successor, then dropping any of the edge from the interpretation of $\tile^\Imc$ preserves modelhood, which contradicts minimality of $\Imc$.
			That this successor is among the $a_t$ for $t \in \tiles$, as well as Points~1, 2, 3, then follow from the assumption that $a^\Imc \in \subgoal_2^\Imc$ in the same manner as in the proof of Claim~\ref{claim:positions}.
		\end{proof}

		It now remains to address the main problem that is how to guarantee that, in minimal models satisfying $\goal$, the $\hrole$-$\vrole$-paths of nodes from $a$ collapse into an actual grid.
		The idea is to force the satisfaction of a concept $\cX$ somewhere along one of these $\hrole$-$\vrole$-paths.
		If this instance of $\cX$ is placed on a node $e$ where paths are forming the intended grid, that is $e$ is both the $\rh$-$\rv$- and the $\rv$-$\rh$-successor of an element $d$, then it triggers a \emph{flooding} concept $\cflood$.
		The concept $\cflood$ then propagates along \emph{all} $\hrole$-$\vrole$-paths and makes the concept $\cX$ also satisfied everywhere, following the intuition already highlighted in Example~\ref{example:flooding-technique}.
		If a model does not feature a proper grid, then the flooding can be avoided by placing $\cX$ somewhere the paths are not closing as a grid.
		Thus, a minimal model that does not feature a grid is not flooded.
		On the other hand, if a minimal model features a proper grid, then it is impossible to avoid the flooding and in particular, $\cflood$ holds at $a$.
		We can thus use the conjunction of $\croot$ and $\cflood$ as the final goal.
		
		We now explain how to force the placement of $\cX$ by `guessing' an $\hrole$-$\vrole$-{path} of nodes from the root.
		Being along that path is represented by a concept $\cpx$, and we force the root $a$ to satisfy this concept as soon as the previous subgoals are satisfied with the axiom:
		\[
		\croot \sqcap \subgoal_1 \sqcap \subgoal_2 \sqsubseteq \cpx
		\]
		Now, if a node satisfies the concept $\cpx$, and depending on its own position, it either satisfies $\cX$ or propagates the concept $\cpx$ either horizontally or vertically.
		Three of the corner positions actually have no choice (\emph{e.g.}\ the north-east corner cannot propagate any further thus must satisfy $\cX$):
		\begin{align*}
			\cpx \sqcap \exists \rpos.\cNE & \sqsubseteq \cX
			\\
			\node \sqcap \cpx \sqcap \exists \rpos.\cNW & \sqsubseteq \exists \rh. (\node \sqcap \cpx)
			\\
			\node \sqcap  \cpx \sqcap \exists \rpos.\cSE & \sqsubseteq \exists \rv. (\node \sqcap \cpx)
		\end{align*}
		For the remaining positions, the possible options are represented by dedicated $\rx$-successors as follows:
		\begin{align*}
			\exists \rpos.p \sqcap \cpx & \sqsubseteq \exists \rx . \cx
			& & \textrm{ for } p \in \{ \cC, \cN, \cE \}
			\\
			\exists \rpos.p \sqcap \cpx & \sqsubseteq \exists \rx . \cgoh
			& & \textrm{ for } p \in \{ \cC, \cN, \cS, \cW, \cSW \}
			\\
			\exists \rpos.p \sqcap \cpx & \sqsubseteq \exists \rx . \cgov
			& & \textrm{ for } p \in \{ \cC, \cE, \cS, \cW, \cSW \}.
		\end{align*}
		The choice is then made via an extra $\rx$-successor that may or may not collapse with the possible options: for $p \in \positions \setminus \{ \cNE, \cNW, \cSE \}$, consider the axioms:
		\begin{align*}
			\exists \rpos.p \sqcap \cpx & \sqsubseteq \exists \rx . \chosen
			\\
			\exists \rx.(\chosen \sqcap \cx) & \sqsubseteq \cX
			\\
			\node \sqcap \exists \rx.(\chosen \sqcap \cgoh) & \sqsubseteq \exists \rh . (\node \sqcap \cpx)
			\\
			\node \sqcap \exists \rx.(\chosen \sqcap \cgov) & \sqsubseteq \exists \rv . (\node \sqcap \cpx)
		\end{align*}
		Note that the two latter axioms could justify additional instances of roles $\rh$ and $\rv$.
	However, due to $\node \sqcap \cpx$ being more specific than $\node$, Claim~\ref{claim:h-v-paths} still holds.
		
		It is crucial that the (up to $3$) $\rstyle{s}$-successors above, carrying the different possibilities $\cx$, $\cgoh$, $\cgov$, do not collapse together.
		To prevent this, we use an error-detection mechanism that reports back to the root $\istyle{a}$, via the following axioms:
		\[
		\hspace*{-1.8ex}\begin{array}{c}
			\exists \rx.(\cx \sqcap \cgoh) \sqsubseteq \error \quad \exists \rx.(\cx \sqcap \cgov) \sqsubseteq \error \quad 
			\exists \rx.(\cgoh \sqcap \cgov) \sqsubseteq \error
			\smallskip\\
			\exists \rh.\error \sqsubseteq \error
			\qquad
			\exists \rv.\error \sqsubseteq \error.
		\end{array}
		\]
		If the error concept holds at $a$, we force the model to collapse in a way that cannot satisfy the final goal predicate.
		This is achieved by introducing an auxiliary element $c$ that could act as an horizontal and vertical successor node for $a$, except that $c$ misses the concept name $\node$.
		Consider the following assertions:
		\[
		\hspace*{-1.2ex}\begin{array}{c}
			\rh(a, c), \rv(a, c), \cpx(c), \rpos(c, c), \hastile(c, c), \newtop(c),
			\rspy(c, a).
		\end{array}
		\]
		Now the trick is to promote $c$ to be a node whenever $\error$ holds on $a$, which is achieved by the axiom $\exists \rspy.\error \sqsubseteq \node$.
		Notice that the interpretation $\Imc_0$ obtained by interpreting every predicate as in the ABox, except for $\node^{\Imc_0} := \{ a, c \}$, is a model of $\kb$.
		Therefore, in a minimal model $\Imc$, if the error predicate is to be seen along a $\hrole$-$\vrole$-path of nodes from $a$, then it triggers the concept $\error$ on $a$, thus $c$ is an instance of $\node$ and therefore $\Imc_0 \subseteq \Imc$, thus $\Imc = \Imc_0$ by minimality of $\Imc$.
		We summarize this latter trick in the following claim:
		\begin{claimrep}
			\label{claim:choices-are-disjoint}
			If $\Imc$ is a minimal model of $\kb$, then $(\cx \sqcap \cgoh)^\Imc = (\cx \sqcap \cgov)^\Imc = (\cgoh \sqcap \cgov)^\Imc = \emptyset$.
		\end{claimrep}
		\begin{proof}
			If $\Imc = \Imc_0$, the claim is immediate.
			Otherwise, using Claim~\ref{claim:h-v-paths} and the axioms involving $\error$, it is clear that $\node^\Imc \cap \error^\Imc = \emptyset$.
			In particular, instances of concepts $\cx$, $\cgoh$, $\cgov$ that are $\rx$-successor of a node $d \in \node^\Imc$ are distinct.
			It is easily verified that there are no other instance of $\cx$, $\cgoh$, $\cgov$ (as those three concepts can only be justified by instances of $\node$).
		\end{proof}

		\begin{figure}
			\begin{tikzpicture}[every node/.append style={font=\small, scale=1, inner sep=0pt, outer sep=0pt}, xscale=1.4, yscale=1.4, line cap=round, line join=round]
				
				\node at (0, 0) (a) [] {$a$};
				\node [above left=.1cm of a] {$\cpx$};
				\node [below left=.1cm of a] {$\croot$};
				\node (agov) at ([shift={(70:1)}]a) [label=80:{$\cgov, \chosen$}] {$\circ$};
				\node (agoh) at ([shift={(50:1)}]a) [label=20:{$\cgoh$}] {$\circ$};
				\path
				(a.center) edge [->] node [above, sloped, near end, inner sep = 2pt] {$\rx$} (agoh)
				(a.center) edge [->] node [above, sloped, near end, inner sep = 2pt] {$\rx$} (agov)
				;
				\node at (2, 0) (a10) {$\bullet$};
				\node at (4, 0) (a20) {$\bullet$}; 
				\node at (2, 2) (a11) {$\bullet$};
				\node [above left=.1cm of a11] {$\cpx$};
				\node (a11gov) at ([shift={(70:1)}]a11) [label=80:{$\cgov$}] {$\circ$};
				\node (a11isX) at ([shift={(55:1.2)}]a11) [label=above right:{$\cx$}] {$\circ$};
				\node (a11goh) at ([shift={(40:1)}]a11) [label=20:{$\cgoh, \chosen$}] {$\circ$};
				\path
				(a11.center) edge [->] node [above, sloped, near end, inner sep = 2pt] {$\rx$} (a11goh)
				(a11.center) edge [->] node [above, sloped, near end, inner sep = 2pt] {$\rx$} (a11isX)
				(a11.center) edge [->] node [above, sloped, near end, inner sep = 2pt] {$\rx$} (a11gov)
				;
				
				\node at (0, 2) (a01) {$\bullet$};
				\node [above left=.1cm of a01] {$\cpx$};
				\node (a01gov) at ([shift={(70:1)}]a01) [label=80:{$\cgov$}] {$\circ$};
				\node (a01goh) at ([shift={(50:1)}]a01) [label=20:{$\cgoh, \chosen$}] {$\circ$};
				\path
				(a01.center) edge [->] node [above, sloped, near end, inner sep = 2pt] {$\rx$} (a01goh)
				(a01.center) edge [->] node [above, sloped, near end, inner sep = 2pt] {$\rx$} (a01gov)
				;
				\node at (2, 2) (a21) {$\bullet$};
				\node at (4.2, 1.8) (a21) {$\bullet$};
				\node at (3.8, 2.2) (a21') {$\bullet$};
				\node [above left=.1cm of a21'] {$\cpx, \cX$};
				\node (a21'gov) at ([shift={(70:1)}]a21') [label=80:{$\cgov$}] {$\circ$};
				\node (a21'isX) at ([shift={(55:1.2)}]a21') [label=above right:{$\cx, \chosen$}] {$\circ$};
				\node (a21'goh) at ([shift={(40:1)}]a21') [label=20:{$\cgoh$}] {$\circ$};
				\path
				(a21'.center) edge [->] node [above, sloped, near end, inner sep = 2pt] {$\rx$} (a21'goh)
				(a21'.center) edge [->] node [above, sloped, near end, inner sep = 2pt] {$\rx$} (a21'isX)
				(a21'.center) edge [->] node [above, sloped, near end, inner sep = 2pt] {$\rx$} (a21'gov)
				;
				
				\path[every edge/.append style={very thick, ->}, every node/.append style={near end}]
				(a.center) edge [->] node [left, inner sep = 2pt] {$\vrole$} (a01)
				(a.center) edge [->] node [below, inner sep = 2pt] {$\hrole$} (a10)
				(a01.center) edge [->] node [below, inner sep = 2pt] {$\hrole$} (a11)
				(a10.center) edge [->] node [left, inner sep = 2pt] {$\vrole$} (a11)
				(a10.center) edge [->] node [below, inner sep = 2pt] {$\hrole$} (a20)
				(a20.center) edge [->] node [left, inner sep = 2pt] {$\vrole$} (a21)
				(a11.center) edge [->] node [below, inner sep = 2pt] {$\hrole$} (a21')
				;
				
				\node at (1,-.5) [inner sep=3pt](asw) {$a_\cSW$};
				\node [below=.1cm of asw] {$\cSW$};
				\node at (3,-.5) [inner sep=3pt] (as) {$a_\cS$};
				\node [below=.1cm of as] {$\cS$};
				\node at (-.5, 1)[inner sep=3pt] (aw) {$a_\cW$};
				\node [below=.1cm of aw] {$\cW$};
				\node at (4.6,.5)[inner sep=3pt] (ac) {$a_\cC$};
				\node [below=.1cm of ac] {$\cC$};

				\path[every edge/.append style={dashed, ->}]
				(a.center) edge [out=-90]  (asw)
				(a10.center) edge [out=-90]  (as)
				(a20.center) edge [out=-90, in=30]  (as)
				(a01.center) edge [out=180, in=90] (aw)
				(a11.center) edge [out=-40]  (ac)
				(a21.center) edge [out=20, in=80]  (ac)
				(a21'.center) edge [out=10, in=80] (ac)
				;
				
				\node at (0, 0) (a) [fill=white, inner sep=3pt] {$a$};
				
			\end{tikzpicture}
			\caption{
				A part of the South-West corner of a model.
				Dashed arrows represent the role $\rpos$.
				Instances of the concept $\node$ are $a$ and the $\bullet$-anonymous elements.
				This small portion fails to collapse as a proper $3 \times 2$ grid.
				Instances of $\cpx$ can thus follow the defective $\vrole$-$\hrole$-$\hrole$-path to place concept $\cX$ in a way that avoids the flooding.
			}
			\label{figure:failed-grid}
		\end{figure}
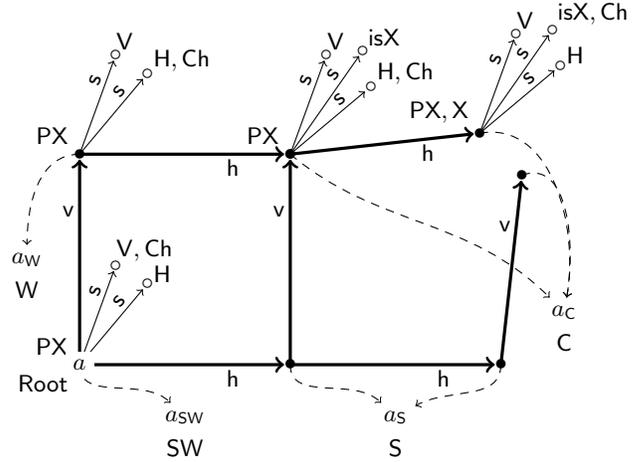
		
		We now trigger the flooding as previously described, with the axiom: $\exists \rh .\exists \rv .\cX \sqcap \exists \rv .\exists \rh .\cX \sqsubseteq \cflood$.
		Figure~\ref{figure:failed-grid} illustrates how $\cX$ can be placed to avoid triggering this latter rule if the model does not close as a grid.
		Now, propagating $\cflood$ back to $a$ is easily performed by the axioms: 
		\begin{align*}
			\exists \rh.\cflood  & \sqsubseteq \cflood &
			\exists \rv.\cflood  & \sqsubseteq \cflood.
		\end{align*}
		This propagates the flooding predicate to nodes along the $\hrole$-$\vrole$-path back to $a$, but not to \emph{all} nodes along \emph{all} $\hrole$-$\vrole$-paths.
		To achieve this latter part, recall that Point~3 of Claim~\ref{claim:positions} guarantees that, wherever we are on a path, there is always a further node with position $\cNE$, \emph{i.e.} a node that is connected to $a_{\cNE}$ by role $\rpos$.
		We can thus ensure that $\cflood$ propagates to every node by adding the following assertion and axiom:
		\[
			\raux(a_{\cNE}, a) \qquad
			\exists \rpos. (\exists \raux. \cflood)  \sqsubseteq \cflood.
		\]
		As announced, we now require $\cflood$ to ``flood'' the model by making all the $\cX$-related concepts and choices satisfied:
		\hspace*{-.2ex}\begin{align*}
			\cflood & \sqsubseteq \cX \sqcap \cpx
			\\
			\exists \rpos.p \sqcap \cflood & \sqsubseteq \exists \rx . (\cx \sqcap \chosen)
			& & \hspace*{-1.5ex}  \textrm{ for } p \in \{ \cC, \cN, \cE \}
			\\
			\exists \rpos.p \sqcap \cflood & \sqsubseteq \exists \rx . (\cgoh \sqcap \chosen)
			& & \hspace*{-1.5ex} \textrm{ for } p \in \{ \cC, \cN, \cS, \cW, \cSW \}
			\\
			\exists \rpos.p \sqcap \cflood & \sqsubseteq \exists \rx . (\cgov \sqcap \chosen)
			& & \hspace*{-1.5ex} \textrm{ for } p \in \{ \cC, \cE, \cS, \cW, \cSW \}
		\end{align*}
		We conclude the construction of the KB $\kb$ by adding the axiom $\cflood \sqcap \croot \sqsubseteq \goal$.
		Before proceeding to prove the reduction, it is useful to state one last property that summarizes some basic properties of minimal models of $\kb$ and highlights the effect of the latter flooding axioms:
		\begin{claimrep}
			\label{claim:flooding}
			Let $\Imc$ be a minimal model of $\kb$.
			Then concept names from $\{ \croot \} \cup \positions \cup \tiles$, as well as role names $\rspy$ and $\raux$ are interpreted as specified in the ABox.
			Furthermore, if $a^\Imc \in \goal^\Imc$, then $\subgoal_1^\Imc = \subgoal_2^\Imc = \goal^\Imc = \{ a^\Imc \}$ and $\node^\Imc = \cflood^\Imc$.
		\end{claimrep}
		\begin{proof}
			It is immediate that $\croot^\Imc = \{ a^\Imc \} \cup \positions \cup \tiles$, as otherwise one could simply drop any other element occurring in the interpretation of these concept names and obtain a model $\Jmc \subset \Imc$ contradicting the minimality of $\Imc$.
			The argument is similar for role names $\rspy$ and $\raux$.
			
			Assume now that $a^\Imc \in \goal^\Imc$.
			The facts that $a^\Imc \in \subgoal_1^\Imc \cap \subgoal_2^\Imc$ is trivial as otherwise one could simply drop $a^\Imc$ from $\goal^\Imc$ set the interpretation of $\cX$, $\cpx$, $\cx$, $\cgoh$, $\cgov$ and $\rx$ to the empty set to obtain a model simpler that $\Imc$.
			That $\subgoal_1^\Imc$ and $\subgoal_2^\Imc$ do not contain other elements than $a^\Imc$ is also immediate from $\croot^\Imc = \{ a^\Imc \}$.
			
			Now, from $a^\Imc \in \subgoal_1^\Imc$, we can apply Claim~\ref{claim:positions}.
			Point~3 in Claim~\ref{claim:positions} joint with $\cNE^\Imc = \{c_\cNE\}$, yields that every $\hrole$-$\vrole$-paths of nodes from $a$ can be extended to reach an element $e$ such that $(e, c_\cNE) \in \rpos^\Imc$ $(\dagger)$.
			From $a^\Imc \in \goal^\Imc$, it follows that $a^\Imc \in \cflood$, as otherwise we could simply remove $a$ from $\goal^\Imc$ while preserving modelhood.
			Joint with $(\dagger)$, this guarantees that $\cflood$ holds on every element along an $\hrole$-$\vrole$-path of nodes from $a$.
			By Claim~\ref{claim:h-v-paths}, this yields $\node^\Imc \subseteq \cflood^\Imc$ as desired.

		\end{proof}

		We can now 
        prove that the reduction is correct.
		\begin{claimrep}
			\label{claim:reduction}
			$(T, b) \in \recttile$ iff there exists a minimal model of $\kb$ that satisfies $\goal$.
		\end{claimrep}
		
		\begin{proofsketch}
			
			The $(\Rightarrow)$ direction is the easier one.
			From the tile assignment $\sigma$ of a finite rectangle, one can define a corresponding model $\Imc$ that satisfies concept $\goal$ and features an actual grid, with in particular $\node^\Imc = \cflood^\Imc$.
			The minimality of $\Imc$, and notably the necessity of $\node^\Imc = \cflood^\Imc$, is then established by arguing that the flooding cannot be avoided due to $\Imc$ encoding a proper grid.
			Thus, wherever $\cX$ is placed, it always triggers the flooding.
			
			The $(\Leftarrow)$ direction is more involved.
			From a minimal model $\Imc$ that satisfies $\goal$, we identify in $\Imc$ a valid tile assignment on a finite rectangle.
			To do so, we say that an element $d \in \node^\Imc$ has coordinate $(i, j)$ if there exists a $\hrole$-$\vrole$-path $d_0, r_1, d_1 \dots, r_n, d_n$ of nodes from $a$ with $d_n = d$ and such that there are exactly $i-1$ occurrences of $\hrole$ and $j-1$ of $\vrole$ in $r_1, \dots, r_n$.
			Note that, by Claim~\ref{claim:h-v-paths}, each $d \in \node^\Imc$ has at least one such coordinate.
			We then check that if two elements in $\Imc$ share a coordinate, then they are equal, which is the key of our construction.  
			This is argued by contradiction: assume there are two $\hrole$-$\vrole$-paths $d_0, r_1, d_1 \dots, r_n, d_n$ and $e_0, s_1, e_1 \dots, s_n, e_n$ of nodes from $a$, that both take the same numbers $i$ of $\hrole$-edges and $j$ of $\vrole$-edges, but $d_n \neq e_n$.
			We prove there is a $1 \leq k \leq n$ such that, placing the one necessary instance of $\cX$ at $d_k$, we obtain a model $\Jmc$ of $\kb$ that avoids flooding, thus contradicting the minimality of $\Imc$.
		\end{proofsketch}

		\begin{proof}
		$(\Rightarrow)$. 
		Assume that $(T, b) \in \recttile$, that is there exists a rectangle of horizontal length $M$ and vertical length $N$ and a tile assignment:
			\[
				\sigma : \{ 1, \dots, M \} \times  \{ 1, \dots, N \} \rightarrow \tiles
			\]
		such that:
		\begin{itemize}
			\item for every $1 \leq m \leq M-1$ and every $1 \leq n \leq N$, we have $(\sigma(m, n), \sigma(m+1, n)) \in H$ and;
			\item for every $1 \leq m \leq M$ and every $1 \leq n \leq N-1$, we have $(\sigma(m, n), \sigma(m, n+1)) \in V$.
		\end{itemize}
		We denote $G := \{ 1, \dots, M \} \times  \{ 1, \dots, N \}$ the `grid'.
		We construct a minimal model $\Imc$ of $\kb$ such that $\Imc \models \goal(a)$.
		The domain of $\Imc$ is: 
		\[
			\Delta^\Imc :=  G
			 \cup \{ a_p \mid p \in \positions \} 
			\cup \{ a_t \mid t \in \tiles \} 
			\cup \{ c, c_{\cx}, c_{\cgoh}, c_\cgov \}
		\]
		$\Imc$ interprets individuals in the trivial way, apart from $a$ that is interpreted as $(1, 1)$ for convenience.
		Concept names from $\positions \cup \{ \tile_t \mid t \in \tiles \} \cup \{ \croot \}$, as well as role names $\rspy$ and $\raux$ are interpreted as specified in the ABox.
		Goal-related concept names are interpreted as:
		\begin{align*}
			\subgoal_1^\Imc = \subgoal_2^\Imc = \goal^\Imc = \{ a^\Imc \} = \{ (1, 1) \}.
		\end{align*}
		Choice-related concepts are interpreted as follows:
		\[
			\cx^\Imc := \{ c_{\cx} \} \quad \cgoh^\Imc := \{ c_{\cgoh} \} \quad \cgov^\Imc := \{ c_{\cgov} \}  \quad \error^\Imc = \emptyset.
		\]
		All remaining concept names are interpreted as $G$.
		The remaining role names are interpreted as follows:
		\begin{align*}
			\hrole^\Imc := ~ &
			\{ ((i, j), (i+1, j)) \mid 1 \leq i \leq M-1, 1 \leq j \leq N \}
			\\
			&
			\cup \{ (a, c) \}
			\\
			\vrole^\Imc := ~ &
			\{ ((i, j), (i, j+1)) \mid 1 \leq i \leq M, 1 \leq j \leq N-1 \}
			\\
			&
			\cup \{ (a, c) \}
			\\
			\rpos^\Imc := ~ &
			\{ ((1, 1), a_\cSW), ((M, 1), a_\cSE) \}
			\\
			&
			\cup
			\{ ((1, N), a_\cNW), ((M, N), a_\cNE)\}
			\\
			&
			\cup 
			\{ ((i, 1), a_\cS) \mid 2 \leq i \leq M-1 \}
			\\
			&
			\cup 
			\{ ((1, j), a_\cW) \mid 2 \leq j \leq N-1 \}
			\\
			&
			\cup 
			\{ ((i, j), a_\cC) \mid 2 \leq i \leq M-1, 2 \leq j \leq N-1 \}
			\\
			&
			\cup 
			\{ ((i, N), a_\cN) \mid 2 \leq i \leq M-1 \}
			\\
			&
			\cup 
			\{ ((M, j), a_\cE) \mid 2 \leq j \leq N-1 \}
			\\
			\hastile^\Imc := ~ &
			\{ (d, a_t) \mid d \in G, \sigma(d) = t \}
			\\
			\rx^\Imc := ~ &
			\{ ((i, j), c_\cx) \mid 2 \leq i \leq M, 2 \leq j \leq N, i + j \neq N + M \}
			\\
			&
			\cup
			\{ ((i, j), c_\cgoh) \mid 1 \leq i \leq M-1, 1 \leq j \leq N \}
			\\
			&
			\cup
			\{ ((i, j), c_\cgov) \mid 1 \leq i \leq M, 1 \leq j \leq N-1 \}
		\end{align*}
		It is now readily verified that $\Imc$ is minimal.
		In particular, notice that the interpretations of $\hrole$ and $\vrole$ enforce a proper grid.
		Therefore, the propagation of $\cpx$ from $(1, 1)$ along a path in that grid will always result in concept $\cX$ being placed at some element $(i+1, j+1) \in G$ for $1 \leq i \leq M-1$ and $1 \leq j \leq N-1$.
		The concept $\cflood$ is then necessary at element $(i, j) \in G$, which further ensures that $\cflood$, $\cX$, $\cpx$ being interpreted as the whole $G$ is necessary.
		
		$(\Leftarrow)$.
		Assume that there exists a minimal model $\Imc$ such that $\goal^\Imc \neq \emptyset$.
		We would like to identify in $\Imc$ a finite grid along with a valid tile assignment.
		To do so, we say that an element $d \in \node^\Imc$ has coordinate $(i, j)$ if there exists a $\hrole$-$\vrole$-path $d_0, r_1, d_1 \dots, r_n, d_n$ of nodes from $a$ with $d_n = d$ and such that there are exactly $i-1$ occurrences of $\hrole$ and $j-1$ of $\vrole$ in $r_1, \dots, r_n$.
		Note that, by Claim~\ref{claim:h-v-paths}, each $d \in \node^\Imc$ has at least one such coordinate.
		
		We now prove that if two elements in $\Imc$ share a coordinate, then there are equal.
		Assume by contradiction that we could find two $\hrole$-$\vrole$-paths $d_0, r_1, d_1 \dots, r_n, d_n$ and $e_0, s_1, e_1 \dots, s_n, e_n$ of nodes from $a$, that both take the same number $i$ of $\hrole$-edges and $j$ of $\vrole$-edges, but $d_n \neq e_n$.
		It is clear that the only element with coordinate $(1, 1)$ is $a^\Imc$, and from Claim~\ref{claim:h-v-paths}, that there exists only one element with coordinate $(2, 1)$, and only one element with coordinate $(1, 2)$.
		Notice that in the remaining cases, we have $n = i+j-2 \geq 2$.
		W.l.o.g., assume that $i$ is minimal for this property, so that each coordinate $(i', j')$, with $i' < i$, refers to a unique node.
		Similarly, assume that $j$ is minimal w.r.t.\ $i$, so that each coordinate $(i, j')$, with $j' < j$, refers to a unique node.
		From these two minimality properties, it is now clear that $d_{n-2} = e_{n-2}$.
		Note that if $r_{n-1} = s_{n_1}$, then $r_{n} = s_n$ and Claim~\ref{claim:h-v-paths}, applied twice, ensures $d_n = e_n$ as desired.
		Therefore, $r_{n-1} = s_n$ and $r_n = s_{n-1}$ (guaranteed from $d_n$ and $e_n$ having the same coordinate $(i, j)$!).
		We now describe how to obtain an interpretation $\Jmc \subset \Imc$ that will contradict the minimality of $\Imc$.
		From Claim~\ref{claim:flooding}, we have $\node^\Imc = \cflood^\Imc$.
		In particular, each $d \in \node^\Imc$ has \emph{all} the $\rx$-successors corresponding to its position $pos(d)$, and all carrying the concept $\chosen$.
		If $d$ has a $\rx$ successor to an instance of $\cx \sqcap \chosen$, we denote this element $d\cdot\cx$; we denote similarly $d\cdot\cgoh$ and $d\cdot\cgov$ the other possible $\rx$-successors (whose existence, again, depends on $pos(d)$).
		To obtain $\Jmc$, start from $\Imc$ but replace the interpretation of concept names $\cX$, $\cpx$, $\cgoh$, $\cgov$, $\cflood$, $\goal$ and of role name $\rx$ by:
		\begin{align*}
			\cX^\Jmc := ~ & 
			\{ d_n \}
			\\
			\cx^\Jmc := ~ &
			\{d_n \cdot \cx \mid d_n \cdot \cx \text{ is defined} \}
			\\
			\cpx^\Jmc := ~ & 
			\{ d_0, \cdots, d_n \}
			\\
			\cgoh^\Jmc := ~ &
			\{ d_k \cdot \cgoh \mid 0 \leq k < n, r_{k+1} = \hrole, d_k \cdot \cgoh \text{ is defined} \}
			\\
			\cgov^\Jmc := ~ &
			\{ d_k \cdot \cgov \mid 0 \leq k < n, r_{k+1} = \vrole, d_k \cdot \cgov \text{ is defined} \}
			\\
			\cflood^\Jmc := & \emptyset
			\\
			\goal^\Jmc := & \emptyset
			\\
			\rx^\Jmc := ~ & 
			\{ (d_n, d_n \cdot \cx) \mid d_n \cdot \cx \text{ is defined} \}
			\\
			& \cup 
			\{ (d_k, d_k \cdot \cgoh) \mid 0 \leq k < n, r_{k+1} = \hrole, d_k \cdot \cgoh \text{ is defined} \}
			\\
			& \cup
			\{ (d_k, d_k \cdot \cgov) \mid 0 \leq k < n, r_{k+1} = \vrole, d_k \cdot \cgov \text{ is defined} \}.
		\end{align*}
		It can now be verified that the resulting $\Jmc$ is indeed a model of $\kb$.
		In particular, $d_n \neq e_n$ guarantees that $\cflood$ being empty does not contradicts the axiom $\exists \rh .\exists \rv .\cX \sqcap \exists \rv .\exists \rh .\cX \sqsubseteq \cflood$: the concept $\cflood$ does not need to hold at element $d_{n-2}$ ($= e_{n-2}$).
		Furthermore, Claim~\ref{claim:choices-are-disjoint} ensures that interpretations of concepts $\cX$ and $\cpx$ are complete.
		It is clear that $\Jmc \subset \Imc$, which provides the desired contradiction.
		
		We thus obtained that every coordinate is realized at most once in $\Imc$.
		Using Claim~\ref{claim:tiles} (recall $a^\Imc \in \subgoal_2^\Imc$), this guarantees that the mapping
		$\sigma(i, j) := tile(d)$ for $d \in \node^\Imc$ is well-defined.
		Points~1, 2 and 3 of Claim~\ref{claim:tiles} also ensure that the placement of Wang tiles is valid.
		It remains to verify that $\node^\Imc$ realizes the coordinates of a \emph{finite rectangle}.
		This is obtained from Claim~\ref{claim:positions}: Points~3 and 4 guarantee that the number of realized coordinates is finite.
		Set $M$ as the maximal horizontal coordinate and $N$ as the maximal vertical one.
		Points~1 and 2 joined with each coordinate being realized at most once guarantees that every coordinate $(i, j)$ for $1 \leq i \leq M$, $1 \leq j \leq N$ is realized at least (thus exactly) once.
	\end{proof}

	\section{Acyclicity to the Rescue} 
    \label{decidability}
 
    {In the light of Theorem~\ref{theorem:el-is-undecidable-without-top}, is there any hope for minimal model reasoning in DLs? 
    In our search for positive results, we turn for inspiration to \emph{pointwise circumscription}, where decidability results have been obtained for rather expressive fragments of $\ALCIO$ known to be undecidable in classical circumscription, notably including cases where roles are minimized. Pointwise circumscription coincides with standard circumscription for large classes of acyclic TBoxes, suggesting that terminological cycles may play a key role in the infeasibility of minimization. This is also supported by the heavy use of cyclic inclusions in our undecidability proof. 
    We thus turn our attention to acyclicity notions, and find that the excursion is fruitful: minimal model 
    reasoning becomes much more manageable for \emph{acyclic TBoxes.}} 

\subsection{Strong Acyclicity}         
   Following \cite{DistefanoS24}, for an $\mathcal{ALCIO}$ concept $C$ in negation normal form ($\mathrm{NNF}$), we 
   define the sets $Occ^+(C)$ and $Occ^-(C)$  of 
   predicates that occur in $C$ \emph{positively} and    \emph{negatively}, respectively: 
  \begin{align*}
    	 \mathit{Occ}^{+}(A) = {} & \mathit{Occ}^{-}(\neg A)= \{A\}  & \text{ with } A\in \cnames  \\ 
    	 \mathit{Occ}^{+}(\neg A)= {} & \mathit{Occ}^{-}(A)= \emptyset & \text{ with } A\in \cnames \\  
    	 \mathit{Occ}^{+}(\{a\})= {} & \{\{a\}\} \quad {\mathit{Occ^-}(\{a\})= \emptyset}  & \text{ with } a \in \inames \\ 
   	  \mathit{Occ}^{\pm}(C\,{\circ}\,D) = {} & \mathit{Occ}^{\pm}(C)\,{\cup}\,\mathit{Occ}^{\pm}(D) &\circ\in\{\sqcup, \sqcap\} \\ 
    \mathit{Occ}^{+}(\exists r.C)= {} & \{r\}\cup \mathit{Occ}^{+}(C) \\ 
    \mathit{Occ}^{-}(\exists r.C)= {} &  \mathit{Occ}^{-}(C) \\ 
      \mathit{Occ}^{+}(\forall r.C)= {} &  \mathit{Occ}^{+}(C) \\ 
    	  \mathit{Occ}^{-}(\forall r.C)= {} &  \{r\}\cup \mathit{Occ}^{-}(C)   
  \end{align*}
    {We let $\mathit{Occ}^{\pm}(C \sqsubseteq D) = \mathit{Occ}^{\pm}( \mathit{NNF}(\lnot C \sqcup D))$ for a concept inclusion $C \ISA D$ and $\pm \in \{+,-\}$.}
    The \emph{dependency graph} $\mathit{DG}(\Tmc)$ of an $\mathcal{ALCIO}$ TBox $\Tmc$ has as nodes all the
    concept names and role 
    names and all concepts of the form $\{a\}$ {or $\top$} 
    that appear in $\Tmc$, and there is an edge
    from  $P_1$ to $P_2$ if 
    $P_1\in \mathit{Occ}^{-}(\alpha)$ and $P_2\in \mathit{Occ}^{+}(\alpha)$ for some $\alpha \in \T$.
    We say that $\Tmc$ is \emph{strongly acyclic} if $\mathit{DG}(\Tmc)$ is acyclic {and no node is reachable from $\top$.}

 {This notion can be seen as a generalization of the one usually considered for terminologies \cite{DBLP:conf/dlog/BaaderN03} which is satisfied, for example, by the well-known medical terminology \textsc{SnomedCT}. Under the classical semantics, terminologies often rely on concept-equivalences and are therefore cyclic TBoxes. If we take the definitions in a terminology $A \doteq C$ as inclusions $C \ISA A$, we may regain acyclicity (and may enjoy lower complexity).
 Example \ref{snomedct} illustrates that, under the minimal model semantics, keeping only one of the two directions might be innocuous since the other inclusion is enforced by predicate minimization.
} 
  \begin{example}\label{snomedct}
A patient has been diagnosed with pneumoconiosis, which can be caused by  
various organic dust types; some types are very serious. 
\emph{Baritosis} is caused by barium dust, as stated in the following 
Snomed CT definition:\footnote{Based on \url{https://pmc.ncbi.nlm.nih.gov/articles/PMC4422531/}} 
\[ \mathsf{Pneumoconiosis} \sqcap \exists caus\_agent. \mathsf{Barium\_Dust} \sqsubseteq \mathsf{Baritosis} \] 
        Under the classical semantics, there are models where the patient is diagnosed with   baritosis due to a causative agent, barium dust, that is not justified in a real finding but simply made true to cause the baritosis diagnosis. In the minimal models, in contrast, baritosis can only be diagnosed on the basis of justified clinical findings.
    \end{example}

Standard circumscription has been studied for 
acyclic terminologies \cite{BonattiLW09}, but unlike in our setting,  acyclicity does not help reducing the complexity. Reasoning about general $\mathcal{ALCIO}$ KBs can be reduced to acyclic ones, but the reduction uses \emph{varying} predicates. 
   
Despite their restrictions, acyclic TBoxes are quite expressive, and strongly acyclic $\mathcal{EL}$ 
can force minimal models satisfying a concept of interest to have an exponential size, as illustrated by Example~\ref{binarytree}.
We later show that this exponential size is also sufficient, in the sense that every concept satisfiable in a minimal model also is in one with exponential size, and this even for the relaxed notion of weak acyclicity.

\begin{example}\label{binarytree}
To generate a binary tree with $2^n$ leaves, consider the assertion $\cn{L}_0(a)$ and axioms $\cn{L}_i \sqsubseteq \cn{\exists r}_i. \cn{L}_{i+1} \sqcap \exists \cn{l}_i. \cn{L}_{i+1}$ for all $0 \leq i < n$.
We want to enforce all leaves to be different objects in minimal models that satisfy a concept of interest.
For this, we add axioms that attempt to produce a second tree \emph{starting from its leaves}.
The latter are identified by concept $\cn{L}'_0$, which is made available at leaves of the first tree via $\cn{L}_{n}\sqsubseteq \cn{L}_{0}'$.
Further levels of the second tree, towards its root, are generated with the two assertions $\cn{Left}(o)$ and $\cn{Right}(o')$ and the following axioms for $0 \leq  j < n$:
\begin{align*}
	\cn{L}_j' & \sqsubseteq \exists \cn{pick}. \top
	&
	\cn{L}_j'\sqcap \exists \cn{pick}. \cn{Left} & \sqsubseteq \exists \cn{l}'_j. \cn{L}_{j+1,l}' 
	\\
	\cn{L}_{j+1,l}' \sqcap \cn{L}_{j+1,r}' & \sqsubseteq \cn{L}_{j+1}'
	&
	\cn{L}_j'\sqcap \exists \cn{pick}. \cn{Right} & \sqsubseteq \exists \cn{r}'_j. \cn{L}_{j+1,r}'
\end{align*}
A minimal model of this strongly acyclic $\EL$ KB can only satisfy the concept $\cn{L}_n'$ (representing the root of the second tree) if its interpretation of the first tree produces at least $2^n$ instances of $\cn{L}_{n}$, i.e. of $\cn{L}_0'$.
\end{example}

Before moving to the complexity results, we provide another illustrative example:
 in strongly acyclic $\EL$ 
we can assign truth values to objects and compare these assignments. This trick will be useful in the hardness proofs below. 

\begin{example}\label{silly}
Consider the ABox with assertions $\cn{N_1}(a)$, $\cn{N_2}(b)$, $\cn{T}(v_1)$, $\cn{F}(v_2)$, and the strongly acyclic $\EL$ TBox with the following axioms, for $i \in \{ 1, 2\}$ and $C \in \{ \cn{T}, \cn{F} \}$:
\[
\begin{array}{cc}
	\cn{N}_i \sqsubseteq \exists \cn{val}. \cn{TV} & \cn{N_1} \sqsubseteq \exists \cn{read}.\top 
\\[3pt]
	\exists \cn{val}. C \sqsubseteq C' 
	& 
	C'\sqcap \exists \cn{read}. (\cn{N_2}\sqcap C') \sqsubseteq \cn{Goal}

\end{array}
\]
The nodes $a$ and $b$ each pick a truth value via role $\cn{val}$, which is copied to the node as $\cn{T'}$ (resp.\ $\cn{F'}$) \emph{only if} it is $\cn{T}$ (resp.\ $\cn{F}$). 
The node $a$ then reads the value at some object via role $\cn{read}$, and the $\cn{Goal}$ concept is satisfied exactly when $a$ reads the value at $b$ (the only instance of $\cn{N}_2$ in a minimal model is $b$) and they both picked the same truth value. 
\end{example}

\paragraph{Strongly Acyclic $\EL$ and Pointwise Circumscription.}  
  To show the decidability of strongly acyclic $\mathcal{ELIO}_\bot$ we rely on results on \emph{pointwise circumscription} \cite{DistefanoOS23}, a local approximation of standard circumscription where minimization is allowed only locally, at one domain element. 
  In our definition of minimal models, 
  predicates 
  are minimized \emph{globally},  across the entire interpretation. 
In pointwise circumscription,
we refine  the relation $\subseteq$ by only considering \emph{pointwise comparable} interpretations.

\begin{definition}[Pointwise Comparison]\label{pwc:comparison}
	Given interpretations $\Imc$ and $\Jmc$, we write $\Imc \sim^\bullet \Jmc $ if there exists $e \in \Delta^{\Imc}$ such that:
	\begin{enumerate}[(i), align= left]
		\item for all $A\in \cnames$, $A^{\Imc} \cap \Delta = A^{\Jmc} \cap \Delta$, and 
		\item for all $r\in \rnames$, $r^{\Imc} \cap (\Delta\times \Delta) = r^{\Jmc} \cap (\Delta\times \Delta)$,
	\end{enumerate}
	where $\Delta = \Delta^{\Imc} \setminus \{e\}$.
\end{definition}

        \begin{definition}[Pointwise Minimal Model]
            Given a KB $\Kmc$, a model $\Imc$ is pointwise minimal if there exists no $\Jmc\subsetneq \Imc$ such that $\Jmc\models \Kmc$ and $\Jmc\sim^\bullet \Imc$.
        \end{definition}
        To emphasize the difference, we sometimes refer to minimal models in the sense of this paper as \emph{globally} minimal models.
 In general, the set of pointwise minimal models does not coincide with the set of globally minimal models.
 \begin{example}
 	\label{example:global-versus-pointwise}
 Let $\kb := (\{\exists \cn{r. A\sqsubseteq A}\}, \{\rstyle{r}(a,b), \rstyle{r}(b,a)\})$. The interpretation $\Imc$ such that $\Delta^{\Imc}=\{d,e\}$, $a^\I=d$ and $b^\I=e$, $\cn{A}^{\Imc}=\{d,e\}$ and $\cn{r}^{\Imc}=\{(d,e),(e,d)\}$ is a pointwise minimal model of our $\EL$ KB $\kb$ that is not globally minimal. 
 \end{example}
  
{Pointwise circumscription has  better computational properties than standard circumscription. 
The modal depth of a KB $\K$, denoted with $\mathrm{md}(\K)$, is defined as the maximal number of nested quantifiers occurring in $\K$.
When all roles are minimized, concept satisfiability  is complete for $\NExp$
in the fragment $\mathcal{ALCIO}^{d\leq 1}$ of $\mathcal{ALCIO}$ with modal depth one \cite{DistefanoOS23}. 
In contrast, the problem is undecidable for full $\ALCIO$.}

Standard normalization techniques \cite{bookdls} do not preserve minimal models of $\ALCIO$ KBs in general, but, they do for  
	$\mathcal{ELIO}_\bot$ KBs.
   \begin{propositionrep}
        	\label{normalization:mms}
            Any KB $\Kmc$ in $\mathcal{ELIO}_\bot$ can be transformed in polynomial time into an KB $\Kmc'$ in $\mathcal{ELIO}_\bot$ with $\mathrm{md}(\K')\leq 1$ such that $\Kmc'$ is a conservative extension of $\Kmc$ under the minimal model semantics and preserving strong acyclicity and weak acyclicity (the latter is defined in Section~\ref{section:weak-acyclicity}).
        \end{propositionrep}

         \begin{proof}
         The notion of modal depth can be lifted to complex concepts too in the trivial way. We denote the modal depth of a concept $C$ with $\mathrm{md}(C)$. 
         
        	Assume a KB $\Kmc=(\Amc, \Tmc)$ in $\mathcal{ELIO}_\bot$. W.l.o.g. we can assume that each $C\sqsubseteq D\in \Kmc$ is such that (1) $D=\exists r. B$ where $B$ is a $\mathcal{ELIO}$ concept or (2) $D=\bot$. Given an inclusion $C\sqsubseteq D$, we perform the following steps:
        	\begin{itemize}
        		\item if $\mathrm{md}(C)>1$, for each $\exists r. B\in conc(C)$ where $\mathrm{md}(B)=1$, we introduce a fresh concept name $C_{\exists r. B}$, replace each occurrence of $\exists r. B$ in $C$ with $C_{\exists r. B}$ and introduce the inclusion $\exists r. B\sqsubseteq C_{\exists r. B}$;
        		\item if $\mathrm{md}(D)>1$, i.e. if $\mathrm{md}(B)\geq 1$, we introduce a fresh concept name $D_{B}$, replace $D$ with $\exists r. D_{B}$  and introduce the inclusion $D_{B}\sqsubseteq B$.
        	\end{itemize}
        	We obtain a new inclusion $C'\sqsubseteq D'$ such that $\mathrm{md}(C')=\mathrm{md}(C)-1$ and $\mathrm{md}(D')=\mathrm{md}(\exists r. D_B)= 1$, together with a set of inclusions of the form $\exists r. B\sqsubseteq C_{\exists r. B}$ and $D_B\sqsubseteq {\exists r. B}$ where $\mathrm{md}(\exists r. B)=\mathrm{md}(D)-1$. Let $\Kmc'$ be the resulting KB. It is known that $\Kmc'$ is a conservative extension of $\Kmc$ under the classical semantics. 
        	
        	We show that the same holds under the minimal model semantics. Assume an interpretation $\Imc$ such that $\Imc\models \Kmc$ and $\Imc$ is minimal, we can extend it to a minimal model $\Imc'$ of $\Kmc'$ by simply extending the interpretation function of $\Imc$ to the fresh concept names introduced by the steps above. For each $C_{\exists r. B}$, we state that $C_{\exists r. B}^{\Imc'}=(\exists r. B)^{\Imc}$. Observe that it immediately follows that ${C'}^{\Imc'}=C^{\I}$. 
        	
To define a minimal extension for $D_{B}$, we need to be more careful.      	
Intuitively, when we assign elements to $D_{B}$, we look at domain elements that are satisfying $B$. Differently from $C_{\exists r. B}$, we cannot simply define $D_{B}^{\I'}$ as the set of all elements satisfying $B$. Indeed, each $e\in {C}^{\I}$ can provide a justification for only one occurrence of $D_B^{\I'}$. 

We place $D_B$ occurrences in $\I'$ by selecting carefully a subset of $B^{\I}$. Given a domain element $d\in \Delta^\Imc$ such that $d\in C^{\Imc}$, we consider the set $\nu_\Imc(d,\exists r. B)=\{d'\in \Delta^\Imc\vert (d,d')\in r^{\Imc}\text{ and }d'\in B^{\Imc}\}$. Let $U_\Imc(C, \exists r. B)\subseteq \bigcup_{d\in C^{\Imc}} \nu_\Imc(d,\exists r. B)$ such that (1) $U_\Imc(C,\exists r. B)\cap \nu_\Imc(d,\exists r. B) \not =\emptyset$, for each $d\in C^{\Imc}$ and (2) $U_{\Imc}(C,\exists r. B)$ is $\subseteq$-minimal. With (1) we ensure that each $d\in C^{\I}$ has at least one $\exists r. B$ witness in $U_{\Imc}(C,\exists r. B)$. With (2), we take the least amount of such $\exists r. B$ witnesses for each $d\in C^{\I}$ that still satisfy the requirement of (1).

We state that $D_{\exists r. B}^{\Imc'}=U_\Imc(C,\exists r. B$). We argue that $\Imc'$ is minimal. Observe that from the minimality of $\Imc$, for each $p\in sig(\K)$, $p^{\I'}$ is minimal. From the latter, also $C_{\exists r. B}^{\I'}$ is minimal. We are only left to argue that $D_B^{\I'}$ is minimal. Assume that there exists $\Jmc$ such that $\Jmc\models \K'$ and $\J\subset \I'$. From the above observations, then it must be that there exists $e\in D_B^{\I'}$ such that $e\not \in D_B^{\J}$. Since $e\in D_B^{\I'}$, then $e\in U_I(C,\exists r. B)$. Hence, there exists $d\in C^{\I}$ such that $(d,e)\in r^\I$ and $e\in B^{\I}$. Consider the set $U'=U_{\Imc}(C, \exists r. B)\setminus \{e\}$.
Trivially, we have that $U'\subseteq U$. It also easy to observe that $U'$ satisfies (1). Indeed, since $\J$ agrees with $\I'$ over the signature of $\K$ and $C_{\exists r. B}$, then ${C'}^{\J}={C'}^{\O}$. Since $\J$ is a model of $\K'$, for each $d\in {C'}^{\J}={C'}^{\I'}=C^{\I}$, we have that $d\in \exists r. D_B^{\J}$. Hence, for each $d\in {C'}^{\J}$ there exists $d'\not = e$ such that $(d,d')\in r^{\Jmc}$ and $d'\in D_B^{\J}$. From $\Jmc\subseteq \Imc'$, we have that $d'\in D_B^{\Imc}$, hence $d'\in U_\I(C,\exists r. B)$.
Since $d'\not =e$, for each $d\in C^{\I}$, we have that $U'\cap \nu_\I(d,\exists r.B)\not =\emptyset$. A contradiction. Therefore, $\Imc'$ is a minimal model of $\K'$.

Conversely, assume that $\Imc'$ is a minimal model of $\Kmc'$ and let $\Imc$ be the interpretation agreeing with $\Imc$ over the signature of $\Kmc$. We show that $\Imc$ is minimal. 
        	Assume that there exists $\Jmc\subset \Imc$ such that $\Jmc\models \Kmc$. Similarly to the above, we can construct a model $\Jmc'$ of $\Kmc'$ such that $\Jmc'\subset \Imc'$, deriving a contraction.
        	
        	To do so, it is sufficient to observe that if $d\in C^{\Jmc}$ then there exists $d'\in \Delta^{\Imc}$ such that $(d,d')\in r^{\Jmc}\subseteq r^{\Imc}$ and $d'\in B^{\Jmc}\subseteq B^{\Imc}$. We can pick $U_\Jmc(C, \exists r. B)$ such that $U_\Jmc(C, \exists r. B)\subseteq U_\Imc(C,\exists r. B)$. We can define an interpretation $\Jmc'$ such that $\Delta^{\Jmc'}=\Delta^{\Jmc}$, $p^{\Jmc'}=p^{\Jmc}$, for all $p\in sig(\K)$, $C_{\exists r. B}^{\Jmc'}=(\exists r. B)^{\Jmc}$ and $D_B^{\Jmc'}=U_{\Jmc}(C, \exists r. b)$. It is easy to observe that $\Jmc'\models \K'$. Since $\Jmc\subset \Imc$ and $U_{\Jmc}(C,\exists r.B)\subseteq U_{\Imc}(C,\exists r.B)$, it immediately follows that such that $\Jmc'\subset \Imc'$. A contradiction. 
        	
        	By iterating the steps above, for each inclusion $C\sqsubseteq D\in \Kmc$, we obtain a conservative extension $\Kmc'$ of $\Kmc$ under the minimal model semantics. Furthermore, $d(\Kmc')=d(\Kmc)-1$. We can further apply the normalization to $\Kmc'$ until we obtain a KB $\Kmc''$ such that $d(\Kmc'')=1$. Since being a conservative extension is a transitive relation, we have that $\Kmc''$ is a conservative extension of $\Kmc$. Thus, we obtain the thesis. To show that we preserve weak and strong acyclicity, it is sufficient to observe that the normalization rules do not introduce cyclic dependencies among the fresh concept names.
        \end{proof}

For strongly acyclic KBs in $\mathcal{ALCIO}^{d\leq 1}$, minimal models and pointwise minimal models coincide \cite{DistefanoS24}. We thus inherit the following result.
        \begin{theorem}\label{alciodone}
           $\mms$ in strongly acyclic $\mathcal{ALCIO}^{d\leq 1}$ is in $\NExp$. 
        \end{theorem}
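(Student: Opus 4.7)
The plan is to piggy-back on two existing results: the $\NExp$ upper bound for concept satisfiability under pointwise circumscription in $\ALCIO^{d\leq 1}$ with all predicates minimized~\cite{DistefanoOS23}, and the coincidence of globally minimal models and pointwise minimal models on strongly acyclic $\ALCIO^{d\leq 1}$ KBs~\cite{DistefanoS24}.

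Given a strongly acyclic $\ALCIO^{d\leq 1}$ KB $\K$ and a query concept $C$, I would first apply the coincidence result to reduce the question ``is $C$ satisfiable in some (globally) minimal model of $\K$?'' to ``is $C$ satisfiable in some pointwise minimal model of $\K$?''. If it is convenient to phrase the goal as a standard satisfiability task on a single concept name, I would introduce a fresh concept name $A_C$, add the inclusion $A_C \sqsubseteq C$ to the TBox, and ask for pointwise-minimal satisfiability of $A_C$; this extension stays in $\ALCIO^{d\leq 1}$ and preserves strong acyclicity, so the coincidence continues to apply. Then invoking the $\NExp$ decision procedure of~\cite{DistefanoOS23}, which minimizes both concept and role names simultaneously, directly yields the claimed upper bound.

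The main obstacle here is bookkeeping rather than algorithmic. One has to verify that the notion of strong acyclicity used in this paper coincides with (or is at least as restrictive as) the one under which \cite{DistefanoS24} establishes the equivalence of globally and pointwise minimal models, and that the procedure of \cite{DistefanoOS23} is formulated for the fully-minimized regime with no fixed or varying predicates, which is exactly the semantics of \mms. Once these compatibility checks pass, the theorem follows by one-line composition of the two imported statements, with no further combinatorial content needed.
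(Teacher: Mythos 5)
Your proposal matches the paper's argument exactly: the paper also derives this theorem by observing that globally minimal and pointwise minimal models coincide for strongly acyclic $\mathcal{ALCIO}^{d\le 1}$ (citing~\cite{DistefanoS24}) and then inheriting the $\NExp$ upper bound for pointwise circumscription with all predicates minimized from~\cite{DistefanoOS23}. The extra bookkeeping step you mention (wrapping $C$ in a fresh concept name) is harmless and not needed in the paper's presentation, but the substance of the two proofs is the same.
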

        
The complexity result of Theorem \ref{alciodone} applies to strongly acyclic $\mathcal{ELIO}_\bot$ without restrictions on the modal depth. 
        \begin{theorem}
            $\mms$ in strongly acyclic $\ELIO_\bot$ is in $\NExp$.
        \end{theorem}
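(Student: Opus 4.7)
The plan is to reduce the statement to Theorem~\ref{alciodone} via two preprocessing steps: first, turn the goal concept into a concept name; then, apply the modal-depth normalization of Proposition~\ref{normalization:mms}.

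For the first step, let $A_C$ be a fresh concept name not occurring in $\Kmc$ and consider $\Kmc_1 := (\Tmc \cup \{ C \sqsubseteq A_C \}, \Amc)$. In any model of $\Kmc_1$ one has $C^{\Imc} \subseteq A_C^{\Imc}$, and since $A_C$ does not occur elsewhere in $\Kmc_1$, in a minimal model this inclusion is necessarily tight, i.e.\ $A_C^{\Imc} = C^{\Imc}$. Hence $C$ is satisfiable in a minimal model of $\Kmc$ iff the concept name $A_C$ is satisfiable in a minimal model of $\Kmc_1$. The new axiom contributes only edges ending at the fresh sink $A_C$ in the dependency graph, so no cycle involving $A_C$ is created; a routine preliminary rewriting of any $\top$ occurring in $C$ by a fresh predicate made universal via auxiliary ABox and TBox axioms ensures the condition forbidding nodes reachable from $\top$ is also preserved. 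Thus $\Kmc_1$ remains strongly acyclic.

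For the second step, I invoke Proposition~\ref{normalization:mms} on $\Kmc_1$ to obtain an $\ELIO_\bot$ KB $\Kmc_2$ of modal depth at most~$1$ that is still strongly acyclic and a conservative extension of $\Kmc_1$ under the minimal model semantics. Since $A_C$ belongs to the signature of $\Kmc_1$, $A_C$-satisfiability in minimal models transfers faithfully from $\Kmc_1$ to $\Kmc_2$. Now $\Kmc_2$ is a strongly acyclic instance in $\ELIO_\bot \subseteq \ALCIO^{d\leq 1}$, with goal concept $A_C$ of modal depth~$0$, so Theorem~\ref{alciodone} directly yields the desired $\NExp$ upper bound. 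The main obstacle I foresee is the careful preservation of strong acyclicity at each step, especially the subtle interaction with $\top$ in step one; the rest is essentially an assembly of previously established results.
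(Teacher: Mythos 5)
Your proof is correct and takes essentially the same route as the paper: normalize to modal depth at most one via Proposition~\ref{normalization:mms}, then invoke the $\NExp$ bound for strongly acyclic modal-depth-one inputs (Theorem~\ref{alciodone}), which the paper calls on directly through the coincidence of pointwise and global minimal models under strong acyclicity. Your preliminary step of internalizing the goal concept as a fresh sink $A_C$ is a welcome addition that the paper's terse proof leaves implicit, though note that your device of ``making a fresh predicate universal'' to eliminate $\top$ from $C$ cannot literally be $\top \sqsubseteq P$ under strong acyclicity---in fact, under the natural convention $\mathit{Occ}^-(\bot)=\emptyset$, occurrences of $\top$ inside $C$ contribute no outgoing edges from $\top$ in $\mathit{DG}(\Tmc\cup\{C\sqsubseteq A_C\})$, so the replacement is unnecessary.
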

        \begin{proof}
        Let $\mathcal{ELIO}_\bot^{d\leq 1}$ be the fragment of $\mathcal{ELIO}_\bot$ where concept expressions have \emph{modal depth} at most one. Pointwise minimal satisfiability in $\mathcal{ELIO}_\bot^{d\leq 1}$ is in $\NExp$. In strongly acyclic $\ELIO_\bot$, the set of pointwise minimal models coincides with the set of minimal models. The claim then follows from Proposition \ref{normalization:mms}.
        \end{proof}
    	  \begin{theoremrep}\label{nexp:el:strong:ac}
            $\mms$ in strongly acyclic $\mathcal{EL}$ is $\NExp$-hard. 
        \end{theoremrep} 
        \begin{proofsketch}
        We provide a reduction from the \emph{torus tiling problem} \cite{Tobies99} to $\mms$. Using the construction in Example \ref{binarytree}, we construct a $\K$ and a goal concept $\cn{Goal}$ such that the satisfaction of $\cn{Goal}$ in a minimal model $\I$ ensures that: 
        \begin{enumerate*}[(1)]
        \item in $\I$ we can embed a tree of depth $2n$, where each leaf encodes (in binary) a pair of coordinates $(x,y)$, with $0\leq x,y\leq 2^n-1$;
        \item a torus is embedded in the leaves of the tree in a way such that the horizontal and vertical successors respect the tiling conditions. 
        \end{enumerate*}
        To achieve the latter desiderata, we construct a subgoal concept $\cn{G}_{2n}$ ensuring that each leaf encoding the pair $(x,y)$ has as horizontal successor the pair $(x+1,y)$ and as vertical successor the pair $(x,y+1)$. To check that $\cn{G}_{2n}$ is satisfied at all the leaves, we propagate a concept $\cn{LeafGrid}$ back to the root, using the following axioms:
        \begin{align*}
        \exists \cn{l}_{i}. \cn{G}_{i+1}\sqcap \exists \cn{r}_{i}. \cn{G}_{i+1} &\sqsubseteq \cn{G}_{i}\quad \text{ with }0\leq i < 2n\\
        \cn{G}_0\sqsubseteq \cn{LeafGrid}
        \end{align*}
        where the roles $\cn{l}_i$ and $\cn{r}_i$ are as in Example \ref{binarytree}.
         In a minimal model, $\cn{LeafGrid}$ is satisfied at the root of the tree if $\cn{G}_{2n}$ is satisfied at all the leaves.
        \end{proofsketch}
      \begin{toappendix} 
        
        \begin{proof}
        We provide a reduction from the exponential torus $T_{2^n\times 2^n}$ tiling problem \cite{Tobies99}, where $n\in \mathbb{N}$. Let $\oplus_n$ denote the sum modulo $n$. An instance of the exponential torus tiling problem is a triple $P=(T,H,V)$, where $T$ is a set of tiles and $H$ and $V$ are the horizontal and vertical conditions, respectively. An \emph{initial condition} is a tile $t_0\in T$. 
        A map $\tau\colon \{0, \cdots, 2^n-1\}\times
         \{0,\cdots, 2^n-1\}\rightarrow T$ is a solution for $P$ given an initial condition $c$ if the following conditions are satisfied for all $i,j<2^n-1$:
         \begin{itemize}
         \item $(\tau(i,j), \tau(i\oplus_{2^n} 1, j))\in H$, and
         \item $(\tau(i,j), \tau(i,j\oplus_{2^n} 1))\in V$, and
         \item $\tau(0,0)=t_0$.
         \end{itemize}

            We construct a KB $\Kmc$ in strongly-acyclic $\mathcal{EL}$ and a concept $C_0$ and reduce checking the existence of a solution to $P$ to satisfiability of $C_0$ w.r.t $\Kmc$ under the minimal model semantics. 

            Let us construct $\Kmc$. Relevantly, we are going to do it without using $\top$ on the left-hand side of inequalities. We still use $\top$ in scope of quantifiers occurring on the right-hand side. The latter occurrences can be replaced by a `dummy' concept name without affecting the proof. 

            As in Example \ref{binarytree}. First, we need to ensure that we can produce a tree of depth $2n$ with $2^{2n}$. Each leaf of such tree will correspond to a pair $(x,y)\in \{0,\cdots, 2^n-1\}\times \{0, \cdots, 2^n-1\}$. 
       We craft a \emph{goal concept} that is satisfied in a minimal model if the tree to have $2^{2n}$ leaves. 
            \begin{align}
                &\cn{Root}(a)\\
                &\cn{Root}\sqsubseteq \cn{L_0}\label{treegen1}\\
                &\cn{L_i}\sqsubseteq \cn{\exists r}_i. \cn{L}_{i+1}\sqcap \cn{\exists l_i. L}_{i+1}\text{ for all $i< 2n$}\label{treegen2}
            \end{align}
        Clearly, since we are in $\mathcal{EL}$, the axioms above are not enough to ensure that we have exactly exponentially many leaves (at level $2n)$. 
We generate another tree from the $\cn{L}_{2n}$-labelled leaves.
We require that at each level of this second tree, each nodes `decides' whether it produces a right successor or a left successor. We simulate disjunction with qualified existentials on the left and predicate minimization.
        \begin{align}
            \cn{Left}(o) &\qquad \cn{Right}(o')\\
            \cn{L}_{2n}&\sqsubseteq \cn{L}_{0}'\label{revtreeone}\\
            \cn{L}_j'&\sqsubseteq \exists \cn{pick}\\
            \cn{L}_j'\sqcap \exists \cn{pick}. \cn{Left}&\sqsubseteq \exists \cn{l}'_j. \cn{L}_{j+1,l}'\\
            \cn{L}_j'\sqcap \exists \cn{pick}. \cn{Right}&\sqsubseteq \exists \cn{r}'_j. \cn{L}_{j+1,r}'\\
            \cn{L}_{j+1,l}' \sqcap \cn{L}_{j+1,r}'&\sqsubseteq \cn{L}_{j+1}' \text{ for all }0\leq j\leq 2n-1\\
            \cn{L}_{2n,l}'\sqcap \cn{L}_{2n,l}'&\sqsubseteq \cn{Tree}\label{tree}
        \end{align}
Using a minimality argument, we will argue that $\cn{Tree}$ is satisfied if only if it is true at the root of a tree with exponentially many leaves (all labeled with $\cn{L}_{2n}$). 

        We now assign to each leaf (domain elements satisfying $\cn{L}_{2n}$) a binary vector encoding its $(x,y)$ coordinates in the square $2^{n}\times 2^{n}$. 	
        \begin{align}
            \cn{One}(b)&\qquad \cn{Zero}(c)\\
            \cn{L}_{2n} &\sqsubseteq \exists \cn{p}_i^x \sqcap \exists \cn{p}_i^y \quad \text{for all $i\leq n-1$}\label{cpickone}\\
            \exists \cn{p}_i^x. \cn{One}\sqsubseteq \cn{X}_i \qquad&\qquad
            \cn{\exists p}_i^x. \cn{Zero}\sqsubseteq \cn{\bar{X}}_i\\
            \cn{\exists p}_i^y. \cn{One}\sqsubseteq \cn{Y}_i
            \qquad&\qquad
            \cn{\exists p}_i^y. \cn{Zero}\sqsubseteq \cn{\bar{Y}}_i\\
            \cn{X}_i \sqsubseteq \cn{Pick}_{i,x}\qquad &\qquad
            \cn{\bar{X}}_{i}\sqsubseteq \cn{Pick}_{i,x}\quad \\
            \cn{Y}_i \sqsubseteq \cn{Pick}_{i,y}\qquad &\qquad
            \cn{\bar{Y}}_{i}\sqsubseteq \cn{Pick}_{i,y}\quad \\        
            \bigsqcap_{i\leq n} \cn{Pick}_{i,x}\sqcap \bigsqcap_{i\leq n} \cn{Pick}_{j,y} &\sqsubseteq \cn{P}\label{cpickfinale}
            \end{align}
The satisfaction of the concept $\cn{P}$ at each leaf of the tree ensures that each leaf picked a set of coordinates. We will require that $\cn{P}$ is satisfied at each of leaf together with another set of concept ensuring a family of properties. 
To to do, we will use the tree structure to ensure that a concept $\cn{C}$ is satisfied at the root of the tree if and only if \emph{all} the leaves satisfy certain concepts, including $\cn{P}$.

        We now embed a grid using the $2^{2n}$ leaves available. 
        First we connect all of them via two roles $\cn{v}$ and $\cn{h}$, 
        standing for vertical and horizontal successor.
        \begin{align}\label{hvsucc}
            \cn{L}_{2n}\sqsubseteq \exists \cn{h}. \cn{H}\sqcap \exists \cn{v}. \cn{V}\\
            \cn{L}_{2n} \sqcap \cn{H} \sqcap \cn{V} \sqsubseteq \cn{Good}
        \end{align}
        Observe that all the leaves can only send exactly one $V$ and one $H$. Thus  if all the leaves satisfy $\cn{Good}$, then they have exactly one $\cn{v}$-predecessor and exactly one $\cn{h}$-predecessor.
        

        We now ensure that each pair $(x,y)$ is connected to $(x+1,y)$ via $\cn{h}^-$ and to $(x,y+1)$ via $\cn{v}^-$. 
        To do so, we import a copy of the coordinates of one leave to its $h$ and $v$ predecessors and compare to the existing one. 
        Firs of all, given any pair $(x,y)$, moving vertically does not affect the $x$-coordinate and moving horizontally does not affect the $y$-coordinate. For all $i$ such that $0\leq i<n$:
        \begin{align}
            &\cn{\bar{Y}}_i\sqsubseteq \forall \cn{h}^-. \cn{\bar{Y}}_i^h\qquad \cn{Y}_i\sqsubseteq \forall \cn{h}^-. \cn{Y}_i^h\label{counting1}\\
            &\cn{\bar{X}}_i\sqsubseteq \forall \cn{v}^-. \cn{\bar{X}}_i^v\qquad \cn{X}_i\sqsubseteq \forall \cn{v}^-.\cn{X}_i^v
            \end{align}
            
       For each pair $(x,y)$, we copy in the $h$ predecessor the bit encoding of $x+1$. For all $i$ such that $0\leq i<n$ we consider the following axioms:
            \begin{align}\label{counting2}
                &\cn{X}_0\sqcap \dots \sqcap \cn{X}_{i-1}\sqcap \cn{X}_i\sqsubseteq \forall \cn{h}^-. \cn{\bar{X}}_i^h\\
                &\cn{X}_0\sqcap \dots \sqcap \cn{X}_{i-1}\sqcap \cn{\bar{X}}_i \sqsubseteq \forall \cn{h^-. X}_i^h\\
            &(\cn{\bar{X}}_0\sqcup\cdots\sqcup \cn{\bar{X}}_{i-1})\sqcap \cn{X}_i\sqsubseteq \forall \cn{h^-}. \cn{X}_i^h\\
            &(\cn{\bar{X}}_0\sqcup\cdots\sqcup \cn{\bar{X}}_{i-1})\sqcap \cn{\bar{X}}_i \sqsubseteq \forall \cn{h^-. \bar{X}}_i^h
            \end{align}
            Similarly, for each $(x,y)$ we copy to the $v$ predecessor the bin encoding of $y$. For all $i$ such that $0\leq i<n$, we consider the following axioms
            \begin{align}
            &\cn{Y}_0\sqcap \dots \sqcap \cn{Y}_{i-1}\sqcap \cn{Y}_i\sqsubseteq \forall \cn{v^-. \bar{Y}}_i^v\\
            &\cn{Y}_0\sqcap \dots \sqcap \cn{Y}_{i-1}\sqcap \cn{\bar{Y}}_i \sqsubseteq \forall \cn{v^-. Y}_i^v\\
            &(\cn{\bar{Y}}_0\sqcup\cdots\sqcup \cn{\bar{Y}}_{i-1})\sqcap \cn{Y}_i\sqsubseteq \forall \cn{v^-. Y}_i^v\\
            &(\cn{\bar{Y}}_0\sqcup\cdots\sqcup \cn{\bar{Y}}_{i-1})\sqcap \cn{\bar{Y}}_i \sqsubseteq \forall \cn{v^-. \bar{Y}}_i^v\label{counting4}
            \end{align}
           When we reach $(x,y)$, with $x=2^n-1$ (resp $y=2^n-1$) we require that the $h$ predecessor is the pair $(0,y)$ (resp $(x,0)$).
            \begin{align}
                \cn{X}_1\sqcap \cdots \sqcap \cn{X}_{n}\sqsubseteq \forall \cn{h^-. \bar{X}}_{i}^h\qquad 0\leq i<n\\
                \cn{Y}_1\sqcap \cdots \sqcap \cn{Y}_{n}\sqsubseteq \forall \cn{v^-. \bar{Y}}_{i}^v\qquad 0\leq i<n\label{countingfinal}
            \end{align}
            
        Observe that the axioms above use a richer syntax however, they can be rewritten as equivalent $\mathcal{EL}$ axioms. First, we simply the left-hand side of the axioms by \begin{enumerate*}[(1)]
        \item introducing a fresh concept $A$ for all $\bigsqcup_{i\in I} B$ (where $I$ is an index set) in the left hand side of each axioms
        \item replacing $\bigsqcup_{i\in I} B$ with $A$ and adding an axiom $\bigsqcup_{i\in I} B \sqsubseteq A$.
        \end{enumerate*}
        Then, we rewrite  \begin{enumerate*} [(1)]   
            \item each $\bigsqcup_{i\in I} B \sqsubseteq A_\sqcup$ as the equivalent set of axioms $\{B_i\sqsubseteq A\vert\,i\in I\}$, and
            \item  each $B\sqsubseteq \forall r^-. A$ as the equivalent axiom $\exists r. B\sqsubseteq A$. 
        \end{enumerate*}
        We now check the coordinates to ensure that we are actually connecting leaves in a consistent way (w.r.t coordinates) . To do so, we compare the received coordinates (via $h$ and $v$).
            
        \begin{align}
            \cn{X_i\sqcap {X_i}}^h \sqsubseteq \cn{Ok}_{i,x}^h \qquad&\qquad  \cn{X_i\sqcap {X_i}}^v \sqsubseteq \cn{Ok}_{i,x}^v\quad 0\leq i<n\label{bycomponentx1}\\
            \cn{\bar{X}}_i\sqcap \cn{\bar{X}}_i^h \sqsubseteq \cn{Ok}_{i,x}^h\qquad&\qquad  \cn{\bar{X}}_i\sqcap \cn{\bar{X}}_i^v \sqsubseteq \cn{Ok}_{i,x}^v\label{bycomponentx2}\\
            \bigsqcap_{i\leq n}\cn{Ok}_{i,x}^h\sqsubseteq \cn{Ok}_x^h\qquad&\qquad  \bigsqcap_{i\leq n}\cn{Ok}_{i,x}^v\sqsubseteq \cn{Ok}_x^v\label{xfine}\\
            \cn{Y}_i\sqcap {\cn{Y}_i}^h \sqsubseteq \cn{Ok}_{i,y}^h \qquad &\qquad  \cn{Y}_i\sqcap {\cn{Y}_i}^v \sqsubseteq \cn{Ok}_{i,y}^v\label{bycomponenty1}\\
            \bar{\cn{Y}_i}\sqcap \bar{\cn{Y}_i}^h \sqsubseteq \cn{Ok}_{i,y}^h \qquad&\qquad  \bar{\cn{Y}_i}\sqcap \bar{\cn{Y}_i}^v \sqsubseteq \cn{Ok}_{i,y}^v\label{bycomponenty2}\\
            \bigsqcap_{i\leq n}\cn{Ok}_{i,y}^h\sqsubseteq \cn{Ok}_y^h\qquad&\qquad  \bigsqcap_{i\leq n}\cn{Ok}_{i,y}^v\sqsubseteq \cn{Ok}_y^v\label{yvfine}\\
            \cn{Ok}_x^h\sqcap\cn{Ok}_y^h\sqsubseteq \cn{Ok}_h\qquad&\qquad  \cn{Ok}_x^v\sqcap \cn{Ok}_y^v\sqsubseteq \cn{Ok}_v\label{hvfine}\\
             \cn{Ok}_h\sqcap &\cn{Ok}_v\sqsubseteq \cn{Ok}\label{comparisonfinal}
            \end{align}
  If $\cn{Ok}$ is true at each leaf, we have that the coordinates are consistent with the role $h$ and $v$.
           
We now assign a tile color to each leaf of the tree and ensure that vertical and horizontal conditions are satisfied. 

            \begin{align}
            \cn{A}_t'(t)&\qquad \text{for all }t\in T\label{tiling0}\\
            \cn{A}_t'  &\sqsubseteq \cn{TileColor}\qquad \text{for all }t\in T\label{tiling1}\\
            \cn{L}_{2n}\sqcap (\bigsqcup_{i=0}^{n-1} \cn{X}_i \sqcup \cn{Y}_i)&\sqsubseteq \exists \cn{tile.} \label{tiling2}\\
            \exists \cn{tile. A}_t' &\sqsubseteq \cn{A}_t\label{tiling3}\\
            \exists \cn{tile.} \cn{TileColor} &\sqsubseteq \cn{Tiled}\label{tiling4}\\
            \bigsqcap_{i=0}^{n-1} (\cn{\bar{X}}_i\sqcap \cn{\bar{Y}}_i) &\sqsubseteq A_{t_0}\sqcap \cn{Tiled}\label{initialcond}
        \end{align} 
If $\cn{Tiled}$ is true at all leaves, each of them picked a tile color. We now ensure that choice respects the horizontal and vertical conditions. 

        For all $t\in T$ we introduce two concepts $\cn{HNeigh}_t$ and $\cn{VNeigh}_t$.
        \begin{align}
            \cn{A}_{t'}&\sqsubseteq \cn{HNeigh}_{t'}\text{ for all $t'$, such that $(t,t')\in H$}\label{tiling5}\\
            \cn{HNeigh}_{t} \sqcap \exists \cn{h. A}_t &\sqsubseteq \cn{Hok}\label{tiling6}\\
            \cn{VNeigh}_{t} \sqcap \exists \cn{v. A}_t &\sqsubseteq \cn{Vok}\label{tiling7}\\
            \cn{Hok\sqcap Vok} &\sqsubseteq \cn{HVok}\label{tiling8}
        \end{align}
        If $\cn{HVok}$ is true at each leaf, then the tiling is consistent. 
       We now have a collection of concepts that ensure certain properties to be satisfied if the concepts are true at all leaves. We collect them into one big conjunction. 
       \begin{align}
       \cn{P}\sqcap \cn{Good} \sqcap\cn{Ok}\sqcap \cn{Tiled} \sqcap \cn{HVok}\sqsubseteq \cn{G}\label{subgoals}
       \end{align}
       We propagate at each layer $i$ of the tree a concept $\cn{G}_{i}$ as follows. 
       \begin{align}
       \cn{G}&\sqsubseteq \cn{G}_{2n}\label{goaltoroot1}\\
       \cn{\exists l}_j. \cn{G}_{j+1} \sqcap \cn{\exists r}_j. \cn{G}_{j+1}&\sqsubseteq \cn{G}_{j+1} \quad 0\leq j <2n\label{goaltoroot2}\\
       \cn{G_0}&\sqsubseteq \cn{LeafGrid}\label{goaltoroot3}
       \end{align}
        
        Let $\Kmc$ be the constructed KB. It is straightforward to see that $\Kmc$ is in strongly-acyclic $\mathcal{EL}$.
        We show the following.
        \begin{lemma}\label{claim:nexp}
            $P$ has a solution if and only if $\cn{Root} \sqcap \cn{Tree} \sqcap  \cn{LeafGrid}$ is satisfiable in a minimal model of $\Kmc$.
        \end{lemma}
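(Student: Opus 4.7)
The plan is to prove both directions of the equivalence by tracking how the subgoal concepts $\cn{P}, \cn{Good}, \cn{Ok}, \cn{Tiled}, \cn{HVok}$ propagate through the intended tree-grid structure, with the central ingredient being a minimality argument that forces the $2^{2n}$ putative leaves to be pairwise distinct.

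For the $(\Rightarrow)$ direction, starting from a solution $\tau : \{0,\ldots,2^n-1\}^2 \to T$, I will construct an explicit model $\Imc$ whose core domain consists of the named individuals together with a complete binary tree of depth $2n$ under $a$ whose $2^{2n}$ leaves are indexed by coordinate pairs $(x,y)$. On the leaves I interpret $\cn{h}$ and $\cn{v}$ according to the torus with wrap-around, assign tile colors via $\tau$, and populate the bit concepts $\cn{X}_i, \bar{\cn{X}}_i, \cn{Y}_i, \bar{\cn{Y}}_i$ according to the binary expansions of $x$ and $y$. The reverse tree demanded by axioms (\ref{revtreeone})--(\ref{tree}) is laid on top of the same leaves, merging pairs of siblings without introducing new elements, so that $\cn{Tree}$ holds at $a$. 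I then verify leaf-by-leaf that $\cn{P}, \cn{Good}, \cn{Ok}, \cn{Tiled}, \cn{HVok}$, and hence $\cn{G}_{2n}$, all hold, and the propagation (\ref{goaltoroot1})--(\ref{goaltoroot3}) delivers $\cn{LeafGrid}$ at $a$. Minimality of $\Imc$ is checked by observing that each atom is needed to justify another atom through some inclusion or assertion of $\Kmc$.

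For the $(\Leftarrow)$ direction, suppose $\Imc$ is a minimal model with $a^\Imc \in (\cn{Root} \sqcap \cn{Tree} \sqcap \cn{LeafGrid})^\Imc$. First, following Example~\ref{binarytree}, I argue that $\cn{L}_{2n}^\Imc$ contains $2^{2n}$ distinct elements: the reverse-tree axioms (\ref{revtreeone})--(\ref{tree}) together with $\cn{Tree}$ at $a$ force a full branching structure via the $\exists \cn{pick}.\cn{Left}$ and $\exists \cn{pick}.\cn{Right}$ choices, and a collapse of two leaves would let some $\exists \cn{pick}$-witness or $\cn{l}'_j$/$\cn{r}'_j$-successor be dropped while preserving modelhood, contradicting minimality. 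Second, I perform a downward induction using $\cn{LeafGrid}$ at $a$: axioms (\ref{goaltoroot1})--(\ref{goaltoroot2}) combined with minimality propagate $\cn{G}_j$ to every node at level $j$, so $\cn{G}$ holds at every leaf, and by (\ref{subgoals}) every leaf satisfies all five subgoals. Third, I extract the tiling: $\cn{P}$ forces a well-defined binary address on each leaf; $\cn{Good}$ and $\cn{Ok}$ together with (\ref{counting1})--(\ref{comparisonfinal}) guarantee that $\cn{h}$- and $\cn{v}$-successors realize the intended torus increments, making the address map a bijection from leaves to $\{0,\ldots,2^n-1\}^2$; $\cn{Tiled}$ and (\ref{initialcond}) yield a total tile assignment with $t_0$ placed at $(0,0)$; and $\cn{HVok}$ ensures compliance with $H$ and $V$.

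The main obstacle is the distinctness of leaves in the first step of $(\Leftarrow)$, since without it the coordinate map would fail to be injective and the extracted tiling would be ill-defined. The strategy is precisely the reverse-tree trick of Example~\ref{binarytree}: any collapse of leaves would render some reverse-tree witness redundant, violating minimality of $\Imc$. Once distinctness is secured, the remaining verifications amount to routine bookkeeping on the inclusions (\ref{counting1})--(\ref{countingfinal}) and (\ref{bycomponentx1})--(\ref{comparisonfinal}) under the minimality hypothesis, with surjectivity of the address map following by pigeonhole on the $2^{2n}$ distinct leaves mapping into $\{0,\ldots,2^n-1\}^2$.
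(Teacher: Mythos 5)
Your proposal follows essentially the same approach as the paper's own proof: the forward direction builds the same intended model (tree of depth $2n$ with grid-structured leaves, reverse tree overlaid on the same nodes), and the backward direction proceeds via exactly the same three stages — the reverse-tree/pick argument to secure $2^{2n}$ pairwise distinct leaves, downward propagation of $\cn{G}$ from $a$ using $\cn{LeafGrid}$ and the level-wise axioms, and extraction of the tiling from the subgoal concepts $\cn{P}, \cn{Good}, \cn{Ok}, \cn{Tiled}, \cn{HVok}$. The sketch is correct at this level of detail; the only looseness is the throwaway description of the forward-direction minimality check ("each atom is needed to justify another atom"), where the paper instead verifies directly that dropping any tuple breaks modelhood, but this does not reflect a genuine divergence in method.
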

        \begin{proof}
        \emph{(Only if).} Assume that $P$ has a solution $\tau$. We use $\tau$ to define a minimal model $\I$ satisfying the concept $\cn{Root} \sqcap \cn{Tree} \sqcap  \cn{LeafGrid}$.
        We define the domain of $\Imc$ as the set:
        \begin{align*}
        \Delta^{\Imc}=&\{(x,y) \vert 0\leq x,y\leq 2^n-1\} \cup \{r, d_l, d_r, b_0, b_1\}
        \cup
         \bigcup_{1\leq l < 2n}\{n_{l}^{k_l}\vert 0\leq k_l \leq 2^l-1\} 	\cup \{c_t\vert\,t\in T\} 
        \end{align*}
        Intuitively, we use the elements $n_{k}^{k_l}$ as nodes of the tree rooted in $r$ with leaves $(x,y)$.
        
        Observe that we can map $\mathbb{N}\times \mathbb{N}$ into $\mathbb{N}$ bijectively with the map $f((x,y))=\frac{(x+y)(x+y+1)}{2}+y$. We shall use this map in the following. 
        The interpretation function is defined as follows.
        \begin{align*}
        a^{\Imc}=r\qquad o^{\Imc}=d_l\qquad
        {o'}^{\Imc}=d_r\qquad b^{\Imc}=b_1\qquad c^{\Imc}=b_0\qquad t^{\Imc}=c_t\qquad\text{ for all $t\in T$}
        \end{align*}
        \begin{align*}
        &\cn{Root}^{\Imc}=\cn{LeafGrid}^{\Imc}=\cn{Tree}^{\Imc}=\{r\}\\
        &\cn{Right}^{\Imc}=\{d_r\}\qquad \cn{Left}^{\Imc}=\{d_l\}\qquad \cn{One}^{\Imc}=\{b_1\}\qquad \cn{Zero}^{\Imc}=\{b_0\}\\
        &\cn{l}_0^{\Imc}=\{(r, n_1^{1})\}\qquad \cn{r}_0^{\Imc}=\{(r,n_1^2)\}\\
        &\cn{l}_i^{\Imc}=\bigcup_{k=0}^{2^{i}-1}\{(n_i^k, n_{i+1}^{k'})\vert k'=2k\} \text{ for all }0\leq i<2n-1\\
        &\cn{r}_i^{\Imc}=\bigcup_{k=0}^{2^{i}-1}\{(n_i^k, n_{i+1}^{k'})\vert k'=2k+1\}\text{ for all }0\leq i< 2n-1\\
        &{\cn{r}_{2n-1}^{\Imc}=\bigcup_{k=0}^{2^{2n-1}-1}\{(n_{2n-1}^k, (x,y))\vert f((x,y))=2k+1\}}\\
        &\cn{l}_{2n-1}^{\Imc}=\bigcup_{k=0}^{2^{2n-1}-1}\{(n_{2n-1}^k, (x,y))\vert\, 
        f((x,y))=2k\}\\
        &\cn{L}_i^{\Imc}=\{n_i^k\vert 0\leq k\leq 2^{i}-1\}\text{ for all }0\leq i< 2n\\
        &\cn{L}_{2n}^{\Imc}= \{(x,y)\vert 0\leq x,y,\leq 2^{n}-1\}\\  
        &{\cn{L}_j'}^{\Imc}=\cn{L}_{2n-j}^{\Imc}\text{ for all }0\leq j\leq 2n\\
        &{\cn{L}_{j,r}'}^{\Imc}={\cn{L}_{j,l}'}^{\Imc}\text{ for all }1\leq j\leq 2n\\
        &\cn{pick}^\Imc=\{((x,y),d_l)\vert\,f((x,y))\text{ is even}\}
       \cup \{((x,y),d_r)\vert \,f((x,y))\text{ is odd}\}
        \cup\{(n_j^k,d_l)\vert\,k\text{ is even}\}
        \cup \{(n_j^k,d_r)\vert\,k\text{ is odd}\}\\
        &{\cn{r}'_j}^{\Imc}= (r_{2n-(j+1)}^-)^{\Imc}\text{ for all }0\leq j\leq 2n\\
        &{\cn{l}'_j}^{\Imc}= (l_{2n-(j+1)}^-)^{\Imc}\text{ for all }0\leq j\leq 2n
        \end{align*}
        
        \begin{minipage}{0.5\textwidth}
        \begin{align*}
        &\cn{X_i}^{\Imc}=\{(x,y)\vert bit_i(x)=1\}\qquad 0\leq i<n\\
        &\cn{\bar{X}_i}^{\Imc}=\{(x,y)\vert bit_i(x)=0\} \qquad 0\leq i<n\\
        &\cn{Y_i}^{\Imc}=\{(x,y)\vert bit_i(y)=1\}\qquad 0\leq i<n\\
        &\cn{\bar{Y}_i}^{\Imc}=\{(x,y)\vert bit_i(y)=0\}\qquad 0\leq i<n\\
        &\cn{Pick_{i,x}}^{\Imc}=\{(x,y)\vert 0\leq x,y,\leq 2^{n}\}\qquad 0\leq i<n\\
        &\cn{P}^{\Imc}= \{(x,y)\vert 0\leq x,y,\leq 2^{n}\}\\
        &\cn{h}^{\Imc}=\{((x\oplus_{2^n}1,y), (x,y))\vert 0\leq x, y\leq 2^n\}\\
        &\cn{v}^\Imc=\{((x,y\oplus_{2^n}1),(x,y))\vert  0\leq x, y \leq 2^n\}\\
        &\cn{V}^{\Imc}=\cn{H}^{\Imc}= \{(x,y)\vert 0\leq x,y,\leq 2^{n}\} \\
        &(\cn{X}_i^h)^{\Imc}=(\cn{X}_i^v)^{\Imc}=\cn{X}_i^{\Imc}\qquad 0\leq i<n\\
        &\cn{Good}^{\Imc}=\{(x,y)\vert 0\leq x,y<2^n\}\\
        &{\cn{A}'_t}^{\Imc}=\{c_t\}\qquad \text{for all}\;t\in T\\
		&\cn{TileColor}^{\Imc}=\bigcup_{t\in T} {\cn{A}_t'}^{\Imc}\\
		&{\cn{A}_t}^{\Imc}=\{(x,y)\vert \tau((x,y))=t\}\qquad \text{for all}\;t\in T\\
		&\cn{tile}^{\Imc}=\{((x,y), c_t)\vert \tau((x,y))=t\}\\
		&\cn{Tiled}^{\Imc}=\{(x,y)\vert 0\leq x,y< 2^n-1\}\\
		&\cn{HNeight}_{t}^{\Imc}=\bigcup_{(t,t')\in H} 	\cn{A}_{t'}^{\Imc}
        \end{align*}
        \end{minipage}
        \begin{minipage}{0.5\textwidth}
        \begin{align*}
        &(\cn{pick}_i^x)^{\Imc}=\{((x,y), b_0)\vert\, bit_i(x)=0\}\qquad 0\leq i<n\\
        &(\cn{pick}_i^x)^{\Imc}=\{((x,y), b_1)\vert\, bit_i(x)=0\} \qquad 0\leq i<n\\
        &(\cn{pick}_i^y)^{\Imc}=\{((x,y), b_0)\vert\, bit_i(y)=0\}\qquad 0\leq i<n\\
        &(\cn{pick}_i^y)^{\Imc}=\{((x,y), b_1)\vert\, bit_i(y)=0\}\qquad 0\leq i<n\\
        &\cn{Pick_{i,y}}^{\Imc}=\{(x,y)\vert 0\leq x,y,\leq 2^{n}\}\qquad 0\leq i<n\\
        &(\cn{Y}_i^h)^{\Imc}=(\cn{Y}_i^v)^{\Imc}=\cn{Y}_i^{\Imc}\qquad 0\leq i<n\\
        &(\cn{\bar{X}}_i^h)^{\Imc}=(\cn{\bar{X}}_i^v)^{\Imc}=\cn{\bar{X}}_i^{\Imc}\qquad 0\leq i<n\\
        &(\cn{\bar{Y}}_i^h)^{\Imc}=(\cn{\bar{Y}}_i^v)^{\Imc}=\cn{\bar{Y}}_i^{\Imc}\qquad 0\leq i<n\\
        &(\cn{Ok}_{i,x}^h)^{\Imc}=(\cn{Ok}_{i,x}^v)^{\Imc}= (\cn{Ok}_{x}^h)^{\Imc}= \{(x,y)\vert 0\leq x,y\leq 2^n-1\}\\
        &(\cn{Ok}_{i,y}^h)^{\Imc}=(\cn{Ok}_{i,y}^v)^{\Imc}= (\cn{Ok}_{y}^h)^{\Imc}= \{(x,y)\vert 0\leq x,y\leq 2^n-1\}\\
&\cn{Ok}_h^\Imc=\cn{Ok}_v^{\Imc}=\cn{Ok}^{\Imc}=      \{(x,y)\vert 0\leq x,y\leq 2^n-1\}\\
&\cn{VNeight}_{t}^{\Imc}=\bigcup_{(t,t')\in V} \cn{A}_{t'}^{\Imc}\\
&\cn{Hok}^{\Imc}=\cn{Vok}^{\Imc}=\cn{HVok}^{\Imc}=\{(x,y)\vert\,0\leq x,y<2^n-1\}\\
&\cn{G}^{\Imc}=\cn{G}_{2n}^{\Imc}=\{(x,y)\vert 0\leq x,y<2^n-1\}\\
&\cn{G}_{j}^{\Imc}= \cn{L}_j^\Imc\qquad 0\leq j<2n\\
&\cn{G_0}^{\Imc}=\cn{LeafGrid}^{\Imc}=\cn{Root}^{\Imc}
        \end{align*}
        \end{minipage}

The constructed interpretation is a model of $\Kmc$, the latter can be checked axiom by axiom.
To argue minimality, it is sufficient to observe the following.
\begin{itemize}
\item Each bit encoding of the pairs $(x,y)$ is uniquely determined by $x$ and $y$, furthermore each $\cn{pick}_{i,x}$ and $\cn{pick}_{i,y}$ is functional. 
Hence the concepts $\cn{X}_i$, $\cn{\bar{X}}_i$, $\cn{Y}_i$ and $\cn{\bar{Y}}_i$, with $0\leq i \leq n-1$, cannot be further minimized. For the same reason , $\cn{X}_i^h$, $\cn{\bar{X}}_i^h$, $\cn{Y}_i^h$, $\cn{\bar{Y}}_i^h$, $\cn{X}_i^v$, $\cn{\bar{X}}_i^v$, $\cn{Y}_i^v$ and $\cn{\bar{Y}}_i^v$ have minimal extension too. Observe that each $(x,y)$ has a unique $h$ successor that is the element $(x',y')$ such that $x'\oplus_{2^n} 1=x$ and $y'=y$. Thus, the concepts $\cn{Ok}_{i,\star}^v, \cn{Ok}_{i,\star}^h, \cn{Ok}_\star^v$, $\cn{Ok}_\star^v$, $\cn{Ok}_v$, $\cn{Ok}_h$ and $\cn{Ok}$, with $\star=x,y$, are minimal too. 
Since all pairs $(x,y)$ have a complete bit encoding of their coordinates, $\cn{Pick}_{i,x}$, $\cn{Pick}_{i,y}$ (for all $0\leq i\leq n-1$ and $P$ have minimal extension in $\Imc$.
\item Each domain element in $\{r\}\cup \bigcup_{0\leq l\leq 2n-1}\{n_l^k\vert 0\leq k\leq 2^l\}$ has exactly one left successor and one right successor (via the roles $r_i$ and $l_i$). Thus, the extension of such roles is minimal. Since $r_i$ and $l_i$ are inverse functional, the roles $r'_j$ and $l'_j$ are minimal too.  
\item From the previous observation, all the concepts $L_j, L_j'$ with $0\leq j\leq 2n$ are minimal as well. 
\item The extensions of the roles $h$ and $v$ is also minimal, since each node has exactly one horizontal and one vertical successor. Therefore, $\cn{H}$ and $\cn{V}$ are minimal as well. 
\item Using a similar argument, it is easy to see that all the predicates related to the tile assignment have a minimal extension too. 
\end{itemize}

\emph{(If)} Conversely assume that $\Imc$ is a minimal model of $\Kmc$ such that $(\cn{Root}\sqcap \cn{LeafGrid})^\Imc\not =\emptyset$. First observe that $\cn{Root}^{\Imc}\supseteq (\cn{Root}\sqcap \cn{Tree}\sqcap \cn{LeafGrid})^\Imc$. Since $\cn{Root}$ only occurs in the ABox of $\Kmc$, we have that $\cn{Root}^{\Imc}=\{a^\Imc\}$. Thus $(\cn{Root}\sqcap \cn{Tree} \sqcap \cn{LeafGrid})^\Imc=\{a^{\Imc}\}$. 

Since $\Imc$ is minimal and $a^{\Imc}\in \cn{Tree}^{\Imc}$, from \eqref{tree} $a^{\Imc}\in (\cn{L}_{2n, l}' \sqcap \cn{L}_{2n, r}')^\Imc$. Indeed, if it is not the case, we can construct a smaller model $\J$, by minimizing the concept $\cn{Tree}$ at $a^{\Imc}$.

Observe that in the setting of strong acyclicity, we often use the previous argument to deduce that, for all inclusions $C\sqsubseteq D$, if $d\in D^{\Imc}$ and $\Imc$ is minimal, then $d\in C^{\I}$. 

From the minimality of $\Imc$ and axioms \eqref{revtreeone}-\eqref{tree}, since $a^{\Imc}\in {\cn{L}_{2n,l}'}^{\Imc}$ and $a^{\Imc}\in {\cn{L}_{2n,r}'}^{\Imc}$ it follows that there exists $d', d''\in \Delta^{\Imc}$ such that:
\begin{itemize}
\item $d' \in (\cn{L}_{2n-1}'\sqcap \cn{\exists pick. Left})^\Imc$ and $(d',a^{\Imc}) \in {\cn{l}_{2n}'}^{\Imc}$ and,
\item $d'' \in (\cn{L}_{2n-1}'\sqcap \exists \cn{pick. Left})^\Imc$ and $(d',a^{\Imc}) \in {{\cn{l}_{2n}'}}^{\Imc}$. 
\end{itemize} 
Since $\cn{Right}$ and $\cn{Left}$ never occur on the right-hand side of axioms, from the minimality of $\Imc$ we have that $\cn{Right}^{\Imc}=\{o^{\Imc}\}$ and $\cn{Left}^{\Imc}=\{{o'}^{\Imc}\}$. Hence, we have that $(d',o^{\Imc})\in \cn{pick}^{\Imc}$ and $(d'',{o'}^{\Imc})\in \cn{pick}^{\Imc}$. Since $\cn{pick}$ is minimized, if $d'=d''$ then we lose the minimality of $\Imc$. In such case, a smaller model $\Jmc$ can be obtained by $\Imc$ by setting $\cn{pick}^{\Jmc}=\cn{pick}^{\Imc}\setminus \{(d',o^{\Imc})\}$ and leaving the rest unmodified. 
Hence, $d'\not = d''$. The same argument applied for $a^{\Imc}$ can be applied at $d'$ and $d''$, showing that each of them has a \emph{left}-predecessor and a \emph{right}-predecessor. Formally the following holds. 
\begin{claim}
For each $0< j\leq 2n$, given any $d\in {\cn{L}_j'}^{\Imc}$ there exists two distinct $d',d''\in \Delta^{\Imc}$ such that $d',d''\in {\cn{L}_{j-1}'}^{\Imc}$, $(d',d)\in {\cn{r}_{j-1}'}^{\Imc}$ and $(d'',d)\in {\cn{l}_{j-1}'}^{\Imc}$.
\end{claim}
\begin{proof}
The proof follows the same argument used for $a^\Imc$.
\end{proof}
Furthermore, since $\cn{r}_i'$ and $\cn{l}_i'$ are minimized, each $d\in {L_i'}^{\Imc}$ can only provide a unique $\cn{r}_i'$-successor or a unique $\cn{l}_i'$-successor. 
Hence, there exists $2^{2n}$ distinct elements $d^i$, with $0\leq i\leq 2^{2n}-1$ in $\Delta^{\Imc}$ such that $d^j\in {\cn{L}_0'}^{\Imc}$. From the minimality of $\Imc$ and \eqref{revtreeone}, we have that each $d^i\in \cn{L}_{2n}^{\Imc}$. We now show that each $d^i$ is a leaf of an exponential tree generated by $\cn{Root}$. The latter follows from the next claim.

\begin{claim}
	For all $0<j\leq 2n$ and each $d\in \cn{L}_j^{\Imc}$, there exists $d',d''\in {\cn{L}_{j-1}}^{\Imc}$ such that $(d',d)\in \cn{r}_{j-1}^{\Imc}$ or $(d'',d)\in \cn{l}_{j-1}^{\Imc}$. 
	Furthermore $\vert \{d\vert d\in \cn{L}_j^{\Imc}\}\vert = 2\cdot \vert \{d'\vert d'\in \cn{L}_{j-1}^{\Imc}\}\vert$.
\end{claim}
\begin{proof}
Let $d\in\cn{L}_j^{\Imc}$. If there exists no $d' \in \cn{L}_{j-1}^\Imc$ such that $(d',d)\in \cn{r_{j-1}}^{\Imc}$ and $(d',d)\in \cn{l}_{j-1}^{\Imc}$, then we contradict the minimality of $\Imc$. Indeed, in such a case a smaller model can be produced by minimizing $\cn{L}_j$ at $d$. Thus, there exists $d'\in \cn{L}_{j-1}^{\Imc}$ such that $(d',d)\in \cn{r}_{j-1}^{\Imc}$ or $(d',d)\in \cn{r}_{j-1}^{\Imc}$. 
To show the second part of the claim, observe that for each $j$, each $d \in \cn{L}_j^{\Imc}$ can justify at most two occurrences of $\cn{L}_{j+1}^{\Imc}$: one via the role $\cn{r}_j$ and one via the role $\cn{l}_j^{\Imc}$. Hence, for each $j$, we have that $\vert \cn{L}_j^{\Imc}\vert \leq 2\cdot \vert \cn{L}_{j+1}^{\Imc}\vert$. Conversely, at each $j$, each node needs at least one predecessor via $\cn{r}_{j-1}$ or via $\cn{l}_{j-1}$. Since $\Imc$ is minimal, each node in $\cn{L}_{j}^{\Imc}$ can produce at most one $\cn{r}_{j-1}$ successor and at most one $\cn{l}_{j-1}$ successor. Thus, we have that $ 2 \vert \cn{L}_{j+1}^{\Imc}\vert \leq \vert \cn{L}_{j}^{\Imc}\vert$.  
\end{proof}

From the claim above, it follows that $\vert \cn{L}_{2n}^{\Imc}\vert = 2^n$.
Intuitively speaking, with the previous claims, we proved that in $\I$ there exists two binary trees, sharing the same leaves (the $2^{2n}$ distinct domain elements we denoted with $d^i$) with roots in $a^{\Imc}$. We now show the following claim. 

\begin{claim}
Since $a^{\Imc}\in \cn{LeafGrid}^{\Imc}$, then $d^i \in {\cn{G}}^{\Imc}$ for all $1\leq i\leq 2^n$.  
\end{claim}
\begin{proof}
Since $a^{\Imc}\in \cn{LeafGrid}^{\Imc}$ and $\Imc$ is minimal, from \eqref{goaltoroot3}, $a^{\Imc}\in \cn{G}_0^{\Imc}$. Observe that since $a^{\Imc} \in \cn{L}_0^{\Imc}$, from \eqref{treegen1} the previous claims, there exists $d,d'$ such that $(a^{\Imc}, d)\in \cn{r}_0^{\Imc}$ and $(a^{\Imc}, d')\in \cn{l}_0^{\Imc}$. Since $\Imc$ is minimal, then $d, d'\in G_1^{\Imc}$. Indeed, if $d\not \in G_1^{\Imc}$ (resp. $d''$), since $a^{\Imc}$ has only a unique $\cn{r}_0$ (resp. $\cn{l}_0$) successor (from the minimality of $\I$), a smaller model $\Jmc$ can be obtained from $\Imc$ by minimizing $G_0$ at $a^{\Imc}$.
By iterating the above argument for each $j$, we derive that ${d^i}\in \cn{G}^{\Imc}$, for all $0\leq i\leq 2^{2n}-1$.
\end{proof}
Since each $d^i \in \cn{G}^{\I}$, from \eqref{subgoals} and the minimality of $\I$, we have that $d^i\in (\cn{P\sqcap \cn{Good} \sqcap \cn{Ok} \cn{ Tiled} \sqcap \cn{HVok}})^{\I}$, for all $0\leq i\leq 2^{2n}-1$. 

We now argue that there exists a torus embedded in the set $\{d^i\in \cn{G}^{\I} \vert 0\leq i \leq 2^{2n}-1\}$. In the following we refer to the elements of such set as \emph{leaf elements}.

Since for each $0\leq i< 2^n$, $d^i\in \cn{Good}^{\I}$ and $\I$ is minimal, we have that $d^i\in {\cn{L}_{2n}\sqcap \cn{H} \sqcap \cn{V}}$. Hence there exists $d^j,d^k$ such that $(d^j,d^i)\in \cn{v}^{\Imc}$ and $(d^k,d^i)\in \cn{v}^{\I}$. Since the latter holds for each $d^i$ and each $d^j$ can justify at most one $h$-successor and at most one $v$ successor (since $\I$ is minimal), we have that $d_k$ and $d_j$ are unique. 

Since for each $0\leq k<2^n$ each $d^k\in \cn{P}^{\Imc}$, from the minimality of $\I$ and \eqref{cpickfinale}, it follows that $d^k\in (\bigsqcap_{i\leq n-1} \cn{Pick}_{i,x}\sqcap \bigsqcap_{i\leq n-1} \cn{Pick}_{j,y})^{\Imc}$. Otherwise, a smaller model could be obtained from $\I$ by minimizing $\cn{P}$ at some $d^k$. Hence, for each $0\leq i,j\leq n-1$, $d^k\in \cn{Pick}_{i,x}$ and $d^k\in \cn{Pick}_{j,y}$. Relying again on the minimality of $\I$, we can show the following claim.
\begin{claim}
For all $0\leq k<2^n$, for all $0\leq i<n$:
\begin{itemize}
\item either $d^k\in \cn{X}_i^{\I}$ or $d_k\in \cn{\bar{X}}_i^\Imc$, and 
\item either $d^k\in \cn{Y}_i^{\I}$ or $d_k\in \cn{\bar{Y}}_i^\Imc$.
\end{itemize}
\end{claim}
\begin{proof}
We prove the first item above, the second can be showed analogously. Consider the axioms \eqref{cpickone}-\eqref{cpickfinale}.
By a way of contradiction if $d^k\not \in \cn{X}_i^{\I}$ and $d_k\not \in \cn{\bar{X}}_i^\Imc$, we can obtain a smaller model form $\I$ by minimizing $\cn{Pick_{i,x}}$ at $d^k$. A contradiction. Thus $d^k \in \cn{X}_i^{\I}$ or $d_k\not \in \cn{\bar{X}}_i^\Imc$. Assume that $d_k\in \cn{X}_i^{\I}$ \emph{and} $d^k\in \cn{\bar{X}}_i^{\I}$. Since $\I$ is minimal, it follows that $d^k\in \exists \cn{pick}_i^x. \cn{One}^{\I}$ and $d^k\in \exists \cn{pick}_i^x. \cn{Zero}^{\I}$. However, each $d\in \cn{L}_{2n}^{\Imc}$ can minimally justify only one $\cn{pick}_i^x$. Thus, we contradict the minimality of $\I$: a smaller model can be obtained from $\I$ by minimizing one of the two occurrences of $\cn{pick}_i^x$. Hence we proved the thesis.
\end{proof}

From the claim above, we can associate to each $d^k$ a unique pair of coordinates $(x,y)$ coded in binary via the concepts $\cn{X}_i, \cn{\bar{X}}_i, \cn{Y}_i, \cn{\bar{Y}}_i$, with $0\leq i<n$. 

We now show that each $(x,y)$, with $0\leq x,y<2^n$ occurs exactly once among the $d^k$ elements and that the roles $h$ and $v$ associate to each $d^k$ with coordinate $(x,y)$ the elements corresponding to the pairs $(x', y')$ such that $x'\oplus_{2^n}1=x$ and $y'=y$, and $(x'',y'')$ such that $x=x''$ and $y''\oplus_{2^n}1 =y$. 

Observe that, for each $0\leq k<2^n$, $d^k \in \cn{Ok}^{\I}$. Therefore, $d^k\in \cn{Ok}_h^\I$ and $d^k\in \cn{Ok}_v^{\I}$ (using the minimality of $\I$ and \eqref{comparisonfinal}. 
Recall that for each $d^k$, there exists a unique $h$-successor $d^{q}$ and a unique $v$-successor $d^t$ in the set of the leaf elements $\cn{LeafSet}=\{d^k\in \cn{L}_{2n}^{\Imc}\vert 0\leq k\leq 2^{2n}-1\}$. 
\begin{claim}
For any $d_k$, given $d^j$ such that $(d^k,d^j)\in h^{\I}$, if $d^k$ correspond to the pair $(x,y)$ then $d^j$ corresponds to the pair $(x',y')$ such that $x=x'\oplus_{2^n} 1$ and $y=y'$. Similarly, for any $d_k$, given $d^j$ such that $(d^k,d^j)\in v^{\I}$, if $d^k$ correspond to the pair $(x,y)$ then $d^j$ corresponds to the pair $(x',y')$ such that $x=x'$ and $y=y'\oplus_{2^n} 1$.
\end{claim}
\begin{proof}
We show the first part of the claim, the second can be proved analogously. 

We observed that each $d^k \in \cn{Ok}_h^\Imc$. Thus, from the minimality of $\I$ and \eqref{hvfine}, $d^k\in {\cn{Ok}_x^h}^{\Imc}$ and $d^k\in {\cn{Ok}_y^h}^{\Imc}$. Hence, using again the minimality of $\I$ and \eqref{xfine}, $d^k\in {(\bigsqcap_{i<n} \cn{Ok_{i,x}^h})^{\Imc}}$ and  $d^k\in {(\bigsqcap_{i<n} \cn{Ok_{i,y}^h})^{\Imc}}$.
From \eqref{bycomponentx1}-\eqref{bycomponentx2} and the minimality of $\I$, since $d^k\in {(\bigsqcap_{i<n} \cn{Ok_{i,x}^h})^{\Imc}}$, then for all $i$, we have that 
for all $0\leq i<n$, either $d^k\in (\cn{X}_i \sqcap \cn{X}_i^h)^\I$ or $d^k(\cn{\bar{X}}_i \sqcap \cn{\bar{X}}_i^h)^\I$. Analogously, we derive that $d^k\in (\cn{Y}_i \sqcap \cn{Y}_i^h)^\I$ or $d^k(\cn{\bar{Y}}_i \sqcap \cn{\bar{Y}}_i^h)^\I$.

From \eqref{counting1}-\eqref{countingfinal}, the node $d^k$ with coordinates $(x,y)$ reads via the role $h$, the coordinates $(x',y')$ of $d^j$ (recall that $d^j$ is unique) and copies the pair $(x'',y'')$ using the concepts $\cn{X}_i^h$, $\cn{\bar{X}}_i^h$, $\cn{Y}_i^h$ and $\cn{\bar{Y}}_i^h$, with $i<n$ such that $x'=x''\oplus_{2^n}1$ and $y''=y'$. The thesis.

The same argument can be used to show that if $(d^k,d^j)\in v^{\Imc}$ then the coordinates $(x,y)$ of $d^k$ and the coordinates $(x',y')$ of $d^j$ are such that $x=x'$ and $y=y'\oplus_{2^{n}} 1$. 
\end{proof}
An immediate consequence of the above claim is that given any $d^k$, let $d^j$ and $d^q$ be its $h$-successor and $v$-successor, then $d^j\not =d^q$. Furthermore, each $(x,y)$ occurs exactly once.

We can start embedding an exponential grid in the nodes $\cn{LeafSet}=\{d^j\vert \text{ with }0\leq j\leq 2^{2n}-1\}$. Let $\pi\colon \cn{LeafSet}\rightarrow \{0,\dots, 2^n-1\}\times \{0,\dots, 2^n-1\}$ such that $\pi(d^i)=(x,y)$ if in $d^i$ the concepts $\cn{X}_i, \cn{\bar{X}}_i, \cn{Y_i}$ and $\cn{\bar{Y_i}}$ encode in binary the two coordinates $x$ and $y$. The above mapping is a bijection. 

We now move to show that the leaves are properly tiled. 

Since each $d^j\in \cn{Tiled}^{\Imc}$, using a similar argument to the one used to show that each $d^j$ corresponds to a unique pair of coordinates $(x,y)$ coded in binary, one can show that each $d^i\in\cn{ A}_t^{\Imc}$ for some $t\in T$ and such $t$ is unique. If $d^j$ encodes $(0,0)$, i.e. $\pi(d^j)=(0,0)$, since $\I$ is a model, from \eqref{initialcond}, $d^j\in \cn{A}_{t_0}^{\Imc}$ and since $\I$ is minimal, there is no other $t\in t$ such that $d^j\in \cn{A}_t^\Imc$. If $\pi(d^j)\not = (0,0)$, since $d^j\in \cn{Tiled}^{\Imc}$, from the minimality of $\I$ and given \eqref{tiling4}, it follows that $d^j\in (\exists \cn{tile. TileColor})^{\Imc}$. Observe that since $\I$ is minimal, each $d^j$ can only justify one occurrence of the role $\cn{tile}$ (see \eqref{tiling2}) and $\cn{TileColor}^{\I}=\bigcup_{t\in T} {\cn{A}'_t}^{\Imc}$.
Furthermore, since $A'_t$ do not occur on the right-hand side of the axioms, then ${\cn{A}_t'}^{\Imc}=\{c_t^{\I}\}$. Thus $\cn{TileColor}^{\I}=\bigcup_{t\in T} \{c_t^{\I}\}$.
Thus, there exists a unique $t\in T$ such that $(d^j, c_t)\in \cn{tile}^{\Imc}$ and $c_t\in {\cn{A}_t'}^{\Imc}$. Hence, from \eqref{tiling3}, $d^j\in {\cn{A}_t}^{\Imc}$ and $t$ is unique. 

\begin{claim}
Given any $d^j \in {\cn{A}_{t'}}^{\Imc}$, and $d^k\in \cn{A_{t}}^{\Imc}$, if $(d^j,d^k)\in \cn{h}^{\I}$ then $(t,t')\in H$. Analogously, if $(d^j,d^k)\in \cn{v}^{\I}$ then $(t,t')\in V$.
\end{claim}
\begin{proof}
For the proof of this claim, we refer to the axioms \eqref{tiling1}-\eqref{tiling8}.
Given any $t\in T$ and any $d^j\in {\cn{A}_{t'}}^{\Imc}$ with $t'\in T$ such that $(t,t')\in H$, we have that $d^j\in \cn{HNeigh}_{t'}^{\Imc}$. 
Similarly, for all $t'\in T$ such that $(t,t')\in V$, we have that $d^j\in \cn{VNeigh}_{t'}^{\Imc}$.
Since each $d^j\in \cn{HVok}^{\Imc}$, then from the minimality of $\Imc$, we have that $d^j\in \cn{Hok}^{\Imc}$ and $d^{j}\in \cn{Vok}$ (see \eqref{tiling8}). 
We proceed by showing that the horizontal matching conditions are satisfied.
Therefore, given any $d_j\in \cn{HVok}^{\Imc}$, we have that $d^j\in (\cn{HNeigh_t}\sqcap \exists \cn{h.} \cn{A}_t)^{\I}$. Indeed, since $d^k$ such that $(d^j,d^k)\in h^{\I}$ is uniquely determined, from the minimality of $\I$, it follows that $d^j\in \cn{HNeigh_t}^{\I}$. Hence $(t,t')\in H$.
To show that also the vertical matching conditions are satisfied, the proof is analogous.
\end{proof}

We can now construct a solution for $P$. Recall the mapping $\pi$ we previously defined. We define a map $\tau \colon \mathbb{N}\times \mathbb{N}\rightarrow T$ such that $\tau((x,y))=t$ if and only if $\pi^{-1}((x,y))\in \cn{A}_t^{\Imc}$. We observed that each $d\in \cn{LeafSet}$ belongs to a unique $\cn{A}_t$, with $t\in T$. Therefore, since $\pi$ is a bijection, $\tau$ is well-defined. We show that $(\tau(x,y), \tau(x\oplus_{2^n}1,y))\in H$. Let $d=\pi^{-1}((x,y))$ and $d'\in \pi^{-1}((x\oplus_{2^n}1,y))$. Since each pair $(x,y)$ is encoded exactly once in $\cn{LeafSet}$, from the construction of $\pi$, we have that $(d', d)\in h^{\I}$. Hence, from the last claim, it follows that $(t,t')\in H$.
Analogously, we can show that $(\tau(x,y), \tau(x,y\oplus_{2^n}1))\in V$. Since $\I$ is a model of $\K$, the initial condition is satisfied.
Thus, $\tau$ is a solution. 
        \end{proof}
        \end{proof}
      \end{toappendix}
        
    \subsection{Weak Acyclicity}
    \label{section:weak-acyclicity}
We now define \emph{weak acyclicity}, which is an important notion for TGDs in the database literature. We refine the above notion by annotating some edges in $\mathit{DG}(\Tmc)$ as $\star$-edges.
    For a concept $C$ in $\mathit{NNF}$, we define: 
  \begin{align*}
       \mathit{Occ}_\exists^{+}(A) = {} &   \mathit{Occ}_\exists^{+}(\neg A) = \emptyset & \text{ with } A \in \cnames \\
    	 \mathit{Occ}_\exists^{+}(\{o\})= {} &  \emptyset  & \text{ with } a \in \inames \\
   	  \mathit{Occ}_\exists^{+}(C\,{\circ}\,D) = {} &  \mathit{Occ}_\exists^{+}(C)\,{\cup}\,\mathit{Occ}_\exists^{+}(D) & \circ\in\{\sqcup, \sqcap\} \\
    	  \mathit{Occ}_\exists^{+}(\exists r.C)= {} & \{r\}\cup \mathit{Occ}^{+}(C) \\
   	  \mathit{Occ}_\exists^{+}(\forall r.C)= {} &  \mathit{Occ}_\exists^{+}(C) 
  \end{align*}
    We let $\mathit{Occ}_\exists^{+}(C \sqsubseteq D) = \mathit{Occ}_\exists^{+}( \mathit{NNF}(\lnot C \sqcup D))$ for a concept inclusion $C \ISA D$.
    In $\mathit{DG}(\Tmc)$, there is a $\star$-edge from $P_1$ to $P_2$ if for some $\alpha\in \Tmc$, $P_1\in \mathit{Occ}^{-}(\alpha)$ and $P_2\in \mathit{Occ}_\exists^{+}(\alpha)$.
    Notice that for all $C$, we have $\mathit{Occ}_\exists^{+}(C)\subseteq \mathit{Occ}^{+}(C)$, and thus all $\star$-edges are also basic edges.
    We call a TBox $\Tmc$ \emph{weakly acyclic} if there is no cycle in $\mathit{DG}(\Tmc)$ that goes through an $\star$-edge {and no node is reachable from $\top$ in $\mathit{DG}(\Tmc)$.} 
    Clearly, every strongly acyclic TBox is also weakly acyclic.
    The $\EL$ TBox in Example~\ref{example:global-versus-pointwise} is weakly acyclic but not strongly acyclic.

    Fortunately, even for weakly acyclic $\ELIO_\bot$, we can always find a model whose size is at most single exponential. 

	\begin{lemmarep}
			\label{lemma:exp-size-bound}
		Weakly acyclic $\ELIO_\bot$ has the small model property:
		if $C$ is satisfied in some minimal model of $\K = (\T,\A)$, then it is satisfied in a minimal model $\Jmc$ whose domain has size bounded by $\sizeof{\inames(\kb)} \times (\sizeof{\tbox} \times 2^{\sizeof{\tbox}})^\sizeof{\tbox}$.
	\end{lemmarep}
    
    \begin{proofsketch}
    	Let $\Imc$ be such a minimal model.
    	It suffices to prove that its active domain \emph{i.e.}\ the subset of elements from $\Delta^\Imc$ that occur in the interpretation of at least one concept or role name, has the claimed size.
    	Since no node is reachable from $\top$ in $\depgraph(\tbox)$, every single fact in $\Imc$ somehow stems from the individuals occurring in $\kb$.
    	We thus start from those and track which successors they might require based on their types, \emph{i.e.\ }the combinations of concepts they satisfy in $\Imc$.
    	For example, an element $a$ with type $\{ A \}$ requires an $r$-successor with type containing $B$ if $\tbox$ has an axiom $A \sqsubseteq \exists r.B$; since $\Imc$ is a model, there exists such a successor $e$.
    	Now, instead of directly iterating the above by looking at the type of $e$ in $\Imc$ (say, $\{ A, B \}$), we first restrict this type to the concepts that are reachable from $A$ via \emph{at least} one $\star$-edge of $\depgraph(\tbox)$ (for element $e$, we thus restrict its type $\{ A, B \}$ to $\{ B \}$).
    	Indeed, no such concept can further require $A$: it would form a cycle in $\depgraph(\tbox)$ going through the said $\star$-edge, contradicting $\tbox$ being weakly acyclic.
    	Therefore, the restricted types we successively consider become empty after at most $\sizeof{\depgraph(\tbox)}$ iterations.
    	In particular, the number of reached elements is bounded as desired; and the restriction of $\Imc$ to these elements is a model, which must be the active domain of $\Imc$ as $\Imc$ is minimal.
    \end{proofsketch}

\begin{proof}
	Consider a weakly acyclic $\ELIO_\bot$ KB $\kb = (\tbox, \abox)$.
	Using Proposition~\ref{normalization:mms}, we safely assume that every axiom has modal depth at most $1$.
	Let $\Imc$ be a minimal model of $\kb$.
	We prove that the active domain of $\Imc$, \emph{i.e.}\ the subset $\Delta \subseteq \Delta^\Imc$ of elements that occur in at least one $\cstyle{A}^\Imc$ or one $\rstyle{p}^\Imc$ for $\cstyle{A} \in \cnames$ and $p \in \rnames$, has size at most $\sizeof{\inames(\kb)} \times (\sizeof{\tbox} \times 2^{\sizeof{\tbox}})^\sizeof{\tbox}$.
	Therefore, restricting $\Imc$ to its active domain immediately yields an interpretation $\Jmc$ with the desired size and satisfying the very same concepts as $\Imc$.
	Note that this is a much stronger statement than needed as it does not depend at all on the considered reasoning task (that is, trying to satisfy the concept of interest).
	
	We proceed by contradiction.
	Assume that the active domain $\Delta$ of $\Imc$ is bigger than $\sizeof{\abox} \times 4^\sizeof{\tbox}$.
	We will construct an interpretation $\Imc' \subset \Imc$, which will contradicts the minimality of $\Imc$.
	
	Given two nodes $u, v$ in $\depgraph(\tbox)$, we denote $u \leadsto^\star v$ if there exists a path from $u$ to $v$ that uses at least one $\star$-edge.
	A type is a subset of vertices from $\depgraph(\tbox) := (\Vmc, \Emc)$.
	Given two types $t, s \subseteq \Vmc$, we denote $t \leadsto^\star s$ if for every $v \in s$, there exists $u \in t$ such that $u \leadsto^\star v$.
	The type $\type_\Imc(e)$ of an element $e \in \Delta^\Imc$ in the interpretation $\Imc$ is $\{ \cstyle{A} \in \cnames(\kb) \mid e \in \cstyle{A}^\Imc \} \cup \{ \rstyle{r} \in \rnames(\kb) \mid e \in (\exists \rstyle{r})^\Imc \cup (\exists \rstyle{r}^-)^\Imc \}$
	The type $\type^t_\Imc(e)$ of an element $e \in \Delta^\Imc$ in the interpretation $\Imc$ w.r.t.\ another type $t$ is the maximal (w.r.t.\ set inclusion) type $s$ such that $s \subseteq \type_\Imc(e)$ and $t \leadsto^\star s$.
	
	Given elements $e_0, \dots, e_n \in \Delta^\Imc$, with $n \geq 0$, possibly inverse roles $\rstyle{r}_1, \dots, \rstyle{r}_n$ and types $t_1, \dots, t_n \subseteq \Vmc$, we say that a sequence $e_0 (\rstyle{r}_1, t_1) e_1 \dots (\rstyle{r}_n, t_n) e_n$ is a \emph{dependency path} in $\Imc$ if it satisfies the following conditions:
	\begin{enumerate}
		\item $e_0 \in \inames(\kb)$;
		\item For every $1 \leq i \leq n$, $t_i = \type_\Imc^{t_{i-1}}(e_i)$, with $t_0 := \type_\Imc(e_0)$;
		\item For every $1 \leq i \leq n$, $\rstyle{r}_i \in t_i$ and $(e_{i-1}, e_i) \in \rstyle{r}_i^\Imc$.
	\end{enumerate}
	Notice that since $\tbox$ is weakly acyclic, all dependency paths have their length $n$ being at most $\sizeof{\Vmc}$ as otherwise the non-emptiness of the last type (Point~3) joined with Point~2 in the above would yield a path of $\star$-edges in $\depgraph(\tbox)$ with length $> \Vmc$, thus forming at least one cycle.
	
	Now, for every $d \in \Delta^\Imc$, $\rstyle{r} \in \rnamespm$ and $t \subseteq \Vmc$, if there exists a dependency path ending with $d (\rstyle{r}, t) e$ for some $e \in \Delta^\Imc$, then we chose such an $e$ and denote $\successor_{\rstyle{r}, t}(d)$ this element.
	We now define $W$ as the set of all dependency paths $e_0 (\rstyle{r}_1, t_1) e_1 \dots (\rstyle{r}_n, t_n) e_n$ such that for all $1 \leq i \leq n$, we have $e_i = \successor_{\rstyle{r}_i, t_i}(e_{i-1})$.
	Notice that $\sizeof{W} \leq \sizeof{\inames(\kb)} \times (\sizeof{\tbox} \times 2^{\sizeof{\tbox}})^\sizeof{\tbox}$.
	We define $\Imc'$ as the interpretation with domain $\Delta^{\Imc'} := \Delta^\Imc$ and that interprets every concept name $\cstyle{A}$ and role name $\rstyle{r}$ as follows:
	\begin{align*}
		\cstyle{A}^{\Imc'} := ~ & \{ e \mid e \in \inames(\kb) \cap \cstyle{A}^\Imc \}
		\cup \{ e \mid \text{there exists } w (\rstyle{r}, t) e \in W \text{ with } \cstyle{A} \in t \}
		\\
		\rstyle{r}^{\Imc'} := ~ & \{ (d, e) \mid \rstyle{r}(d, e) \in \abox \}
		 \cup \{ (d, e) \mid \text{there exists } w d (\rstyle{r}, t) e \in W \}
		 \cup \{ (e, d) \mid \text{there exists } w d (\rstyle{r}^-, t) e \in W \}.
	\end{align*}
	It is transparent that $\Imc' \subseteq \Imc$ and that the active domain of $\Imc'$ has size at most $\sizeof{W}$, thus strictly smaller than the one of $\Imc$, hence $\Imc' \subset \Imc$.
	To contradict the minimality of $\Imc$, it remains to verify that $\Imc'$ is indeed a model of $\kb$.
	
	It is immediate that $\Imc' \models \abox$.
	For the TBox $\tbox$, we recall that the TBox has modal depth at most $1$ and we treat below the most interesting cases of CIs.
	Importantly, since $\tbox$ is weakly acyclic, no node is reachable from $\top$ in $\depgraph(\tbox)$, and in particular there are no CI in $\tbox$ with shape $\top \sqsubseteq \cstyle{C}$.
	
	Case of $\exists \rstyle{r}.\cstyle{A} \sqsubseteq \cstyle{B}$.
	Notice it guarantees that there is an edge from $\rstyle{r}$ to $\cstyle{B}$ in $\depgraph(\tbox)$.
	Assume we have $(d, e) \in \rstyle{r}^{\Imc}$ with $e \in \cstyle{A}^{\Imc'}$.
	We need to prove $d \in \cstyle{B}^{\Imc'}$.
	We distinguish cases based on the definition of $\rstyle{r}^{\Imc'}$.
	If $\rstyle{r}(d, e) \in \abox$, then $e \in \inames$ and thus $e \in \cstyle{A}^{\Imc}$.
	Therefore $d \in (\exists \rstyle{r}.\cstyle{A})^\Imc$ thus $d \in \cstyle{B}^\Imc$ as $\Imc$ is a model.
	It follows that $d \in \cstyle{B}^{\Imc'}$.
	If there exists $w d (\rstyle{r}, t) e \in W$, then notice that by Point~3 from the definition of dependency paths, we have $\rstyle{r} \in t$.
	Furthermore, from $e \in \cstyle{A}^{\Imc'}$, we get $\cstyle{A} \in \type_\Imc(e)$, thus $\cstyle{B} \in \type_\Imc(d)$.
	Now if $d \in \inames(\kb)$, then immediately we get $d \in \cstyle{B}^{\Imc'}$, otherwise $w$ must end by some $(\rstyle{r_0}, t_0)$.
	It remains to prove that $\cstyle{B} \in t = \type_\Imc^{t_0}(d)$.
	By definition $\rstyle{r} \in t$ thus there exists $\cstyle{C}_0 \in t_0$ such that $\cstyle{C}_0 \leadsto^\star \rstyle{r}$.
	Recall that there is an edge from $\rstyle{r}$ to $\rstyle{B}$ in $\depgraph(\tbox)$, which guarantees $\cstyle{C}_0 \leadsto^\star \cstyle{B}$.
	The case of $w d (\rstyle{r}^-, t) e \in W$ is treated similarly.
	
	Case of $\cstyle{A} \sqsubseteq \exists \rstyle{r}.\cstyle{B}$.
	Notice it guarantees that there is a $\star$-edge from $\cstyle{A}$ to $\cstyle{B}$ in $\depgraph(\tbox)$.
	Assume we have $d \in \cstyle{A}^{\Imc'}$, we have two cases.
	If $d \in \inames(\kb)$, then $\cstyle{A} \in \type_\Imc(d)$ and from $\Imc$ being a model we obtain that there exists $(d, e) \in \rstyle{r}^\Imc$ with $e \in \cstyle{B}^\Imc$.
	Therefore $d(\rstyle{r}, t)e$ is a dependency path, where $t := \type_\Imc^{\type_\Imc(d)}(e)$.
	Notice that, from the $\star$-edge from $\cstyle{A}$ to $\cstyle{B}$ and $\cstyle{B} \in \type_\Imc(e)$, we obtain that $\cstyle{B} \in t$.
	This guarantees that $\successor_{\rstyle{r}, t}(d)$ is defined, and it follows that $(d, \successor_{\rstyle{r}, t}(d)) \in \rstyle{r}^{\Imc'}$ and $\successor_{\rstyle{r}, t}(d) \in \cstyle{B}^{\Imc'}$.
	It remains to treat the case of $w (\rstyle{r}_0, t_0) d \in W$ with $\cstyle{A} \in t_0$.
	From $d \in \cstyle{A}^{\Imc}$, we can argue again that $\successor_{\rstyle{r}, t}(d)$ is defined for some type $t$ that contains $\cstyle{B}$, which in turn concludes the argument.
	
	Case of $\cstyle{A} \sqcap \cstyle{B} \sqsubseteq \cstyle{C}$ (note that $\cstyle{C}$ can be $\bot$, in which case the proof below proceeds by contradiction).
	Notice it guarantees that there is an edge from $\cstyle{A}$ to $\cstyle{C}$ and from $\cstyle{B}$ to $\cstyle{C}$ in $\depgraph(\tbox)$.
	Assume we have $d \in (\cstyle{A} \sqcap \cstyle{B})^{\Imc'}$.
	If $d \in \inames(\kb)$, then we immediately obtain $d \in (\cstyle{A} \sqcap \cstyle{B})^{\Imc}$, thus $d \in \cstyle{C}^\Imc$ as $\Imc$ is a model of $\tbox$ (which is a contradiction if $\cstyle{C} = \bot$).
	It follows that $d \in \cstyle{C}^{\Imc'}$.
	Otherwise we have $w_1 (\rstyle{r}_1, t_1) d \in W$ with $\cstyle{A} \in t_1$ and $w_2 (\rstyle{r}_2, t_2) d \in W$ with $\cstyle{B} \in t_2$.
	These respectively yield $\cstyle{A} \in \type_\Imc(d)$ and $\cstyle{B} \in \type_\Imc(d)$.
	From $\Imc$ being a model of $\tbox$, we derive $\cstyle{C} \in \type_\Imc(d)$ (which is again a contradiction if $\cstyle{C} = \bot$).
	To obtain that $\cstyle{C} \in t_1$, which concludes the argument, it now suffices to recall that $\cstyle{A} \in t_1$ and that there is an edge from  $\cstyle{A}$ to $\cstyle{C}$ in $\depgraph(\tbox)$.
	\end{proof}

This lemma is our key to deriving tight complexity bounds for the weakly acyclic setting.
    	\begin{theoremrep}
		\label{theorem:el-is-nexpnp-no-top-weakly acyclic}
		$\mms$ in weakly acyclic $\ELIO_\bot$ is $\NExp^\NP$-complete.
		The lower bound holds  already for $\EL$. 
	\end{theoremrep}
	\begin{proofsketch}
          The upper bound immediately follows from Lemma \ref{lemma:exp-size-bound}.
       Indeed, we can use a naive  procedure that ``guesses'' an exponentially large candidate model $\I$ of the input KB, and checks non-existence of a smaller model $\J\subsetneq \I$ using an  NP oracle.

          For the lower bound, we provide a reduction from (the complement of) \emph{succinct} \ccol \cite{DBLP:journals/tods/EiterGM97} to $\mms$.  
	Our reduction combines the ideas behind 
    encoding of Example \ref{example:flooding-technique} using {flooding}, and those illustrated in Example \ref{binarytree} that allows us to succinctly represent the exponentially large graph.
We construct a weakly acyclic KB $\K$ in $\mathcal{EL}$ and, 
{similarly as in Example \ref{binarytree}, 
we define some \emph{subgoal} concepts which are needed for the goal concept to be satisfied, and use them to ensure that in every minimal model we can find the following trees:}

\noindent$\bullet$ A tree $T_C$ per each color $C\in \{R,G,B\}$ of depth $n$, where each leaf corresponds to a vertex of the input graph with a color assignment. Using a minimality argument, we ensure that such trees have disjoint sets of leaves. 

\noindent$\bullet$  A tree $T_G$ of depth $n$ where each leaf corresponds to a node of the input graph. We craft a subgoal concept $\cn{Col}$ that must be satisfied at each leaf of $T_G$. The subgoal $\cn{Col}$ ensures that \begin{enumerate*}[(a)]
\item each leaf in $T_G$ is connect to the leaf in $T_C$ corresponding to the same node in the graph, for each $C\in \{R,G,B\}$;
\item at least one of the leaves is marked as the chosen color.
\end{enumerate*} 

\noindent$\bullet$ A tree $T_{V}$ of depth $2n$ where each leaf corresponds to a variable $v_{i,j}$. By crafting a dedicated subgoal concept, that must be satisfied at each leaf of $T_{V}$, we ensure that each variable has a unique truth assignment. 

\noindent$\bullet$  A large tree $T_F$, of depth $6n+2$, where each leaf encodes a tuple $(u,v,x,y, \sigma^1, \sigma^2)$ where: $u,v$ are vertexes in the input graph $G$, $x,y$ are variables and $\sigma^1,\sigma^2 \in \{0,1\}$ (polarities of the variables). 
Each $(u,v,x,y,\sigma^1,\sigma^2)$ represents a possible edge in the input graph $G$. Simulating the computation of a family of circuits (following \cite{BonattiLW09}), we mark the leaves of $T_F$ corresponding to \emph{real} edges in the graph $G$.

We construct two subgoal concepts that are satisfied if each leaf encoding $(u,v,x,y,\sigma^1, \sigma^2)$ is connect via dedicated roles to the leaves corresponding to $u,v$ (in $T_G$) and $x,y$ (in $T_{V}$). From such role connections, the leaves in $T_F$ can import the color assignment and the truth assignment. By evaluating the clause encoded in each $(x,y,\sigma^1, \sigma^2)$, we mark the leaves in $T_F$ corresponding to \emph{true} edges in $G$. If a bad color assignment is detected, the structure gets \emph{flooded} similarly to Example \ref{example:flooding-technique}. In particular, all the leaves of $T_C$, for each $C\in \{R,G,B\}$ are marked as chosen. A concept $\cn{Flood}$ is then propagated at all the leaves of $T_F$.
In particular, the satisfaction of $\cn{Flood}$ forces the leaves to be assigned to all colors. 
Instead of checking the satisfaction the subgoals at the (exponentially many) leaves, we transfer the check  to
the root of the tree as follows: 
\begin{align*}
	\exists \cn{l}_j. \cn{Goal}\sqcap \exists \cn{r}_j. \cn{Goal}\sqsubseteq \cn{Goal}\text{ for all }0\leq j\leq \mathrm{tree\;depth}
\end{align*}
In a minimal model, $\cn{Goal}$ is satisfied at the root $r$ of the tree if and only if $\cn{Goal}$ is true at each leaf. 
Finally, we construct a final goal concept that is satisfied in a minimal model if all the roots of the different trees satisfy their respective goals.
{We emphasize that the
propagation of the $\cn{Flood}$ concept 
requires axioms that are  
not strongly acyclic.}
	\end{proofsketch}
  \begin{toappendix}

\begin{proof}
\emph{Membership.} Upper bound follows trivially from Lemma~\ref{lemma:exp-size-bound}.\\

\emph{Hardness.} We provide a reduction from the complement of succinct \textsc{cert3col} \cite{DBLP:journals/tods/EiterGM97}. Following \cite{BonattiLW09}, an instance of succinct \coccol is an undirected graph $G$ with vertexes $\{0, \cdots, 2^n-1\}$, with $n\in \mathbb{N}$,
such that each edge is labelled by a clause of two literals over a set of Boolean variables $\{v_{i,j}\vert 0\leq i,j<2^n\}$. The graph $G$ is represented as $4n+3$ boolean circuits with $2n$ inputs and one output.
The circuit are defined as follows.
\begin{itemize}
\item $c_E$ (where $E$ stands for edge) takes as input two nodes (coded in binary) and outputs $1$ if there is an edge between them in $G$, it outputs $0$ otherwise.
\item $c_{\sigma}^{(1)}$ that outputs the polarity of the first literal over edges. In particular, $c_{\sigma}^{(1)}$ outputs $1$ if there is an edge between the input nodes and the first literal is positive. While, $c_{\sigma}^{(1)}$ outputs $0$ if there is an edge between the input nodes and the first literal is negative.
\item $c_{\sigma}^{(2)}$ outputs the polarity of the second literal over edges. In particular, $c_{\sigma}^{(2)}$ outputs $1$ if there is an edge between the input nodes and the second literal is positive. While, $c_{\sigma}^{(2)}$ outputs $0$ if there is an edge between the input nodes and the second literal is negative.
\item  the circuits $c_j^{(i)}$, with $i\in \{1,2,3,4\}$ and $0\leq j<n$ computes the labelling $lit_{k_1,k_2}\lor lit_{k_3,k_4}$ if there is an edge between the two input nodes. In particular, each $c_j^{(i)}$, if there is an edge between the two nodes, it computes the $j$-th bit of $k_i$.
\end{itemize}
In all the cases above, except $c_E$, if there is no edge between the input node the output is arbitrary.

A \emph{yes-instance} is a graph $G$ such that for some truth assignment of the variables, the subgraph induce by $\cn{true}$ edges is not $3$-colorable. 

We construct a KB $\Kmc$ and a concept $C_0$ such that $C_0$ is satisfiable in a minimal model of $\Kmc$ if and only if $G$ is a yes-instance of \coccol. We will use them same strategy used in Theorem \ref{nexp:el:strong:ac} to produce trees of bounded depth, with exponentially many leaves. In particular, as pointed out in the proof sketch, we construct 6 trees: \begin{enumerate*}
    \item 3 trees for the 3 colors $R,G$ and $B$, of depth $n$;
    \item 1 tree of depth $n$, which we use for the color assignment of vertexes;
    \item 1 tree of depth $2n$, which we use to define the truth assignment of variables (following the idea sketched in Example \ref{silly});
    \item one final tree, of depth $6n+2$ where each leaf corresponds to a possible edge of the graph and encodes a tuple $(u,v,x,y,\sigma^1,\sigma^2)$ where $u,v$ are vertices of $G$, $x,y$ are variables and $\sigma^1$ and $\sigma^2$ indicate the polarities, i.e. if the variables occur positively or negatively over the edge $(u,v)$.
\end{enumerate*}

We move directly to defining all the axioms. We discuss step by step the requirements we encode via subgoals. The idea is in the spirit very similar to the one used for Theorem \ref{nexp:el:strong:ac}: we ensure that all leaves satisfy a certain property if a concept is satisfied at the leaves. Once we ensure that, we can then add \emph{propagation} axioms, that carry a predicate back to the root if the concept is true at all the leaves.

\emph{First group of axioms: generating the $2^n$ vertexes of the graph and assign colors.} We ensure first that we can produce $2^{n}$ nodes using a binary tree, as done for Theorem \ref{nexp:el:strong:ac}.
\begin{align}
                &\cn{Root}_1(r_1)\label{root1}\\
                &\cn{Root}_1\sqsubseteq \cn{L_0}^{(1)}\label{tree1lev0}\\
                &\cn{L}_i^{(1)}\sqsubseteq \cn{\exists r}_i^{(1)}. \cn{L}_{i+1}^{(1)}\sqcap \cn{\exists l_i. L}_{i+1}^{(1)}\text{ for all $0\leq i< n$}\label{tree1gen}
            \end{align}
We now generate a goal concept $\cn{Tree}_1$ that is satisfiable iff the tree has $2^n$ leaves.
\begin{align}
            \cn{Left}(o) &\qquad \cn{Right}(o')\label{choicelr}\\
            \cn{L}_{n}^{(1)}&\sqsubseteq \lambda_{0}^{(1)}\label{revtree1lev0}\\
            \lambda_j^{(1)}&\sqsubseteq \exists \cn{pick}\label{picksucc}\\
            \lambda_j^{(1)}\sqcap \exists \cn{pick}. \cn{Left}&\sqsubseteq \exists \cn{s}_j^{(1)}. \lambda_{j+1,l}^{(1)}\label{revleft}\\
            \lambda_j^{(1)}\sqcap \exists \cn{pick}. \cn{Right}&\sqsubseteq \exists \cn{d}_j^{(1)}. \lambda_{j+1,r}^{(1)}\label{revright}\\
            \lambda_{j+1,s}^{(1)}\sqcap \lambda_{j+1,d}^{(1)}&\sqsubseteq \lambda_{j+1}^{(1)} \text{ for all }0\leq j\leq n\label{nextlevtree1}\\
            \lambda_{n,s}^{(1)}\sqcap\lambda_{n,d}^{(1)}&\sqsubseteq \cn{Tree}_1\label{closetree1}
\end{align}

We can also produce the bit vector of each vertex using special concepts $\cn{B}_i^{(1)}$ and $\Bar{\cn{B}_i}^{(1)}$, with $0\leq i<n$. In particular, looking at the bit vector using the aforementioned concepts, each element in $L_{n}^{(1)}$ corresponds to a unique vertex of the graphs, and all vertices are represented exactly once.\\

\begin{minipage}{0.45\textwidth}
\begin{align}
\exists \cn{s}_{n-1}^{(1)}.\cn{Tree}_1&\sqsubseteq \bar{\cn{B}}_0^{(1)}\label{bitone}\\
\exists \cn{d}_{n-1}^{(1)}.\cn{Tree}_1&\sqsubseteq \cn{B}_0^{(1)}\\
\exists \cn{s}_{j}^{(1)}. \bar{\cn{B}}_j^{(1)}&\sqsubseteq \bar{\cn{B}}_j^{(1)} \\
\exists \cn{s}_{j}^{(1)}. {\cn{B}}_j^{(1)}&\sqsubseteq {\cn{B}}_j^{(1)}
\end{align}
\end{minipage}
\begin{minipage}{0.45\textwidth}
\begin{align}
\exists \cn{d}_{j}^{(1)}. \bar{\cn{B}}_j^{(1)}&\sqsubseteq \bar{\cn{B}}_i^{(1)} \\
\exists \cn{d}_{j}^{(1)}. {\cn{B}}_j^{(1)}&\sqsubseteq {\cn{B}}_j^{(1)} \\
\exists \cn{s}_j^{(1)}. \cn{\lambda}_{j+1}^{(1)}&\sqsubseteq \cn{\bar{B}}_{j+1}^{(1)}\\
\exists \cn{d}_j^{(1)}. \cn{\lambda}_{j+1}^{(1)}&\sqsubseteq \cn{{B}}_{j+1}^{(1)}\label{bitonefinal}
\end{align}
\end{minipage}\\

We will pick the colors at each leaf of the tree. Before proceeding with the axiom description, let us remark that we cannot simply use 3 assertions and require that the leaves point to one of such dedicated assertions. Indeed, our final goal is to show that we find a \emph{one} truth assignment, such that under \emph{all} color assignments the graph is not 3-colorable. If we use a similar trick used in Theorem \ref{nexp:el:strong:ac} to associate color tiles, in a minimal model we can justify only \emph{one} color assignment. However, we want to be able to \emph{saturate} the structure in the sense of Example \ref{example:flooding-technique}. 

To avoid this issue, we generate the 3 trees, one per color, of depth $n$ and assign to each leaf exactly one color.
We will then connect each leaf encoding in binary a number $x<2^n$ to a leaf in the first tree (the vertex tree) encoding in binary the same number $x$.

To generate a tree of depth $n$ for each of the tree colors red ($R$), blue ($B$) and green ($G$), we use our standard axioms. For $C\in \{R,G,B\}$, we consider the following axioms:
\begin{align}
&\cn{Root}_C(r_C)\label{rootC}\\
                &\cn{Root}_C\sqsubseteq \cn{L_0}^{(C)}\label{colortree}\\
                &\cn{L}_i^{(C)}\sqsubseteq \cn{\exists r}_i^{(C)}. \cn{L}_{i+1}^{(C)}\sqcap \cn{\exists l_i. L}_{i+1}^{(C)}\text{ for all $i\leq n$}\label{colortreegen}
            \end{align}
We now generate a goal concept $\cn{Tree}_C$ that is satisfiable iff the tree has $2^n$ leaves.
\begin{align}
            \cn{L}_{n}^{(C)}&\sqsubseteq \lambda_{0}^{(C)}\label{revtreeclev0}\\
            \lambda_j^{(C)}&\sqsubseteq \exists \cn{pick} \text{ for all }0\leq j< n\label{picksuccc}\\
            \lambda_j^{(C)}\sqcap \exists \cn{pick}. \cn{Left}&\sqsubseteq \exists \cn{s}_j^{(C)}. \lambda_{j+1,l}^{(C)} \text{ for all }0\leq j< n\label{revleftc}\\
            \lambda_j^{(C)}\sqcap \exists \cn{pick}. \cn{Right}&\sqsubseteq \exists \cn{d}_j^{(C)}. \lambda_{j+1,r}^{(C)}  \text{ for all }0\leq j< n\label{revrightc}\\
            \lambda_{j+1,s}^{(C)}\sqcap \lambda_{j+1,d}^{(C)}&\sqsubseteq \lambda_{j+1}^{(C)} \text{ for all }0\leq j< n\label{nextlevtreec}\\
            \lambda_{n,s}^{(C)}\sqcap\lambda_{n,d}^{(C)}&\sqsubseteq \cn{Tree}_C\label{closetreec}
\end{align}
We mark all the leaves of the tree with color $C$ by introducing a concept for each color. The aim of the followin axioms is twofold: \begin{enumerate*}
    \item guarantee that the picked color is uniqueaand that all leaves pick the same color, producing a subgoal $\cn{C}$, with $\cn{C}\in \{\cn{R,G,B}\}$;
    \item if for two colors $C_1$ and $C_2$, the two trees have a leaf in common, we can detect this mistake and avoid satisfying the subgoal mentioned in the previous item.
\end{enumerate*}
We let the leaves pick a color, and copy it.
\begin{align}
\cn{R_A(r)\qquad} &\cn{G_A(g)\qquad B_A(b)}\\
\cn{L}_{2n}^{(C)}&\sqsubseteq \exists \cn{pick\_col}\label{pickcol}\\
\cn{\exists pick\_col}. \cn{C}&\sqsubseteq \cn{C} \qquad \cn{C}\in \{\cn{R, G, B}\}\label{picked}
\end{align}
We refer to these special trees as \emph{color trees}. We keep $\cn{C}$ as one of the subgoals that has to be satisfied at all the leaves. Similarly to Theorem \ref{nexp:el:strong:ac}, we genate a goal concept that is satisfied at the root (in a minimal model) if it is satisfied at all the leaves of the tree.

Similarly to above, we can use $n$ concept $\cn{B}_i^{(C)}$ and $\cn{\bar{B}}_i^{(C)}$, with $0\leq i<n$ per each color $C\in \{R,G,B\}$ to associate each of the leaves to a vertex of the graph.\\
 
\begin{minipage}{0.45\textwidth}
\begin{align}
\exists \cn{s}_{n-1}^{(C)}.\cn{Tree}_1&\sqsubseteq \bar{\cn{B}}_0^{(C)}\label{bitcolorone}\\
\exists \cn{d}_{n-1}^{(C)}.\cn{Tree}_1&\sqsubseteq \cn{B}_0^{(C)}\\
\exists \cn{s}_{j}^{(C)}. \bar{\cn{B}}_j^{(C)}&\sqsubseteq \bar{\cn{B}}_j^{(C)} \\
\exists \cn{s}_{j}^{(C)}. {\cn{B}}_j^{(C)}&\sqsubseteq {\cn{B}}_j^{(C)}
\end{align}
\end{minipage}
\begin{minipage}{0.45\textwidth}
\begin{align}
\exists \cn{d}_{j}^{(C)}. \bar{\cn{B}}_j^{(C)}&\sqsubseteq \bar{\cn{B}}_i^{(C)} \\
\exists \cn{d}_{j}^{(C)}. {\cn{B}}_j^{(C)}&\sqsubseteq {\cn{B}}_j^{(C)} \\
\exists \cn{s}_j^{(C)}. \cn{\lambda}_{j+1}^{(C)}&\sqsubseteq \cn{\bar{B}}_{j+1}^{(C)}\\
\exists \cn{d}_j^{(C)}. \cn{\lambda}_{j+1}^{(C)}&\sqsubseteq \cn{{B}}_{j+1}^{(C)}\label{bitcolorfinal}
\end{align}
\end{minipage}\\

We now require that each leaf of the first tree can see 3 possible choices for the colors and that it has to make a choice. 

\begin{align}
\cn{L}_{2n}^{(1)}\sqsubseteq \exists \cn{col. R'} \sqcap \exists \cn{col. G'} \sqcap \exists \cn{col. B'}\sqcap \exists \cn{col.} \cn{Chosen}\label{performchoice}
\end{align}

With the following axioms, we generate a goal concept $\cn{Col}$ whose satisfiability in a minimal model ensures that, for each leaf node of the first tree, the $\cn{col}$ successor $\cn{R', G', B'}$ are at the leaf of the respective color trees, one of them is also labeled with $\cn{Chosen}$ and the coordinate if the two nodes coincide. Intuitively speaking, we require that the vertex of the graph (encoded in the leaves of the first tree), are connected to all their possible color assignments. One of such color assignmentss is chosen.

\begin{minipage}{0.45\textwidth}
\begin{align}
&\exists \cn{col. (R'\sqcap R)}\sqsubseteq \cn{GoodCol_R}\label{readred}\\
&\exists \cn{col. (G'\sqcap G)}\sqsubseteq \cn{GoodCol_G}\label{readgreen}\\
&\exists \cn{(col. B'\sqcap B)}\sqsubseteq \cn{GoodCol_B}\label{readblue}\\
&\cn{GoodCol_R}\sqcap \cn{GoodCol_G}\sqcap \cn{GoodCol_B}\sqsubseteq \cn{GoodCol}\label{colorconnection}\\
&\exists \cn{col. B}_i^{(C)} \sqcap \cn{B}_i^{(1)} \sqsubseteq \cn{ColCoord}_{i,x}^{(C)}\label{checkcoordp}\\
&\exists \cn{col. \bar{B}}_i^{(C)} \sqcap \cn{\bar{B}}_i^{(1)} \sqsubseteq \cn{ColCoord}_{i}^{(C)}\label{checkcoordn}
\end{align}
\end{minipage}
\begin{minipage}{0.5\textwidth}
\begin{align}
&\bigsqcap_{0\leq i<n} \cn{ColCoord}_{i}^{(C)}\sqsubseteq \cn{CoordMatch}^{(C)}\label{coordmatchc}\\
&\bigsqcap_{C\in \{R,G,B\}} \cn{CoordMatch}^{(C)}\sqsubseteq \cn{CoordMatch}\label{coordmatch}\\
&\exists \cn{col}. (\cn{Chosen}\sqcap \cn{C}) \sqsubseteq \cn{GoodChoice} \sqcap \cn{C}'\; \;\cn{C}\in \{\cn{R}, \cn{G}, \cn{B}\}\label{choose}\\
&\cn{GoodCol}\sqcap \cn{GoodChoice}\sqcap \cn{CoordMatch} \sqsubseteq \cn{Col}\label{coljust}
\end{align}
\end{minipage}\\

We will keep $\cn{Col}$ has one of the subgoals that we need to satisfy at all the leaves.

To given an intuition of what we do in the following, we recall the overall idea of generating subgoals to satisfy at the leaves of the trees.
We generate a few concepts that we aim to satisfy at all the leaves of some tree. Only later, we wrap up all such requirements expressed by some concept name and send a message back to the root of the tree. Intuitively speaking, we require that at each level of the tree, a concept is true at a node only if it is true at is right-successor and its left successor. Therefore, since we aim for minimal models, all the message passing of concepts has to involve all the leaves.

\emph{Second group of axioms: the variable tree and their truth assignment.}
Observe that we assume a quadratic number of variables (in the size of the input graph).
We produce another tree as above (same axioms, just more depth). 
\begin{align}
                &\cn{Root}_2(r_2)\\
                &\cn{Root}_2\sqsubseteq \cn{L_0}^{(2)}\\
                &\cn{L}_i^{(2)}\sqsubseteq \cn{\exists r}_i^{(2)}. \cn{L}_{i+1}^{(2)}\sqcap \cn{\exists l_i. L}_{i+1}^{(2)}\text{ for all $0\leq i<2n$}
            \end{align}
We now generate a goal concept that is satisfiable iff the tree has $2^{2n}$ leaves. Recall that we already have two assertion $\cn{Left}(o)$ and $\cn{Right}(o')$ which we use to decide which successor a node generates.
\begin{align}
            \cn{L}_{2n}^{(2)}&\sqsubseteq \lambda_{0}^{(2)}\\
            \lambda_j^{(2)}&\sqsubseteq \exists \cn{pick}\\
            \lambda_j^{(2)}\sqcap \exists \cn{pick}. \cn{Left}&\sqsubseteq \exists \cn{l}_j^{(2)}. \lambda_{j+1,l}^{(2)}\\
            \lambda_j^{(2)}\sqcap \exists \cn{pick}. \cn{Right}&\sqsubseteq \exists \cn{r}_j^{(2)}. \lambda_{j+1,r}^{(2)}\\
            \lambda_{j+1,l}^{(2)}\sqcap \lambda_{j+1,r}^{(2)}&\sqsubseteq \lambda_{j+1}^{(2)} \text{ for all }1\leq j<2n\\
            \lambda_{2n,l}^{(1)}\sqcap\lambda_{2n,l}^{(2)}&\sqsubseteq \cn{Tree}_2
\end{align}
We can also produce the bit encodings of each vertex using special concepts $\cn{B}_i^{(2)}$ and $\Bar{\cn{B}_i}^{(2)}$, with $0\leq i<2n$. In particular, each element in $\cn{L}_{2n}^{(2)}$ corresponds to a unique variable and each variable is encoded in binary exactly one time. 

\begin{minipage}{0.45\textwidth}
\begin{align}
\exists \cn{s}_{2n-1}^{(2)}.\cn{Tree}_2\sqsubseteq \bar{\cn{B}}_0^{(2)}\label{bittwo}\\
\exists \cn{d}_{2n-1}^{(2)}.\cn{Tree}_2\sqsubseteq \cn{B}_0^{(1)}\\
\exists \cn{s}_{j}^{(2)}. \bar{\cn{B}}_j^{(2)}\sqsubseteq \bar{\cn{B}}_j^{(2)} \\
\exists \cn{s}_{j}^{(2)}. {\cn{B}}_j^{(2)}\sqsubseteq {\cn{B}}_j^{(2)}
\end{align}
\end{minipage}
\begin{minipage}{0.45\textwidth}
\begin{align}
\exists \cn{d}_{j}^{(2)}. \bar{\cn{B}}_j^{(2)}\sqsubseteq \bar{\cn{B}}_i^{(2)} \\
\exists \cn{d}_{j}^{(2)}. {\cn{B}}_j^{(2)}\sqsubseteq {\cn{B}}_j^{(2)} \\
\exists \cn{s}_j^{(2)}. \cn{\lambda}_{j+1}^{(2)}\sqsubseteq \cn{\bar{B}}_{j+1}^{(2)}\\
\exists \cn{d}_j^{(2)}. \cn{\lambda}_{j+1}^{(2)}\sqsubseteq \cn{{B}}_{j+1}^{(2)}\label{bitwofinal}
\end{align}
\end{minipage}\\

Each leaf of this second tree corresponds to a variable $v_{i,j}$, with $i,j<2^n$. With the next axioms, we introduce a new goal concept $\cn{Ass}$ (short for assignment). In a minimal model, $\cn{Ass}$ is satisfied at each of the leaves of the tree if all nodes `picked' a truth assignment using the role $\cn{v}$.
\begin{align}
    	\cn{T(v_1) \qquad F(v_2)}\label{tf}\\
    \cn{L}_{2n}^{(2)}\sqsubseteq \exists \cn{v.}\label{pickass}\\
    \cn{\exists v. T\sqsubseteq T' \sqcap Ass}\label{pickedT}\\
    \cn{\exists v. F\sqsubseteq F' \sqcap Ass}\label{pickedF}\\
\end{align}

We keep aside the predicate $\cn{Ass}$ and discuss later how we ensure its satisfaction at all the leaves of the tree rooted in $\cn{r_2}$.

\emph{Third group of axioms: the final tree.}
This is a binary tree of depth $6n+2$. At the leafs of this tree we store: (a)  two bit vectors of length $n$, corresponding to a pair of vertices (a possible edge in the graph), (b) two bit vectors of length $2n$, corresponding to a pair of propositional atoms, and (c) two bits to indicate if the first and the second propositional atoms in the labeling are negated or not.
The tree is produced using the same axiom schema used for the others. Observe that: 
\begin{itemize}
    \item at depth $2n$, we produced all the possible binary coordinates encoding pairs of vertexes. We can make this explicit using the concepts $\cn{V}_i$, $\Bar{\cn{V}}_i$, $\cn{U}_i$ and $\Bar{\cn{U}}_i$.
    \item when we continue, at depth $4n$ we produced also the coordinates encoding pairs of variables that can be possibly assigned to possible edges. We can use for this other special concepts $\cn{X}_j$, $\Bar{\cn{X}}_i$, $\cn{Y}_j$ and $\Bar{\cn{Y}}_j$.
    \item after two  more steps, we use the concepts $\cn{P}_1, \Bar{\cn{P}}_1$ and $\cn{P}_2, \Bar{\cn{P}}_2$. 
\end{itemize}
In particular, the concepts $\cn{P}_i$ stand for `variable $i$ occurs positively', for $i\in \{1,2\}$. While the concepts $\cn{\bar{P}}_i$ stand for `variable $i$ occurs negatively', for $i\in \{1,2\}$. In a nutshell, they correspond to the polarities $\sigma^i$ in the tuple $(u,v, x,y, \sigma^1,\sigma^2)$.

First we produce the tree, similarly to the previous two cases.

\begin{align}
                &\cn{Root}_3(r_3)\\
                &\cn{Root}_3\sqsubseteq \cn{L_0}^{(3)}\\
                &\cn{L}_i^{(3)}\sqsubseteq \cn{\exists r}_i^{(3)}. \cn{L}_{i+1}^{(3)}\sqcap \cn{\exists l_i. L}_{i+1}^{(3)}\text{ for all $0\leq i< 6n+2$}
            \end{align}
We now generate a goal concept that is satisfiable iff the tree has $2^{6n+3}$ leaves.
\begin{align*}
            \cn{L}_{n}^{(3)}&\sqsubseteq \lambda_{0}^{(3)}\\
            \lambda_j^{(3)}&\sqsubseteq \exists \cn{pick}\\
            \lambda_j^{(3)}\sqcap \exists \cn{pick}. \cn{Left}&\sqsubseteq \exists \cn{s}_j^{(3)}. \lambda_{j+1,l}^{(3)}\\
            \lambda_j^{(3)}\sqcap \exists \cn{pick}. \cn{Right}&\sqsubseteq \exists \cn{d}_j^{(3)}. \lambda_{j+1,r}^{(3)}\\
            \lambda_{j+1,s}^{(3)}\sqcap \lambda_{j+1,d}^{(1)}&\sqsubseteq \lambda_{j+1}^{(3)} \text{ for all }0\leq j\leq n\\
            \lambda_{n,s}^{(3)}\sqcap\lambda_{n,d}^{(3)}&\sqsubseteq \cn{Tree}_3
\end{align*}
We can also produce the bit encodings of each vertex using special concepts $\cn{B}_i^{(3)}$ and $\cn{\bar{B}}_i^{(3)}$, with $0\leq i< 6n+2$. In particular, each element in $L_{6n+2}^{(3)}$ corresponds to a unique vertex of the graphs and all vertexes are represented exactly once.

\begin{minipage}{0.5\textwidth}
\begin{align}
\exists \cn{s}_{6n+2}^{(3)}.\cn{Tree}_3\sqsubseteq \bar{\cn{B}}_0^{(3)}\label{bitthree}\\
\exists \cn{d}_{6n+2}^{(3)}.\cn{Tree}_3\sqsubseteq \cn{B}_0^{(3)}\\
\exists \cn{s}_{j}^{(3)}. \bar{\cn{B}}_j^{(3)}\sqsubseteq \bar{\cn{B}}_j^{(3)} \\
\exists \cn{s}_{j}^{(3)}. {\cn{B}}_j^{(3)}\sqsubseteq {\cn{B}}_j^{(3)} 
\end{align}
\end{minipage}
\begin{minipage}{0.5\textwidth}
\begin{align}
\exists \cn{d}_{j}^{(3)}. \bar{\cn{B}}_j^{(3)}\sqsubseteq \bar{\cn{B}}_i^{(3)} \\
\exists \cn{d}_{j}^{(3)}. {\cn{B}}_j^{(3)}\sqsubseteq {\cn{B}}_j^{(3)} \\
\exists \cn{s}_j^{(3)}. \cn{\lambda}_{j+1}^{(3)}\sqsubseteq \cn{\bar{B}}_{j+1}^{(3)}\\
\exists \cn{d}_j^{(3)}. \cn{\lambda}_{j+1}^{(3)}\sqsubseteq \cn{{B}}_{j+1}^{(3)}\label{bitthreefinal}
\end{align}
\end{minipage}

For the sake of making the next axioms more understandable, we split the bit encoding given by $\cn{B}_i^{(3)}$ and $\cn{\bar{B}}_i^{(3)}$ to highlight which part encodes the pair of vertexes, which part encodes the variables and which part expresses the polarities. 
\begin{align*}
\cn{B}_i^{(3)}\sqsubseteq \cn{U}_i \qquad \cn{\bar{B}}_i^{(3)} &\sqsubseteq \cn{\bar{U}_i}\;0\leq i <n\\
\cn{B}_i^{(3)}\sqsubseteq \cn{V}_j \qquad \cn{\bar{B}}_i^{(3)} &\sqsubseteq \cn{\bar{V}_j}\;n\leq i <2n, j=i-n\\
\cn{B}_i^{(3)}\sqsubseteq \cn{X}_j \qquad \cn{\bar{B}}_i^{(3)} &\sqsubseteq \cn{\bar{X}_j}\;2n\leq i <4n, j=i-2n\\
\cn{B}_i^{(3)}\sqsubseteq \cn{Y}_j \qquad \cn{\bar{B}}_i^{(3)} &\sqsubseteq \cn{\bar{Y}_j}\;4n\leq i <6n, j=i-4n\\
\cn{B}_{6n}^{(3)}\sqsubseteq \cn{P_1}\qquad \cn{\bar{B}}_{6n}^{(3)} &\sqsubseteq \cn{{\bar{P}}_1}\\
\cn{B}_{6n+1}^{(3)}\sqsubseteq \cn{P_2}\qquad \cn{\bar{B}}_{6n+1}^{(3)} &\sqsubseteq \cn{{\bar{P}}_2}
\end{align*}
Our first step is to assign each pair $(u,v)$ represented at one leaf of the third tree, with two leaves in the first tree.
To do so, we generate yet another goal concept that we desire to satisfy at all the leaves of the third tree. The satisfaction of such concept, together with ensuring the connection with the first tree, also ensures that the pair $(u,v)$ encoded in at leaves of the third tree matches the two leaves in the first tree.

\begin{align}
    \leaft \sqsubseteq \cn{\exists p_1^{(1)} \sqcap \exists p_2^{(1)}}\label{threetoone1}\\
    \exists \cn{p_1^{(1)}.} \cn{L}_{n}^{(1)} \sqsubseteq \cn{C_1^{(1)}}\label{threetoone2}\\
    \exists \cn{p_2^{(1)}. }\cn{L}_{n}^{(1)}\sqsubseteq \cn{C_2}^{(1)}\label{threetoone3}\\
    \cn{C_1^{(1)} \sqcap C_2^{(1)} \sqsubseteq C^{(1)}}\label{threetoone4}
\end{align}

Intuitively speaking, in a minimal model $\cn{C^{(1)}}$ is satisfied at all the leaves of a tree rooted in $r_2$ if each node in $\leaft$ connects with the first tree via $\cn{p_1}^{(1)}$ and $\cn{p_2}^{(1)}$.

We now need to check that coordinates are fine. This can be done by importing the coordinates from first tree and checking if the match the encoded pair $(u,v)$. 

\begin{minipage}{0.45\textwidth}
\begin{align}
    \exists \cn{p_1^{(1)}.} \cn{B}_i^{(1)} \sqcap \cn{V}_i &\sqsubseteq \cn{Ok}_i^1\label{comparex1}\\
    \exists \cn{p_1^{(1)}.} \cn{\bar{B}}_i^{(1)} \sqcap \cn{\bar{V}}_i &\sqsubseteq \cn{Ok}_i^1\\
    \bigsqcap_{i=0,\cdots, n-1} \cn{Ok}_i^1&\sqsubseteq \cn{Ok}^1
\end{align}
\end{minipage}
\begin{minipage}{0.45\textwidth}
\begin{align}
    \exists \cn{p_2^{(1)}}. \cn{B}_i^{(1)} \sqcap \cn{U}_i &\sqsubseteq \cn{Ok}_i^2\\
    \exists \cn{p_2^{(1)}}. \cn{\bar{B}}_i^{(1)} \sqcap \cn{\bar{U}}_i &\sqsubseteq \cn{Ok}_i^2\\
    \bigsqcap_{i=0, \cdots, n-1} \cn{Ok}_i^2&\sqsubseteq Ok^2
\end{align} 
\end{minipage}
\begin{align}
    \cn{Ok^1\sqcap Ok^2}& \sqsubseteq \cn{Ok}_{(u,v)}\label{compared}
\end{align}
In the following we require the satisfaction of $\cn{C}^{(1)}\sqcap \cn{Ok}_{(u,v)}$ at all the leaves.
We can do exactly the same to connect the leaves with the leaves of the second tree and check that coordinates are matching.

\begin{align}
    \leaft &\sqsubseteq \cn{\exists p_1^{(2)} \sqcap \exists p_2^{(2)}}\label{threetotwo1}\\
    \exists \cn{p_1.^{(2)}} \cn{L}_{2n}^{(2)} &\sqsubseteq \cn{C_1^{(2)}}\\
    \exists \cn{p_2.^{(2)} }\cn{L}_{2n}^{(2)} &\sqsubseteq \cn{C_2}^{(2)}\\
    \cn{C_1^{(2)} \sqcap C_2^{(2)}} &\sqsubseteq \cn{C^{(2)}}\label{threetotwo4}
\end{align}
We now need to check that coordinates are fine. This can be done by importing the coordinates from first tree. 

\begin{minipage}{0.5\textwidth}
\begin{align}
    \exists \cn{p_1^{(2)}.} \cn{B}_i^{(2)} \sqcap \cn{X}_i \sqsubseteq \cn{Bit}_i^1\label{comparevalues1}\\
    \exists \cn{p_1^{(2)}.} \cn{\bar{B}}_i^{(2)} \sqcap \cn{\bar{X}}_i \sqsubseteq \cn{Bit}_i^1\\
    \bigsqcap_{i=0,\cdots, 2n-1} \cn{Bit}_i^1\sqsubseteq \cn{Var}^1
\end{align}
\end{minipage}
\begin{minipage}{0.5\textwidth}
\begin{align}
    \exists \cn{p_2^{(1)}}. \cn{B}_i^{(2)} \sqcap \cn{Y}_i \sqsubseteq \cn{Bit}_i^2\\
    \exists \cn{p_2^{(1)}}. \cn{\bar{B}}_i^{(2)} \sqcap \cn{\bar{Y}}_i \sqsubseteq \cn{Bit}_i^2\\
    \bigsqcap_{i=0, \cdots, 2n-1} \cn{Bit}_i^2\sqsubseteq \cn{Var}^2
\end{align}
\end{minipage}

\begin{align}
    \cn{Var^1\sqcap Var^2 \sqsubseteq Ok}_{(x,y)}\label{comparedvar}
\end{align} 
We will ensure that $\cn{C}^{(2)}\sqcap \cn{Ok}_{(x,y)}$ is true at each leaf.
Once such connections among trees are established, we can import the color assignment and the truth value assignment.

\begin{align}
    \exists \cn{p_1^{(1)}. C \sqsubseteq C_1}\text{ for all }\cn{C\in \{R,G,B\}}\label{importcolor1}\\
    \exists \cn{p_2^{(1)}. C \sqsubseteq C_2}\text{ for all }\cn{C\in \{R,G,B\}}\label{importcolor2}
\end{align}
Likewise we copy the truth values from the second tree. 
\begin{align}
    \exists \cn{p_1^{(2)}. T' \sqsubseteq True_1}\label{importval1}\\
    \exists \cn{p_1^{(2)}. F' \sqsubseteq False_1}\label{importval2}\\
    \exists \cn{p_2^{(2)}. T' \sqsubseteq True_2}\label{importval3}\\
    \exists \cn{p_2^{(2)}. F'\sqsubseteq False_2}\label{importval4}
\end{align}

To summarize, if all the goal concepts that we have produced are satisfied at the corresponding leaves, each leaf in the third tree has two colors assigned, and two truth values. Such assignment is well defined. If a vertex $u'$ occurs more times in the tuples $(u,v,x,y, \sigma_1,\sigma_2)$, its color is always the same. Indeed, all the occurrences of $u'$ will force the corresponding leaf in the third tree to point to the same node in the first tree (because coordinates have to match!). 

We can now mark leaves with a special concept $\cn{IsEdge}$ to recognize that the combination $(u,v,x,y, \sigma^1, \sigma^2)$ encoded in the leaf is an edge in $G$. 

For each circuit $c$ and each gate $G$ in $c$, we have a pair of concept names $\cn{Val}_{G,c}^0$ and $\cn{Val}_{G,c}^{1}$ intuitively corresponding to the output of the gate $G$. We then add the following axioms:
\begin{small}
\begin{align}
\cn{V}_i &\sqsubseteq \cn{Val}_{G,c}^1&\qquad \text{ if }G\text{ is the $i$-th input gate}\label{firstcircuit}\\
&\cn{\bar{V}}_i \sqsubseteq \cn{Val}_{G,c}^0&\qquad \text{ if }G\text{ is the $i$-th input gate}\\
&\cn{U}_i \sqsubseteq \cn{Val}_{G,c}^1&\qquad \text{ if }G\text{ is the $i+2n$-th input gate}\\
&\cn{\bar{U}}_i \sqsubseteq \cn{Val}_{G,c}^1&\qquad \text{ if }G\text{ is the $i+2n$-th input gate}\\
&\cn{Val}_{G_1,c}^1\sqcap \cn{Val}_{G_2,c}^1\sqsubseteq \cn{Val}_{G,c}^1&\qquad \text{ if $G$ is an AND-gate with inputs $G_1$ and $G_2$}\\
&\cn{Val}_{G_1,c}^0\sqcup \cn{Val}_{G_2,c}^0\sqsubseteq \cn{Val}_{G,c}^0&\qquad \text{ if $G$ is an AND-gate with inputs $G_1$ and $G_2$}\\
&\cn{Val}_{G_1,c}^1\sqcup \cn{Val}_{G_2,c}^1\sqsubseteq \cn{Val}_{G,c}^1&\qquad \text{ if $G$ is an OR-gate with inputs $G_1$ and $G_2$}\\
&\cn{Val}_{G_1,c}^0\sqcap \cn{Val}_{G_2,c}^0\sqsubseteq \cn{Val}_{G,c}^0&\qquad \text{ if $G$ is an OR-gate with inputs $G_1$ and $G_2$}\\
&\cn{Val}_{G_1,c}^1\sqsubseteq \cn{Val}_{G,c}^0&\qquad \text{ if $G$ is a NOT-gate with input $G_1$}\\
&\cn{Val}_{G_1,c}^0\sqsubseteq \cn{Val}_{G,c}^1&\qquad \text{ if $G$ is a NOT-gate with input $G_1$}
\end{align}
\end{small}
For $c=c_E$, assuming that $G^o$ is the output gate, we add the following:
\begin{align}
\cn{Val}_{G^o,{c_E}}^1 \sqsubseteq \cn{Edge}
\end{align}
For the circuits $c_\sigma^{(1)}$ and $c_\sigma^{(2)}$, given the output gate $G^o$, we add the following:
\begin{align}
\cn{Edge} \sqcap \cn{Val}_{G^o, c_\sigma^{(1)}}^1&\sqsubseteq \cn{Pos_1}\\
\cn{Edge} \sqcap \cn{Val}_{G^o, c_\sigma^{(1)}}^0&\sqsubseteq \cn{{Neg}_1}\\
\cn{Edge} \sqcap \cn{Val}_{G^o, c_\sigma^{(2)}}^1&\sqsubseteq \cn{{Pos}_2}\\
\cn{Edge} \sqcap \cn{Val}_{G^o, c_\sigma^{(2)}}^0&\sqsubseteq \cn{{Neg}_2}
\end{align}
For each circuit $c_{j}^{(i)}$, with $i=1,2$ given the output gate $G^o$, we add the following:
\begin{align}
\cn{Edge} \sqcap \cn{Val}_{G^o, c_{j}^{(i)}}^1\sqsubseteq \cn{X}_j^c\\
\cn{Edge} \sqcap \cn{Val}_{G^o, c_{j}^{(i)}}^0\sqsubseteq \cn{\bar{X}}_j^c
\end{align}
For each circuit $c_{j}^{(i)}$, with $i=3,4$ given the output gate $G^o$, we add the following:
\begin{align}
\cn{Edge} \sqcap \cn{Val}_{G^o, c_{j}^{(i)}}^1\sqsubseteq \cn{Y}_j^c\\
\cn{Edge} \sqcap \cn{Val}_{G^o, c_{j}^{(i)}}^0\sqsubseteq \cn{\bar{Y}}_j^c
\end{align}
We now compare each $(x,y,u,v,\sigma^1, \sigma^2)$ encoded in the leaves with the output predicates given by the circuits.
\begin{align}
\cn{X_j \sqcap X_j}^c &\sqsubseteq \cn{Comp}_{j,x}\\
\cn{\bar{X}_j \sqcap \bar{X}_j}^c \sqsubseteq\cn{Comp}_{j,x}\\
\bigsqcap_{j=0}^{2n-1}\cn{Comp}_{j,x} &\sqsubseteq \cn{Comp}_x\\
\cn{Y_j \sqcap Y_j}^c &\sqsubseteq \cn{Comp}_{j,y}\\
\cn{\bar{Y}_j \sqcap \bar{Y}_j}^c &\sqsubseteq\cn{Comp}_{j,y}\\
\bigsqcap_{j=0}^{2n-1}\cn{Comp}_{j,y} &\sqsubseteq \cn{Comp}_y\\
\cn{Comp}_x\sqcap \cn{Comp}_y &\sqsubseteq \cn{Lab}
\end{align}
We do the same for the polarities of the variable.
\begin{align}
\cn{Pos}_1 \sqcap \cn{P}_1 &\sqsubseteq \cn{Pol}_1\\
\cn{Neg}_1 \sqcap \cn{\bar{P}_1} &\sqsubseteq \cn{Pol}_1\\
\cn{Pos}_2 \sqcap \cn{P}_2 &\sqsubseteq \cn{Pol}_2\\
\cn{Neg}_2 \sqcap \cn{\bar{P}_2} &\sqsubseteq \cn{Pol}_2\\
\cn{Pol}_1 \sqcap \cn{Pol}_2 &\sqsubseteq \cn{Pol}
\end{align}
We now mark nodes that correspond to edges in the graph: 
\begin{align}
\cn{Edge} \sqcap \cn{Lab} \sqcap \cn{Pol}\sqsubseteq \cn{IsEdge}\label{finalcircuit}
\end{align}

We now mark with a concept $\cn{TrueEdge}$ the edges that are true under the assignment given by the second tree.
\begin{align}
    \cn{IsEdge}\sqcap \cn{Neg}_1 \sqcap \cn{False}_1\sqsubseteq \cn{TrueEdge_1}\label{truedge1}\\
    \cn{IsEdge}\sqcap \cn{Pos}_1\sqcap \cn{True}_1\sqsubseteq \cn{TrueEdge_1}\\
    \cn{IsEdge}\sqcap \cn{Neg}_2 \sqcap \cn{False}_2\sqsubseteq \cn{TrueEdge_2}\\
    \cn{IsEdge}\sqcap \cn{Pos}_2 \sqcap \cn{True}_2 \sqsubseteq \cn{TrueEdge_2}\label{truedge2}\\
    \cn{TrueEdge_1} \sqcap \cn{TrueEdge_2}\sqsubseteq \cn{TrueEdge}\label{truedgefinal}
\end{align}
\begin{align}
    \cn{TrueEdge}\sqcap \cn{R_1\sqcap R_2\sqsubseteq Flood}\label{detectmistake}\\
    \cn{TrueEdge}\sqcap \cn{G_1\sqcap G_2\sqsubseteq Flood}\\
    \cn{TrueEdge}\sqcap \cn{B_1\sqcap B_2\sqsubseteq Flood}\label{detectmistake1}\\
    \cn{Flood}\sqsubseteq \cn{\bigsqcap_{i=1,2} R_i\sqcap G_i\sqcap B_i}
\end{align}
We propagate $\cn{Flood}$ to the root.
\begin{align}
\exists \cn{r}_i^{(3)}. \cn{Flood} \sqcup \exists \cn{l}_i^{(3)}. \cn{Flood}\sqsubseteq \cn{Flood}\label{spreadflood}
\end{align}
We now connect all the leaves to $\cn{Root}_3$ as follows:
\begin{align}
\leaft\sqsubseteq \exists \cn{root.}\label{toroot}\\
\exists \cn{root.} \cn{Root_3} \sqsubseteq \cn{Rooted}\label{Rooted}
\end{align}
By requiring that $\cn{Rooted}$ is true at all the leaves, we can use the structure to flood all the leaves.
\begin{align}
\exists \cn{root.}\cn{Flood}\sqsubseteq \cn{Flood}\label{speadflood2}
\end{align}

With the following inclusions we produce a further subgoal $\cn{Success}$ that we want to satisfy at each element in $\cn{L}_n^{(C)}$ (the leaves of the three), for all $C\in \{R,G,B\}$. 
\begin{align}
    \cn{L}_{n}^{(C)}\sqsubseteq \exists \cn{seeTree}\\
    \exists \cn{seeTree}. \leaft\sqsubseteq \cn{Success}\label{success}
\end{align}
We will require that $\cn{Success}$ is true at all the leaves of the color trees. In a minimal model, the latter requirement is satisfied iff each element in $\cn{L}_n^{(C)}$ has a $\cn{seeTree}$-connection to some element in $\cn{L}_{6n+2}^{(3)}$. 
We use such connections to detect if the predicate $\cn{Flood}$ is satisfied. In such situation, we mark all the nodes in $\cn{L}_n^{(C)}$ with $\cn{Chosen}$. Intuitively speaking, saturating the color trees, we have that for each $\cn{L}_n^{(1)}$, all the colors are chosen.
\begin{align}
    \exists \cn{seeTree. Flood}\sqsubseteq \cn{Chosen}\label{choiceflood}
\end{align}
We collected a large family of concepts that ideally we want to be true at all the leaves of one of the constructed trees. To do so, we first propagate them to the roots. We achieve the latter via the following axioms
\begin{align}
\cn{Success} \sqcap \cn{C}&\sqsubseteq \cn{Goal}_C\;\; C\in \{R,G,B\}\label{goalc}\\
\cn{Col} &\sqsubseteq \cn{Goal_1} \label{goalone}\\
\cn{Ass}&\sqsubseteq \cn{Goal_2}\label{goaltwo}\\
\cn{C^{(1)}\sqcap}\cn{Ok}_{(u,v)}\sqcap \cn{C^{(2)}}\sqcap \cn{Ok}_{(x,y)}\sqcap \cn{Rooted} &\sqsubseteq \cn{Goal_3}\label{goalthree}\\
\exists \cn{r}_i^{(j)}. \cn{Goal}_j \sqcap \exists\cn{l}_i^{(j)}. \cn{Goal_j}&\sqsubseteq \cn{Goal}_j\;\;j=1,2,3\label{propgoal}\\
\exists \cn{r}_i^{(C)}. \cn{Goal}_C \sqcap \exists \cn{l}_i^{(C)}. \cn{Goal}_C&\sqsubseteq \cn{Goal}_C\;\;C\in \{R,G,B\}\label{propgoalc}
\end{align}
We now ask that all such goal true at different domain elements generate a `done with this task' concept.
\begin{align}
	\cn{Root}_C \sqcap \cn{Tree}_C \sqcap \cn{Goal}_C\sqsubseteq \exists \cn{done}_C. \cn{D}_C\;\;C\in \{R,G,B\}\\
    \cn{Root_1\sqcap Tree_1 \sqcap {Goal}_1} \sqsubseteq \exists \cn{done_1. D_1}\label{fromroo1}\\
    \cn{Root_2\sqcap Tree_2\sqcap Goal_2} \sqsubseteq \exists \cn{done_2. D_2}\label{fromroot2}\\
    \cn{Root_3\sqcap Tree_3 \sqcap Goal_3 \sqcap Flood}\sqsubseteq \exists \cn{done_3. D_2}\label{fromroot3}\\
    (\bigsqcap_{C\in \{R,G,B\}} \cn{D}_C)\sqcap\cn{ D_1\sqcap D_2\sqcap D_3\sqsubseteq Final\_Goal}\label{final}
\end{align}

\begin{lemma}\label{claim:cert3col}
    $G$ is a yes-instance of \coccol iff $\cn{Final\_Goal}$ is satisfiable in a minimal model of $\Kmc$.
\end{lemma}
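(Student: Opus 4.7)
I propose to prove both directions of the equivalence following the template of the proof of Lemma~\ref{claim:nexp}, extended with the flooding mechanism already illustrated in Example~\ref{example:flooding-technique}. Two sub-proofs are needed: a construction-based forward direction, and a contradiction-based backward direction that exploits minimality.

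For the forward direction, suppose there exists a truth assignment $t$ under which the induced subgraph $G_t$ is not $3$-colorable. I would construct a minimal model $\I$ explicitly. Each of the six binary trees is instantiated as a complete tree of the prescribed depth, with the bit-code concepts $\cn{B}_i^{(\cdot)}, \cn{\bar{B}}_i^{(\cdot)}$ fixing a bijection between its leaves and the intended addresses. The truth values picked on the leaves of the variable tree realize $t$ through the role $\cn{v}$, fulfilling the $\cn{Ass}$ subgoal. Because $G_t$ is not $3$-colorable, any attempt at a single-color assignment on the leaves of $T_R, T_G, T_B$ forces a monochromatic $\cn{TrueEdge}$ somewhere in $T_3$, and the axioms from \eqref{detectmistake} to \eqref{detectmistake1} then impose $\cn{Flood}$ on that leaf; so one must flood the whole structure from the outset. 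Propagation via \eqref{spreadflood} takes $\cn{Flood}$ up to $\cn{Root}_3$, and then \eqref{speadflood2} together with the $\cn{root}$/$\cn{seeTree}$ links pushes it back out to every leaf of $T_3$ and every leaf of every color tree, forcing $\cn{Chosen}$ and thus $\cn{GoodChoice}$, $\cn{Col}$, $\cn{Success}$, and eventually each $\cn{Goal}_\star$ up to its respective root, yielding $\cn{Final\_Goal}(a)$. Minimality is then verified axiom by axiom; the key observation is that no strict subset of $\I$ can drop $\cn{Flood}$ while remaining a model, precisely because no single-color $3$-coloring of $G_t$ avoids a bad edge.

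For the backward direction, let $\I$ be a minimal model with $\cn{Final\_Goal}^\I \neq \emptyset$. Mimicking the claims proved in the body of Lemma~\ref{claim:nexp}, I would first establish that each tree is fully expanded with exactly the expected number of leaves, that the bit-code concepts determine a bijection between leaves and addresses, and that the inter-tree roles $\cn{col}, \cn{p}_j^{(i)}, \cn{seeTree}, \cn{root}$ realize the intended coordinate-matched connections enforced by the subgoals $\cn{Col}$, $\cn{C}^{(1)}\sqcap\cn{Ok}_{(u,v)}$, $\cn{C}^{(2)}\sqcap\cn{Ok}_{(x,y)}$, $\cn{Rooted}$, $\cn{Success}$. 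From the choices on the leaves of $T_2$ extract a truth assignment $t$, and from the gate axioms \eqref{firstcircuit}--\eqref{finalcircuit} confirm that $\cn{IsEdge}$ and $\cn{TrueEdge}$ mark exactly the edges of $G$ and of $G_t$, respectively. To conclude that $G_t$ is not $3$-colorable, argue by contradiction: if there were a proper $3$-coloring $\chi$, one could define $\I'\subsetneq\I$ by keeping for each vertex leaf of $T_1$ only the $\chi$-chosen $\cn{col}$-successor as $\cn{Chosen}$, erasing the imported $\cn{R}_i, \cn{G}_i, \cn{B}_i$ at the leaves of $T_3$ that no longer match, and dropping $\cn{Flood}$ along with all of its back-propagated occurrences; checking that $\I'$ remains a model of every axiom then contradicts the minimality of $\I$.

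The main obstacle I anticipate is this last minimality argument. The $\cn{Flood}$ predicate is propagated both upward inside $T_3$ (via $\cn{r}_i^{(3)}, \cn{l}_i^{(3)}$) and outward to every color tree (via $\cn{seeTree}$ and the auxiliary $\cn{root}$ pointer), and it feeds back into $\cn{Chosen}$ through \eqref{choiceflood}, which is exactly the cyclic dependency that makes $\K$ weakly acyclic but not strongly so. Consequently, defining the candidate $\I'$ demands a simultaneous, coherent removal of all these interlocking facts, while the goal-propagation axioms \eqref{goaltoroot2}, \eqref{propgoal} and \eqref{propgoalc} still need to hold on the smaller interpretation. The verification should reduce to a systematic axiom-by-axiom check analogous to the one used for Lemma~\ref{claim:nexp}, with the extra care that each predicate removed on one tree is also removable on every other tree connected to it through the matched coordinates pinned down by the subgoals.
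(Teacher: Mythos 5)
Your proposal follows essentially the same approach as the paper's proof: an explicit model construction for the forward direction (with $\cn{Flood}$ and $\cn{Chosen}$ saturated, and minimality argued by observing that dropping $\cn{Flood}$ would yield a $3$-coloring of $G_t$, which does not exist), and for the backward direction a series of structural claims that pin down the trees and their coordinate-matched interconnections, followed by a contradiction in which a hypothetical $3$-coloring $\chi$ of $G_t$ is used to build a strictly smaller model $\Jmc$ with $\cn{Flood}^\Jmc=\emptyset$. The only minor caveat is that ``mimicking the claims of Lemma~\ref{claim:nexp}'' understates the extra bookkeeping required here (disjointness of the three color trees, uniqueness of the $\cn{col}$-successor per color, and coherent removal of $\cn{Chosen}$, $\cn{C'}$, $\cn{C_1}$, $\cn{C_2}$ across trees), but your last paragraph correctly identifies precisely these interlocking facts as the crux.
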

\begin{proof}[Proof of \ref{claim:cert3col}]

\emph{(Only If).} Assume that $G$ is a yes-instance of \coccol. Let $t$ be a truth assignment, such that the induced subgraph is not 3-colorable. We construct a minimal model $\I$ of $\K$ such that $\cn{Final\_Goal}^{\I}\not =\emptyset$. 
First, we define the domain of $\I$ as follows:
\begin{align*}
\Delta^{\I}&=\{e\}\cup \{o, o', r_i, r_C, r, g, b, v_1, v_2\vert C\in\{R,G,B\},\,i\in \{1,2,3\}\}\cup\\
&\bigcup_{1\leq l\leq n} \{n_{l,k_l}^{(\star)}\vert\, \star\in \{1,R,G,B\}, \,  0\leq k_l\leq 2^l-1\}\cup \\
&\bigcup_{1\leq l\leq 2n} \{n_{l,k_l}^{(2)}\vert\, 0\leq k_l\leq 2^l-1\}\cup\\
&\bigcup_{1\leq l\leq 6n+3} \{n_{l,k_l}^{(3)}\vert\, 0\leq k_l\leq 2^l-1\}.
\end{align*}

We now define the interpretation function as follows.
\begin{align*}
&c^{\I}=c\qquad\text{for all}\quad c\in \{o,o',r_i,r_c,r,g,b,v_1,v_2\vert C\in \{R,G,B\},\,i\in \{1,2,3\}\}\\
&\cn{Root_i}^{\Imc}=\{r_i\}\qquad \{i\in \{1,2,3,R,G,B\}\}\\
&\cn{Left}^{\Imc}=\{o\}\quad\cn{Right}^{\Imc}=\{o'\}\quad  \cn{R_A}^{\Imc}=\{r\} \quad \cn{G_A}^{\Imc}=\{g\}\quad \cn{B_A}^{\Imc}=\{b\}\\
&\cn{T}^{\Imc}=\{v_1\}\qquad \cn{F}^{\Imc}=\{v_2\}
\end{align*}
We construct the interpretation function in a way such that $\Imc$ contains 5 threes rooted in each $r_i$, each of them with $\mathrm{depth}_i$ such that: for $i\in \{1,R,G,B\}$, $\mathit{depth}_i=n\}$; for $i=2$, $\mathit{depth}_i=2n$; and for $i=3$, $\mathit{depth}_i=6n+2$.
\begin{align*}
&(\cn{l}_0^{(i)})^{\Imc}=\{(r_i, n_{1,1}^{(i)})\}\qquad (\cn{r}_0^{(i)})^{\Imc}=\{(r,n_{1,2}^{(i)})\}\\
       & (\cn{l}_j^{(i)})^{\Imc}=\bigcup_{0\leq k\leq 2^{j}-1}\{(n_{j,k}^{(i)}, n_{j+1,k'}^{(i)})\vert k'=2k\} \text{ for all }0\leq j\leq \mathit{depth}_i-1\\
       & (\cn{r}_j^{(i)})^{\Imc}=\bigcup_{0\leq k\leq 2^{j}-1}\{(n_{j,k}^{(i)}, n_{j+1,k'}^{(i)})\vert k'=2k+1\}\text{ for all }0\leq j\leq  \mathit{depth}_i-1\\
       & (\cn{L}_j^{(i)})^{\Imc}=\{n_{j,k}^{(i)}\vert 1\leq k\leq 2^{j}-1\}\text{ for all }0\leq j \leq \mathit{depth}_i\\ 
       &({\cn{\lambda}_j^{(i)}})^{\Imc}=(\cn{L}_{2n-j}^{(i)})^{\Imc}\text{ for all }0\leq j\leq \mathit{depth}_i\\
       		& \cn{pick}^\Imc= \{(n_{j,k}^{(i)},o)\vert\,k\text{is even}\}\cup \{(n_{j,k},o')\vert\,k\text{is odd}\}\\
       &{(\cn{s}_j^{(i)})}^{\Imc}= ((r_{\mathit{depth}_i-(j+1)}^{(i)})^-)^{\Imc}\text{ for all }0\leq j\leq \mathit{depth}_i\\
        &\cn{Tree}_i^{\Imc}=\cn{Root}_i^{\Imc}
\end{align*}

We define the extension of the concepts $\cn{B}_j^{(i)}$ and $\cn{\bar{B}}_j^{(i)}$, with $0\leq j<\mathit{depth}_i-1$, for each $i\in \{1,2,3,R,G,B\}$ as follows:
\begin{align*}
(\cn{B}_j^{(i)})^{\Imc}=\{n_{l,k_l}^{(i)}\vert bit_j(k_l)=1,\,0\leq l\leq \mathit{depth}_i,\,0\leq k_l\leq 2^l-1\}\\
(\cn{\bar{B}}_j^{(i)})^{\Imc}=\{n_{l,k_l}^{(i)}\vert bit_j(k_l)=1,\,0\leq l\leq \mathit{depth}_i,\,0\leq k_l\leq 2^l-1\}\\
\end{align*}

We omit here the definition of the extension function of concepts $\cn{U}_i,\, \cn{\bar{U}}_i,\, \cn{V}_i,\, \cn{\bar{V}}_i$, $\cn{P}_i,\,\cn{\bar{P}}_i$, for all $i$, since they can be defined simply by selecting the corresponding subsets of the extensions of the concepts $\cn{B}_i$ and $\cn{\bar{B}}_i$. Observe that the former concepts serve more as auxiliary predicates in the axioms description and can completely be replace by the concepts $\cn{B}_i$ and $\cn{\bar{B}}_i$ (adjusting the indexes).
\begin{align*}
&\cn{pick\_col}^{\Imc}=\{(n_{n,k}^{(C)}, C)\vert 0\leq k\leq 2^n-1, C\in \{R,G,B\}\}\\
&\cn{C}^{(\Imc)}=\{n_{n,k}^{(C)}\vert 0\leq k\leq 2^n-1\}\\
&\cn{col}^{(\Imc)}=\{(n_{n,k}^{(1)}, n_{n,k}^{(C)}\vert0\leq k\leq 2^n-1\}\\
&\cn{Chosen}^{\Imc}=\{n_{l,k}^{(i)}\vert l=\mathit{depth}_i, 0\leq k\leq 2^l-1, i\in \{R,G,B\}\}\\
&{\cn{C'}}^{\Imc}=C^{\Imc}\cup \{n_{n,k}^{(1)}\vert 0\leq k\leq 2^n-1\}\text{ for all }C\in \{R,G,B\}\\
&\Sigma^{\Imc}= \{n_{n,k}^{(1)}\vert 0\leq k\leq 2^n-1\}\text{ for all }\Sigma\in \{\cn{GoodCol}_C, \cn{GoodCol}, \cn{ColCoord}_i^{(C)}, \cn{CoordMatch}^{(C)}, \cn{CoordMatch}, \cn{GoodChoice}, \cn{Col}\}\\
&\cn{seeTree}^{\Imc}=\{(n_{n,k}^{(C)}, n_{6n+2,0}^{\Imc}\vert 0\leq k\leq 2^n-1,\, C\in \{R,G,B\}\}\\
&\cn{Success}^{\Imc}=\{n_{n,k}^{(C)}\vert 0\leq k\leq 2^n-1,\, C\in \{R,G,B\}\}
\end{align*}
Observe that $\cn{Choosen}$ is placed at every leaf of the trees rooted in $r_C$, for all colors $C\in \{R,G,B\}$.

Analogously as above, we can now assign truth values to the elements $n_{2n, k}^{(2)}$ by `copying' the truth value of the corresponding variable. Let $\pi\colon L_{2n}^{(3)}\rightarrow Var$ be the bijection such that for each $n_{2n,k}^{(3)}$, the concepts $\cn{B}_i^{(2)}$ and $\cn{\bar{B}}_i^{(2)}$ encode in binary the pair $(i,j)$ such that $\pi(n_{2n,k}^{(3)})=v_{i,j}$. 
\begin{align*}
&\cn{v}^{\Imc}=\bigcup_{0\leq k\leq 2^{2n}-1}\{(n_{2n,k}^{(3)}, v_1)\vert t(\pi(n_{2n,k}^{(3)}))=1\} \cup \{\{(n_{2n,k}^{(3)}, v_2)\vert t(\pi(n_{2n,k}^{(3)}))=0\}\\
&\cn{Ass}^{\Imc}=\cn{L}_{2n}^{(2)}\\
&\cn{T'}^{\Imc}=\{n_{2n,k}\vert t(\pi(n_{2n,k}^{(3)}))=1\}\\
&\cn{F'}^{\Imc}=\{n_{2n,k}\vert t(\pi(n_{2n,k}^{(3)}))=0\}
\end{align*}

Observe that each $n_{6n+2,k}^{(3)}$, with $0\leq k\leq 2^{(6n+2)}-1$ with its bits encoded via the concepts $\cn{B}_i^{(3)}$ and $\cn{\bar{B}}_i^{(3)}$ represents a tuple $(u_k,v_k,x_k,y_k, \sigma^1_k, \sigma^2_k)$ with $0\leq u_k,v_k\leq 2^n-1$, $0\leq x_k,y_k<2^{2n}-1$ and $\sigma^i_k\in \{0,1\}$. 

\begin{align*}
&(\cn{p}_1^{(1)})^{\Imc}=\{(n_{6n+2,k}^{(3)}, n_{n,k'}^{(1)})\vert u_k=k'\}\\
&(\cn{p}_2^{(1)})^{\Imc}=\{(n_{6n+2,k}^{(3)}, n_{n,k'}^{(1)})\vert v_k=k'\}\\  
&(\cn{p}_1^{(2)})^{\Imc}=\{(n_{6n+2,k}^{(3)}, n_{n,k'}^{(2)})\vert \pi(n_{n,k'})^{(2)})=x_k\}\\
&(\cn{p}_2^{(2)})^{\Imc}=\{(n_{6n+2,k}^{(3)}, n_{n,k'}^{(2)})\vert \pi(n_{n,k'})^{(2)})=y_k\}\\
&({C}^{(i)})^{\Imc}=(\cn{C}^{(i)})=\cn{L}_{6n+2}^{\Imc}\text{ with }C\in \{\cn{C_1, C_2}\}\text{ and }i=1,2\\
&({\cn{Ok}}_i^{j})^{\Imc}=(\cn{Ok}^{j})=(\cn{Ok}_{(u,v)})^{\Imc}=(\cn{L}_{6n+2}^{(3)})^{\Imc}\text{ with }0\leq i\leq {n}-1\text{ and }j=1,2\\
&({\cn{Bit}}_i^{j})^{\Imc}=(\cn{Var}^{j})=(\cn{Ok}_{(x,y)})^{\Imc}=(\cn{L}_{6n+2}^{(3)})^{\Imc}\text{ with }0\leq i\leq {2n}-1\text{ and }j=1,2\\
&\cn{True}_1^{\Imc}=\{n_{6n+2,k}^{(3)}\vert t(x_k)=1\}\\
&\cn{False}_1^{\Imc}=\{n_{6n+2,k}^{(3)}\vert t(x_k)=0\}\\
&\cn{True}_2^{\Imc}=\{n_{6n+2,k}^{(3)}\vert t(y_k)=1\}\\
&\cn{False}_2^{\Imc}=\{n_{6n+2,k}^{(3)}\vert t(y_k)=0\}\\
&\cn{R}_i^{\Imc}=\cn{G}_i^{\Imc}=\cn{B}_i^{\Imc}=(L_{6n+2}^{(3)})^{\Imc}
\end{align*}
We now define the extensions of the predicates occurring in \eqref{firstcircuit}-\eqref{finalcircuit}. It is easy to observe that the extension of such concepts can be uniquely determined by looking at the `computation' of each circuit $c$ at each input $(u,v)$. The leaves of the tree with root $r_3$ correspond to all possible combinations of pairs of vertices, i.e., the inputs. The extension of involved predicates can be minimally defined by `copying' the behavior of each circuit.
For each circuit $c$:
\begin{align*}
&    {\cn{Val}_{G,c}^0}^{\Imc}=\{n_{6n+2,l}^{(3)}\vert \text{ on the input $(u,v)$ encoded in $n_{6n+2,k}^{(3)}$, the gate $G$ in $c$ outputs $0$}\}\\
&    {\cn{Val}_{G,c}^1}^{\Imc}=\{n_{6n+2,l}^{(3)}\vert \text{ on the input $(u,v)$ encoded in $n_{6n+2,k}^{(3)}$, the gate $G$ in $c$ outputs $1$}\}\\
 &   \cn{Edge}^{\Imc}=\{n_{6n+2,k}^{(3)}\in \cn{Val}^1_{G^o, c_E}\vert 0\leq k<(2^{6n+2}-1)\}\\
&\cn{Pos_1}^{\Imc}=\cn{Edge}^{\Imc}\cap {\cn{Val}_{G_0, c_{\sigma}^{(1)}}^1}^{\Imc}\\
&\cn{Neg_1}^{\Imc}=\cn{Edge}^{\Imc}\cap {\cn{Val}_{G_0, c_{\sigma}^{(1)}}^0}^{\Imc}\\
&\cn{Pos_2}^{\Imc}=\cn{Edge}^{\Imc}\cap {\cn{Val}_{G_0, c_{\sigma}^{(2)}}^1}^{\Imc}\\
&{\cn{Neg_2}}^{\Imc}=\cn{Edge}^{\Imc}\cap {\cn{Val}_{G_0, c_{\sigma}^{(2)}}^0}^{\Imc}\\
&{(\cn{{X}}_j^c)}^{\Imc}=\cn{Edge}^{\Imc}\cap {\cn{Val}_{G_0, c_{\sigma}^{(2)}}^1}^{\Imc}\\
&{(\cn{\bar{X}}_j^c)}^{\Imc}=\cn{Edge}^{\Imc}\cap {\cn{Val}_{G_0, c_{\sigma}^{(2)}}^0}^{\Imc}\\
&{(\cn{Y}_j^c)}^{\Imc}=\cn{Edge}^{\Imc}\cap {\cn{Val}_{G_0, c_{\sigma}^{(2)}}^1}^{\Imc}\\
&{(\cn{\bar{Y}}_j^c)}^{\Imc}=\cn{Edge}^{\Imc}\cap {\cn{Val}_{G_0, c_{\sigma}^{(2)}}^0}^{\Imc}\\
&\cn{Comp}_{j,x}^{\Imc}= (\cn{X}_j^{\Imc} \cap (\cn{X}_j^c)^{\Imc})\cup (\cn{\bar{X}}_j^{\Imc} \cap (\cn{\bar{X}}_j^c)^{\Imc})\\
&\cn{Comp}_{j,y}^{\Imc}= (\cn{X}_j^{\Imc} \cap (\cn{X}_j^c)^{\Imc}) \cup (\cn{\bar{Y}}_j^{\Imc} \cap (\cn{\bar{Y}}_j^c)^{\Imc})\\
&\cn{Comp}_x^{\Imc}= \bigcap_{j=0}^{2n-1} \cn{Comp}_{j,x}^{\Imc}\qquad \cn{Comp}_y^{\Imc}= \bigcap_{j=0}^{2n-1} \cn{Comp}_{j,y}^{\Imc}\\
&\cn{Lab}^{\Imc}=\cn{Comp}_x^{\Imc} \cap \cn{Comp}_x^{\Imc}
\end{align*}
Analogously we can define $\cn{Pol}_1$ and $\cn{Pol}_2$. Let $i=1,2$:
\begin{align*}
 &\cn{Pol_i}^{\Imc}=(\cn{Pos}_i^{\Imc} \cap \cn{P}_i^{\Imc}) \cup (\cn{Neg}_i^{\Imc}   \cap \cn{\bar{P}}_i^{\Imc})\\
 &\cn{Pol}^{\Imc}=\cn{Pol}_1^{\Imc} \cap \cn{Pol}_2^{\Imc}.
\end{align*}
We can now define the extension of $\cn{isEdge}$:
\begin{align*}  \cn{IsEdge}^{\Imc}=\cn{Edge}^{\Imc}\cap \cn{Lab}^{\Imc}\cap \cn{Pol}^{\Imc}
\end{align*}
Observe that the predicates related to the circuits are mimicking the behavior of the circuits.
\begin{align*}
&\cn{TrueEdge_1}^{\Imc}=\cn{IsEdge}^{\Imc} \cap ((\cn{Neg}_1\cap \cn{False}_1)\cup (\cn{Pos}_1\cap \cn{True}_1))\\
&\cn{TrueEdge_2}^{\Imc}=\cn{IsEdge}^{\Imc} \cap ((\cn{Neg}_2\cap \cn{False}_2)\cup (\cn{Pos}_2\cap \cn{True}_2))\\
&\cn{TrueEdge}^{\Imc}=\cn{TrueEdge}_1^{\Imc}\cap \cn{TrueEdge}_2^{\Imc}
\end{align*}
The elements $n_{6n+2,k}^{(3)}$ belonging to $\cn{TrueEdge}$ are exactly those leaves encoding a tuple $(u_k,v_k,x_k,y_k,\sigma^1_k,\sigma^2_k)$ representing labeled edges in $G$ that evaluates to true under the truth assignment $t$.

We are left with connecting all the leaves of the third tree with its root $r_3$.
\begin{align*}
&\cn{root}^{\Imc}=\{(n_{6n+2,k}^{(3)}, r_3)\vert 0\leq k<2^{6n+2}-1\}\\
&\cn{Rooted}^{\Imc}= (\cn{L}_{6n+2}^{(3)})^{\Imc}
\end{align*}
We define the extension of the `flooding concept'.
\begin{align*}
&\cn{Flood}^{\Imc}=\{n_{l,k_l}^{(3)}\vert 0\leq l\leq 6n+2,\,0\leq k\leq 2^l-1\}\\
\end{align*}
We can now define the extension of goal concepts reaching the final goal. Let $\mathit{depth}_\star=n$, with $\star\in\{1,R,G,B\}$, $\mathit{depth}_2=2n$ and $\mathit{depth}_3=6n+2$.
\begin{align*}
&\cn{Goal}_i^{\Imc}=\{n_{l,k}^{(i)}\vert 0\leq l\leq \mathit{depth}_i, 0\leq k<2^l-1\}\text{ for all }i\in \{1,2,3,R,G,B\}\\
&\cn{done}_i^{\Imc}=\{(r_i, e)\}\text{ for all }i\in \{1,2,3,R,G,B\}\\
&\cn{Final\_Goal}^{\Imc}=\cn{D}_i^{\Imc}=\{e\}\text{ for all }i\in \{1,2,3,R,G,B\}
\end{align*}

The constructed interpretation $\I$ is a model of $\K$. The latter can be checked rule by rule. Furthermore, the elements $n_{6n+3,k}^{(3)}$ with their set of satisfied concept names encode all the edges of $G$ that are true under the truth assignment $t$. It easy to observe that:
\begin{itemize}
\item each leaf in the first tree is connected exactly with one element in $\cn{L}_n^{(C)}$, for each $C\in \{R,G,B\}$ that furthermore matches its coordinates;
\item all elements in $\cn{L}_n^{(C)}$ satisfy a unique concept $\cn{C}$, with $\cn{C}\in\{\cn{R,G,B}\}$;
\item each leaf in the first tree sees at least one node labeled with $\cn{Chosen}$ in each of the \emph{color trees};
\item each leaf in the second tree has a unique truth assignment encoded by the concept $T'$ and $F'$, defined by $t$;
\item since the membership in $T'$ and $F'$ is uniquely determined by $t$, each leaf in the second tree is either in $T'$ or in $F'$;
\item each leaf in the color tree is connected to exactly one node in $\cn{L}_{6n+2}^{(3)}$;
\item each leaf in the third tree is connected to the root $r_3$ via the role $\cn{root}$.
\end{itemize}
Recalling that each element in $L_{6n+2}^{(3)}$ encodes a tuple $(u,v,x,y, \sigma^1, \sigma^2)$, it is easy to observe that:
\begin{itemize}
\item each leaf in the third tree has a unique $\cn{p}_1^{(1)}$-successor and a unique $\cn{p}_2^{(2)}$-successor in the leaves of the first tree that are matching his $(u,v)$ coordinates
\item each leaf in the third tree has a unique $\cn{p}_2^{(1)}$-successor and a unique $\cn{p}_2^{(2)}$-successor in the leaves of the second tree that are matching his $(x,y)$ coordinates.
\end{itemize}
For the circuit related predicates, since their extension in uniquely determined by the outputs of the circuits, it is easy to verify that they cannot be further minimized. 

From the previous observations, we have that the extensions of all roles are minimal. Furthermore, from the previous observations, it is easi check that all concepts $\cn{Root}_i$, $\cn{Tree}_i$, $\cn{Goal}_i$, with $i\in \{1,2,3, R,G,B\}$, $\cn{C}, \cn{Success}$, $\cn{Ass}$, $\cn{C}^{(j)}$, $\cn{Ok}_{(x,y)}$, $\cn{Ok}_{(u,v)}$, with $j\in \{1,2\}$, and $\cn{Rooted}$ are minimal.
We argue that also the concept $\cn{Rooted}$ and $\cn{Flood}$ also have a minimal extension, immediately deriving that $\I$ is a minimal model of $\K$ where $\cn{Final\_Goal}$ is satisfied. 

Observe that to minimize $\cn{Chosen}$, one first has to minimized $\cn{Flood}$ everywhere in the third tree (due to \eqref{Rooted}-\eqref{spreadflood}-\eqref{speadflood2}-\eqref{choiceflood}). 
Assume there exists $\Jmc\subseteq \Imc$, then $\cn{Flood}^{\Jmc}=\emptyset$. W.l.o.g. we can assume that $\cn{Chosen}^{\Jmc}$ is minimal, i.e. there exists $\J'$ such that $\J'\subseteq \I$ and $\cn{Chosen}^{\Jmc'}\subset \cn{Chosen}^{\Jmc}$. 
From $\Jmc$ is easy to derive a color assignment $\chi$ for $G$. Let $\chi(v)=C$ iff given element $n_{n,k}^{(1)}$ such that $k=v$, there exists $d\in (\cn{L}_{n}^{(C)})^{\Jmc}$ such that $(n_{n,k}^{(1)}, d)\in \cn{col}^{\Jmc}$ and $d\in \cn{Chosen}^{\Jmc}$. Since $\cn{Flood}^{\Jmc}=\emptyset$, it is never the case that, in the sub-graph induced by $t$, there are two nodes $(u,v)$ such that $\chi(u)=\chi(v)$. Furthermore, $\chi$ is well-defined since each $n_{n,k}^{(1)}$ has a unique $d\in \cn{L}_{n}^{(C)}$ for each $C\in \{R,G,B\}$ and $\cn{Chosen}^{\Jmc}$ is minimal. We derived a contradiction. 

\emph{(If).}
Before proceeding in the details of the proof, let us remark that we follow the same proof strategy of Theorem \ref{nexp:el:strong:ac}: we show that the satisfaction of goal concepts at the root of the trees transfers to the satisfaction of the goal concepts at the roots. From the latter, we show how we derive our desiderata. In particular, we use often the minimality of $\I$ to derive that for each inclusion $C\sqsubseteq D$, if $d\in D^{\I}$, then $d\in C^{\I}$. 

Assume that $\cn{Final\_Goal}$ is satisfiable in a minimal model $\Imc$ of $\Kmc$, i.e. there exists $e\in \Delta^{\Imc}$ such that $e\in \cn{Final\_Goal}^{\Imc}$. Since $\I$ is minimal, from \eqref{final}, $e\in (\bigsqcap_{C\in \{R,G,B\}} \cn{D}_C\sqcap \cn{D_1\sqcap D_2\sqcap D_3})^{\Imc}$. Indeed, if it is not the case, a smaller model $\Jmc$ can be obtained from $\Imc$ by minimizing the concept $\cn{Final\_Goal}$ at $e$.
Hence $e\in \cn{D}_i^{\Imc}$, for all $i=1,2,3$ and $e\in \cn{D}_C$, for all $C\in \{R,G,B\}$.
Observe that since each concept $\cn{Root}_i$ and $\cn{Root}_C$, with $i=1,2,3$ and $C\in \{R,G,B\}$ does not occur on the right-hand side of any of the inclusions in $\Kmc$, from the minimality of $\Imc$, it follows that $\cn{Root}_i^{\Imc}=\{r_i^{\Imc}\}$, with $i=1,2,3$, and $\cn{Root}_C^{\Imc}=\{r_C\}$, with $C\in \{R,G,B\}$.

\begin{claimrep}\label{rootsconcepts}
Since $\I$ is minimal and $e\in \cn{D}_i^{\Imc}$, the following hold:
\begin{itemize}
	\item  $(r_1^{\Imc},e^{\I})\in \cn{done_1}^{\Imc}$ and $r_1\in (\cn{Root_1\sqcap Tree_1\sqcap Goal_1})^{\Imc}$,
	\item $(r_2^{\Imc},e^{\I})\in \cn{done_2}^{\Imc}$ and $r_2\in (\cn{Root_2\sqcap Tree_2\sqcap Goal_2})^{\Imc}$,
	\item $(r_3^{\Imc},e^{\I})\in \cn{done_3}^{\Imc}$ and $r_3\in (\cn{Root_3\sqcap Tree_3\sqcap Goal_3}\sqcap \cn{Flood})^{\Imc}$,
	\item  $(r_C^{\Imc},e^{\I})\in \cn{done_C}^{\Imc}$ and $r_1\in (\cn{Root_C\sqcap Tree_C\sqcap Goal_C})^{\Imc}$, for all $C\in \{R,G,B\}$.
\end{itemize}
\end{claimrep}
\begin{proof}
We show the claim for the first item in the list. The other two cases can be treated analogously. 
From the minimality of $\I$, since $e\in \cn{D}_1^{\Imc}$, it follows that there exists $d\in \Delta^{\Imc}$ such that $(d,e)\in \cn{done_1}^{\Imc}$ and $d\in (\cn{Root_1\sqcap Tree_1\sqcap Goal_1})^{\Imc}$. Similarly as above, if this is not the case, a smaller model $\Jmc$ can be obtained by minimizing the concept $\cn{D}_1$ in $e$.
We show that $d=r_1^{\Imc}$. We have already observed that $\cn{Root_1}^{\Imc}=\{r_1^{\Imc}\}$. Since $(\cn{Root_1\sqcap Tree_1\sqcap Goal_1})^{\Imc}\subseteq \cn{Root_1}^{\Imc}=\{r_1^{\Imc}\}$, it follows that $d=r_1^{\Imc}$. Thus the thesis. 
\end{proof}

We can now turn our attention to the elements $r_i^{\Imc}$, with $i=1,2,3$. From Claim \ref{rootsconcepts}, it follows that $r_i^{\Imc}\in \cn{Tree}_i^{\Imc}$, for all $i=1,2,3$. We show that each $r_i$ is at the root of a tree with exponentially many leaves. 

\begin{claimrep}\label{exponential:leaves}
Since $\I$ is minimal and each ${\cn{r}_i}^{\Imc}\in \cn{Tree}_i^{\Imc}$ the following hold:
\begin{itemize}
\item $\vert (\cn{L}_{n}^{(1)})^{\Imc}\vert =2^n$,
\item $\vert (\cn{L}_{n}^{(C)})^{\Imc}\vert =2^n$, with $C\in \{R,G,B\}$,
\item $\vert (\cn{L}_{2n}^{(2)})^{\Imc}\vert = 2^{2n}$,
\item $\vert (\leaft)^{\Imc}\vert = 2^{6n+3}$.
\end{itemize}
\end{claimrep} 

\begin{proof}
\newcommand{\rone}{r_1^{\Imc}}
We show the result for the first item, the other two cases can be proved analogously. Since $r_1^{\Imc}\in \cn{Root}_1^{\Imc}$, from the minimality of $\Imc$, we have that $\rone \in (\lambda_{n,s}^{(1)}\sqcap \lambda_{n,d}^{(1)})^{\Imc}$. Thus, from the fact that $\Imc$ is minimal there exists $e_s, e_d\in \Delta^{\Imc}$ such that:
\begin{itemize}
\item $(e_s, \rone)\in (\cn{s_n}^{(1)})^{\Imc}$ and $e_s\in (\lambda_{n-1}^{(1)}\sqcap \exists \cn{pick. Left})^{\Imc}$, and
\item $(e_d, \rone)\in (\cn{d_n}^{(1)})^{\Imc}$ and $e_d\in (\lambda_{n-1}^{(1)}\sqcap \exists \cn{pick. Right})^{\Imc}$.
\end{itemize}
It is easy to see that if one of the above conditions fails, a smaller model can be obtained by minimizing $\lambda_{n,s}^{(1)}$ or $\lambda_{n,d}^{(1)})^{\Imc}$ at $e$. We show that $e_s\not=e_d$. Since all predicates are minimized, each node in $\lambda_{n-1}^{(1)}$ can minimally justify only one $\cn{pick}$. If we assume that $e:=e_s=e_d$, then $e\in \exists \cn{pick}. \cn{Left}^{\Imc}$ and $e\in \exists \cn{pick}. \cn{Right}^{\Imc}$. Since $\cn{Left}$ and $\cn{Right}$ only occur on the left-hand side of axioms, we have that $\cn{Left}^{\Imc}=\{a^{\Imc}\}$ and $\cn{Right}^{\Imc}=\{b^{\Imc}\}$. Thus, $(e,a^{\I})\in \cn{pick}^{\Imc}$ and $(e,b^{\Imc})\in \cn{pick}^{\Imc}$. From out previous observation, a smaller model can be obtained by minimizing $\cn{pick}$ at $(e,a^{\Imc})$ or $(e,b^{\Imc})$. 

The above discussion can be applied at each level of the tree, showing each $e\in (\lambda_{i}^{(1)})^{\Imc}$ has exactly one $\cn{s}_{i-1}^{(1)}$ predecessor and exactly one $\cn{d}_{i-1}^{(1)}$ predecessor. Since each node at level $i-1$ can produce either a $\cn{s}_{i-1}^{(1)}$ successor or a $\cn{d}_{i-1}^{(1)}$ (since $\cn{pick}$ is minimized), we have that $\vert (\lambda_{i-1}^{(1)})^{\Imc}\vert = 2\cdot \vert (\lambda_{i}^{(1)})^{\Imc}\vert$, for all $0\leq i\leq n$. Since $\vert (\lambda_{n}^{(1)})^{\Imc}\vert =1$, by (reverse) induction on $i$, we have that $\vert (\lambda_{0}^{(1)})^{\Imc}\vert=2^n$. Since $\I$ is minimal, we have that $ (\lambda_{0}^{(1)})^{\Imc}=(\cn{L}_n^{(1)})^{\Imc}$. 
Thus, $\vert (\cn{L}_n^{(1)})^{\Imc}\vert \geq 2^n$.
To derive the thesis, it is sufficient to observe that each instance of $\cn{L}_n^{(1)}$ can only be generate by the tree that can be constructed starting from $\cn{Root_1}$ using the axioms \eqref{root1}-\eqref{tree1gen}. Since $\cn{Root_1}^{\Imc}=\{r_1^{\Imc}\}$, one can easily observe that $\vert (\cn{L}_n^{(1)})^{\Imc}\vert = 2^n$, i.e. $r_1^{\Imc}$ is the root of a \emph{full} binary tree of depth $n$, with $2^n$ leaves.
\end{proof}
To ease the following part of the proof, we define the sets of leaves of each tree as follows:
\begin{itemize}
\item the leaves of the \emph{vertex tree}, $\leafone= (\cn{L}_{n}^{(1)})^{\Imc}$,
\item the leaves of the \emph{variable tree}, $\leaftwo=(\cn{L}_{2n}^{(2)})^{\Imc}$,
\item the leaves of the \emph{circuit tree}, $\leafthree=(\cn{L}_{6n+2}^{(3)})^{\Imc}$,
\item the leaves of the \emph{red tree}, $\leafred=(\cn{L}_{n}^{(R)})^{\Imc}$,
\item the leaves of the \emph{green tree}, $\leafgreen=(\cn{L}_{n}^{(G)})^{\Imc}$,
\item the leaves of the \emph{blue tree}, $\leafblue=(\cn{L}_{n}^{(B)})^{\Imc}$.
\end{itemize}

From the minimality of $\Imc$ each $d\in \leafi$ (resp. each $d\in \leafc$) uniquely correspond to a natural number coded in binary using the concepts $\cn{B}_j^{(i)}$, $\cn{\bar{B}}_j^{(i)}$ and (resp. each $\cn{B}_j^{(C)}$, $\cn{\bar{B}}_j^{(C)}$).

\begin{claimrep}\label{goal-at-leaf}
For each $i\in \{1,2,3\}$, $d\in \cn{Goal}_i^{\Imc}$ for all $d\in \leafi$. Similarly, for each $C\in \{R,G,B\}$, $d\in \cn{Goal}_C^{\Imc}$ for all $d\in \leafc$. 
\end{claimrep}

\begin{proof}
We show it for $\cn{Goal}_1$, the other cases are analogous. We use the \emph{propagation axioms} \eqref{propgoal}. Since $r_1^{\Imc}\in (\cn{Root}_1\sqcap \cn{Tree}_1\sqcap \cn{Goal_1})^{\Imc}$, as argued in the proof of claim \ref{exponential:leaves}, there exist two distinct $e_s$ and $e_d$ such that 
\begin{itemize}
\item $(e_s, r_1^{\Imc})\in (s_n^{(1)})^{\Imc}$ and $e_s\in (\lambda_{n-1}^{(1)})^{\Imc}$, and 
\item $(e_d, r_1^{\Imc})\in (s_n^{(1)})^{\Imc}$ and $e_d\in (\lambda_{n-1}^{(1)})^{\Imc}$.
\end{itemize}
Furthermore, in Claim \ref{exponential:leaves}, we proved that $\vert (\lambda_{n-1}^{(1)})^{\Imc}\vert = 2$. Hence, since $\Imc$ is minimal, we have that $e_s,e_d\in \cn{Goal}_1^{\Imc}$. In fact, if this is not the case, the left-hand side of \eqref{propgoal} is not satisfied. Thus, a smaller model $\Jmc$ for $\Kmc$ can be obtained by minimizing $\cn{Goal}_1$ at $r_1$, deriving a contradiction. By iterating the latter discussion to each level of the tree, we derive that each $d\in \leafone$ must satisfy the concept $\cn{Goal}_1$, i.e. $d\in \cn{Goal}_1^{\Imc}$. 
\end{proof}
With Claim \ref{goal-at-leaf}, we proved that all the leaves satisfy a certain goal concept. Recall that in the construction of the KB, we encoded with these goals a family of conditions that the leaves of the trees must satisfy. We discuss the effect of each of them on each of the trees.

\emph{Color Trees}
With the next claims, we show relevant properties satisfied by the three color trees. Since each $d\in \leafc$ is such that $d\in \cn{Goal}_C^{\Imc}$, from the minimality of $\I$, we have that $d\in \cn{Success}^{\Imc}$ and $d\in \cn{C}^{\Imc}$. Otherwise, a smaller model for $\K$ can be constructed from $\I$ my minimizing $\cn{Goal}_C$ at one element $d\in \leafc$, deriving a contradiction. 

We can furthermore prove the following claim which states that the color trees do not share any of their leaves.
\begin{claimrep}\label{disjoint}
For each $C_1,C_2\in \{R,G,B\}$ with $C_1\not =C_2$,  $\mathrm{Leaf}_{C_1}\cap \mathrm{Leaf}_{C_2}=\emptyset$. 
\end{claimrep} 
\begin{proof}
W.l.o.g. assume $C_1=R$ and $C_2=G$. By a way of contradiction assume that there exists $d\in \leafred\cap \leafgreen$. From Claim \ref{goal-at-leaf}, we have that $d\in \cn{Goal}_R$ and $d\in \cn{Goal}_G$. Since $\Imc$ is minimal, we have that $d\in \cn{R}^{\Imc}$ and $d\in \cn{G}^{\Imc}$ (see \eqref{goalc}). From the minimility of $\I$ the left-hand side of axiom \eqref{picked} must be true, i.e. we have that $d\in (\exists \cn{pick\_col}. \cn{R_A})^{\Imc}$ and $d\in (\exists \cn{pick\_col}. \cn{G_A})^{\Imc}$.
Since $\cn{R_A}^{\Imc}=\{r^{\Imc}\}$ and $\cn{G_A}^{\Imc}=\{g^{\Imc}\}$, we have that $(d,g^{\Imc}), (d,r^{\Imc})\in \cn{pick\_col}^{\Imc}$. It easy to see that the interpretation $\Jmc$ obtained from $\Imc$ by minimizing $\cn{pick\_col}$ at $(d,r^{\Imc})$ is a model of $\K$. Since we only modified the extension of $\cn{pick}$, it is sufficient to observe that \eqref{pickcol} is satified. Since $\Jmc\subset \Imc$, we derived a contradiction.
\end{proof}

As a direct consequence of the proof of Claim \ref{disjoint}, we have that given $\cn{C_1}\in \{R,G,B\}$, for each $d\in \mathrm{Leaf}_{C_1}^{\Imc}$, $d\in (\cn{C_1}')^{\Imc}$ and $d\not \in \cn{C_2}^{\Imc}$, for each $\cn{C_2\in \{R,G,B\}}$ with $\cn{C_2\not =C_1}$.

We can prove the following claim.
\begin{claim}\label{successclaim}
For each $C\in \{R,G,B\}$, given any $d\in \leafc$, there exists $d'\in \leafthree$ such that $(d,d')\in \cn{seeTree}^{\Imc}$. 
\end{claim}
\begin{proof}
Given $d\in \leafc$, we observed that $d\in \cn{Success}^{\Imc}$. From the minimality of $\I$, the left-hand side of \eqref{success} must be true, i.e. we have that $d\in (\exists \cn{seeTree}. \leafthree)^{\Imc}$. Indeed, if for some $d\in \leafc$, we have that $d\not \in (\exists \cn{seeTree}. \leafthree)^{\Imc}$, the predicate $\cn{Success}$ can be minimized at $d$, preserving the satisfaction of axioms. Thus the thesis. 
\end{proof}

\emph{First tree \& First Goal}. We show that satisfaction of $\cn{Goal}_1$ at each element of $\leafone$ implies that each such a leaf picked a color form the color trees. From the minimality of $\Imc$ and axiom \eqref{goalone} we can deduce that each $d\in \cn{Goal}_1$ is such that $d\in \cn{Col}$. Indeed, if one assumes it is not the case, a smaller model can be obtained from $\Imc$ by minimizing $\cn{Goal_1}$ at $d$. 

Similarly, using the minimality of $\Imc$ and \eqref{coljust}, we can deduce that each $d\in \leafone$ is such that $d\in (\cn{GoodCol}\sqcap\cn{GoodChoice}\cn{CoordMatch})^{\Imc}$.
Therefore, $d\in \cn{GoodCol}^{\Imc}$, $d\in \cn{GoodChoice}^{\Imc}$ and $d\in \cn{CoordMatch}^{\Imc}$.

\begin{claim}\label{perfect-coloring}
For each $d\in \leafone$ we show the following.
\begin{enumerate}[(i)]
	\item There exists a unique $d_C\in \leafc$ for each $C\in \{R,G,B\}$ such that $(d,d_C)\in \cn{col}^{\Imc}$ and for each pair of colors $C_1, C_2$, $d_{C_1}\not = d_{C_2}$. Furthermore, for some $C\in \{R,G,B\}$, $d_C\in \cn{Chosen}^{\Imc}$.
	\item There is no $d'\not =d$ such that $(d,d_C)\in \cn{col}^{\Imc}$ and $(d',d_C)\in \cn{col}^{\Imc}$.
\end{enumerate} 
\end{claim}
\begin{proof}
Assume $d\in \leafone$.

\emph{(i)}. Since $d\in \cn{GoodCol}^{\Imc}$, from the minimality of $\Imc$ and \eqref{colorconnection}, we derive that $d\in \cn{GoodCol}_C^{\Imc}$, for each $C\in \{R,G,B\}$. 
Since each predicate must be justified, from the minimality of $\I$ it follows that for each $\cn{C}\in \{R,G,B\}$, $d\in \exists \cn{col}. (\cn{C}'\sqcap \cn{C})^{\Imc}$. 
Observe that, from \eqref{performchoice}, for each $d\in \leafone$, there exists $d_C$ such that $(d,d_C)\in \cn{col}^{\Imc}$ and $d_C\in ({\cn{C}'})^{\Imc}$, for all $C\in\{R,G,B\}$.
Since $d\in \exists \cn{col}. (\cn{C}'\sqcap \cn{C})^{\Imc}$, then from the minimality of $\I$ and Claim \ref{disjoint} we have that $d_C\in \leafc$, for all $C\in \{R,G,B\}$. 
Given two different colors, $C_1$ and $C_2$, the fact that $d_{C_1}\not =d_{C_2}$ immediately follows from the above observations and Claim \ref{disjoint}. The uniqueness trivially follows from the minimality of $\I$.

We now show that for some $C$, $d_C\in \cn{Chosen}^{\Imc}$. We have already observed that $d\in \cn{GoodChoice}^{\Imc}$. Hence, by \eqref{choose}, we have that for some $C\in \{R,G,B\}$, $d\in \exists \cn{col}. (\cn{Chosen} \sqcap\cn{C})^{\Imc}$ (the latter can be argued using the minimality of $\Imc$, as done previously). 
Observe that from the minimality of $\I$, all the occurrences of $\cn{C}^{\Imc}$ are at the elements of $\leafc$, hence there exists $d'\in \leafc$ such that $(d,d')\in \cn{col}^{\Imc}$ and $d'\in\cn{Chosen}^{\Imc}$, for some $C\in \{R,G,B\}$. Since $d_C$ is the unique element of $\leafc$ such that $(d,d_c)\in \cn{col}^{\Imc}$ and $d_C\in \cn{C}^{\Imc}$, we have that $d'=d_C$.

\emph{(ii)}. Assume by a way of contradiction, that there exists $d'\in \leafone$ such that $(d',d_C), \in \cn{col}^{\Imc}$ for some color $C$.
Using the minimality of $\Imc$ and the fact that $d\in \cn{CoordMatch}^{\Imc}$, using the axioms \eqref{checkcoordn}-\eqref{checkcoordp}-\eqref{coordmatch}, it is easy to check that the bit encoded in $d$ must match the bit encoded in $d'$, deriving a contradiction.

Indeed, for all $d\in \cn{CoordMatch}^{\Imc}$, the minimality of $\I$ combined with the axioms implies that $d\in (\cn{ColCoord}_i^{(C)})^{\Imc}$ which implies that the bit of coordinates encoded in $d$ matches with the bit of coordinates encoded in $d_C$. Hence, the bits of $d,d'$ must coincide with those of $d_i$, a contradiction.
\end{proof}

\emph{Second Tree \& Second Goal}
The satisfaction of $\cn{Goal}_2$ at each element $\leaftwo$ implies that each node (corresponding to a binary encoding of a variable $v_{i,j}$, in particular of $i,j$) corresponds to a truth value.

First, let us observe that Claim \ref{exponential:leaves} and \eqref{bittwo}-\eqref{bitwofinal} ensure that each node in $\leaftwo$ has a unique bit encoding.

Given $d\in \leaftwo$, since $d\in \cn{Goal}_2^{\Imc}$ (see Claim \ref{goal-at-leaf}), from the minimality of $\I$ and \eqref{goaltwo}, $d\in \cn{Ass}^{\I}$.

Furthermore, observe that since $V$ and $F$ occur as assertions and never on the right-hand side of the axioms. Since $\Imc$ is minimal, we have that $\cn{T}^{\Imc}=\{v_1^{\Imc}\}$ and $\cn{F}^{\Imc}=\{v_2^{\Imc}\}$. 
\begin{claim}
For each $d\in \leaftwo$, either $(d,v_1^{\Imc})$ and $d\in (\cn{T}')^{\Imc}$ or $(d,v_2^{\Imc})$ and $d\in (\cn{F}')^{\Imc}$.
\end{claim}
\begin{proof}
Since $d\in \cn{Ass}^{\Imc}$, from \eqref{pickedT}-\eqref{pickedF} and the minimality of $\I$, we have that $d\in (\exists \cn{v.T})^{\Imc}$ or $d\in (\exists \cn{v.F})^{\Imc}$. From the previous observation, $(d,v_1^{\Imc})\in \cn{v}^{\Imc}$ or $(d,v_2^{\Imc})\in \cn{v}^{\Imc}$. If both occur, we violate the minimality of $\Imc$: a smaller model can be obtained by minimizing $\cn{v}$ at $(d,v_1)$.
We obtained the thesis.
\end{proof}

\emph{Third Tree \& Third Goal}
One of our ultimate goals is to show that $d\in \cn{Flood}^{\Imc}$, for all $d\in \leafthree$. 
From \eqref{bitthree}-\eqref{bitthreefinal} and Claim \ref{exponential:leaves}, each node in $\leafthree$ uniquely encodes in binary a tuple $(u,v,x,y,\sigma_1, \sigma_2)$, using $6n+2$ bits. 
Using the minimality of $\Imc$ and \eqref{goalthree}, we derive that $d\in (\cn{C}^{(1)}\sqcap \cn{Ok}_{(u,v)}\sqcap \cn{C}^{(2)}\sqcap \cn{Ok}_{(x,y)}\sqcap \cn{Rooted})^{\Imc}$.

\begin{claim}\label{connectionfirst}
For each $d\in \leafthree$ there exists a unique $d_1\in \leafone$ such that $(d,d_1)\in \cn{p_1}^{(1)}$ and a unique $d_2\in \leafone$ such that $(d,d_2)\in \cn{p_2}^{(1)}$. Furthermore, if $u$ and $v$ are the natural numbers encoded in binary using the concepts $\cn{U}_i, \bar{\cn{U}}_i, \cn{V}_i, \bar{\cn{V}}_i$ and $u'$ and $v'$ are the natural numebers encoded in binary at $d_1$ and $d_2$, then $u=u'$ and $v=v'$.
\end{claim}
\begin{proof}
Recall that each $d\in \leafthree$ is such that $d\in (\cn{C^{(1)}}\sqcap \cn{Ok}_{u,v})^{\Imc}$, i.e. $d\in (\cn{C^{(1)}})^{\Imc}$ and $d\in (\cn{Ok}_{u,v})^{\Imc}$. From $d\in (\cn{C^{(1)}})^{\Imc}$, using the minimality of $\I$ it is easy to derive that there exists $d_1\in \leafone$ such that $(d,d_1)\in \cn{p}_1^{(1)}$. Indeed, from \eqref{threetoone1}-\eqref{threetoone4}, for each $d\in \leafthree$ there exists $d_1\in \Delta^{\Imc}$ such that $(d,d_1)\in (\cn{p_1}^{(1)})^{\Imc}$. Since $\Imc$ is minimal, then $d_1\in \leafone$. The uniqueness of $d_1$ comes again from the minimality of $\Imc$: each node can only justify one occurrence of $\cn{p_1}^{(1)}$.
Similarly, we can prove that there exists a unique $d_2\in \leafone$ such that $(d,d_2)\in \cn{p_2}^{(1)}$.

Since $d\in \cn{Ok}_{(u,v)}$ using a similar argument as above and the previous claims, \eqref{comparex1}-\eqref{compared} together with the minimality of $\Imc$ imply that in order to minimally justify each occurrence of $\cn{Ok}_{(u,v)}$ then first two components of the tuple $(u,v,x,y,\sigma_1,\sigma_2)$ encoded in binary at $d$ must coincide with the numbers encoded in binary at $d_1$ and $d_2$ via \eqref{bitone}-\eqref{bitonefinal}. 
\end{proof}
Analogously to the previous claim, we can show that each node in $\leafthree$ is also connected to exactly two leaves of the second tree, i.e. to the set $\leaftwo$, such that the numbers encoded in binary correspond to $x$ and $y$. 

\begin{claim}\label{connectionsecond}
For each $d\in \leafthree$ there exists a unique $d_3\in \leaftwo$ such that $(d,d_3)\in \cn{p_1}^{(2)}$ and a unique $d_4\in \leafone$ such that $(d,d_4)\in \cn{p_2}^{(2)}$. Furthermore, if $u$ and $v$ are the natural numbers encoded in binary using the concepts $\cn{X}_i, \bar{\cn{X}}_i, \cn{Y}_i, \bar{\cn{Y}}_i$ and $x'$ and $x'$ are the natural numebers encoded in binary at $d_1$ and $d_2$, then $x=x'$ and $y=y'$.
\end{claim}
\begin{proof}
The proof is analogous to the proof of Claim \ref{connectionfirst}, starting from the observation that each $d\in \leafthree$ is such that $d\in (\cn{C}^{(2)}\sqcap \cn{Ok}_{(x,y)})^{\Imc}$ and using the axioms \eqref{threetotwo1}-\eqref{comparedvar}.
\end{proof}

From Claim \ref{connectionfirst} and Claim \ref{connectionfirst}, given the axioms \eqref{importcolor1}-\eqref{importval4}, each $d\in \leafthree$ is labeled with a unique tuple of concepts $(\cn{C_1, C_2,}Val_1, Val_2)$ with $\cn{C}\in \{\cn{R,G, B}\}$ and $Val_1, Val_2\in \{\cn{True_1, False_1, True_2, False_2}\}$.

Axioms \eqref{firstcircuit}-\eqref{finalcircuit}, together with the minimality of $\I$, ensure that every $d\in \leafthree$ such that $d\in \cn{isEdge}^{\Imc}$ is such that $(u,v,x,y,\sigma_1,\sigma_2)$ describes an edge on the graph. 

With axioms \eqref{truedge1}-\eqref{truedgefinal}, we mark with $\cn{TrueEdge}$ all the nodes that evaluate to true under the truth assignment copied from the leaves of the second tree.

The following claim is easy to prove
\begin{claim}\label{rooted}
For all $d\in \leafthree$, we have that $(d,r_3^{\Imc})\in \cn{root}^{\Imc}$.
\end{claim}
\begin{proof}
The proof uses essentially the same argument as the other claims, using the axioms \ref{toroot}-\ref{Rooted}.
\end{proof}

Since $r_3^{\Imc}\in \cn{Flood}^{\Imc}$, from Claim \ref{Rooted}, we have that each $d\in \leafthree$ is such that $d\in \cn{Flood}^{\Imc}$. Therefore, from Claim \ref{successclaim}, for each $C\in \{R,G,B\}$ and each $d\in \leafc$, from \eqref{choiceflood} we have that $d\in \cn{Chosen}^{\Imc}$. 

We show that $G$ is a \emph{yes-instance} of \coccol. We use the variable assignment given by the leaves of the second tree, i.e. the elements of $\leaftwo$. Given a variable $v_{i,j}$, we define the mapping $\pi$ that assigns each $(i,j)$ to the unique element $d\in \leaftwo$ such that
\begin{itemize}
\item $d\in \cn{B}_k^{\Imc}$ iff $bit_k(i)=1$, for all $0\leq k<n$,
\item $d\in \cn{\bar{B}}_k^{\Imc}$ iff $bit_k(i)=0$,  for all $0\leq k<n$,
\item $d\in {\cn{B}_k}^{\Imc}$ iff $bit_{2n-k}(j)=1$, for all $n\leq k<2n$,
\item $d\in \cn{\bar{B}}_k^{\Imc}$ iff $bit_{2n-k}(j)=0$, for all $n\leq k<2n$.
\end{itemize} 
Similarly, we can define a mapping from the set of vertices of $G$ to the leaves of the first tree. We denote such a mapping with $\nu$.

We define a truth assignment $t$ over the variables $v_{i,j}$ by stating $t(v_{i,j})=1$ if $\pi(v_{i,j})\in (\cn{T}')^{\Imc}$, $t(v_{i,j})=0$ if $\pi(v_{i,j})\in (\cn{F}')^{\Imc}$.

Assume by a way of contradiction that there exists a color assignment $\chi$ of edges such that $t(G)$ is 3-colorable. We show that such a color assignment can be used to find a model $\Jmc$ of $\Kmc$ such that $\Jmc\subset \Imc$. 

We define $\Jmc$ as the interpretation such that $\Delta^{\Jmc}=\Delta^{\Imc}$ and the interpretation function is defined as follows:
\begin{itemize}
\item $\cn{Flood}^{\Jmc}=\emptyset$,
\item $\cn{Chosen}^{\Jmc}=\{d_C\in \leafc\vert \chi(\nu^{-1}(d))$, where $d$ and $d_C$ are as in the statement of Claim \ref{perfect-coloring}.
\item $\cn{C'}^{\Jmc}=\{d\in \leafone\vert d_C\in \cn{Choice}^{\Jmc}\}$, for all $\cn{C}\in \{\cn{R,G,B}\}$, where $d$ and $d_C$ are as in the statement of Claim \ref{perfect-coloring}.
\item $\cn{C_1}^{\Jmc}=\{d\in \leafthree\vert d_1\in \cn{C'}^{\Jmc}\}$, where $d$ and $d_1$ are as in Claim \ref{connectionfirst},
\item $\cn{C_2}^{\Jmc}=\{d\in \leafthree\vert d_2\in \cn{C'}^{\Jmc}\}$ where $d$ and $d_1$ are as in Claim \ref{connectionfirst},
\item for all $p\in sig(\K)\setminus \{\cn{Flood,Chosen, C', C_1, C_2\vert C\in\{R,G,B\}}\}$, $p^{\Jmc}=p^{\Imc}$.
\end{itemize}

It is trivial to observe that $\Jmc\subset \Imc$. To show that $\Jmc$ is a model, it is sufficient to observe that all axioms involving the minimized concepts are satisfied. In particular, from Claim \ref{perfect-coloring} for each color $C$ and each $d\in \leafone$ there exists a unique $d_C\in leafc$ such that $(d,d_C)\in \cn{col}^{\Imc}$ and no other $d'\in \leafone$ is such that $(d',d)\in \cn{col}^{\Imc}$, then the new placement of $\cn{Chosen}$ does not clash with the color assignment $\chi$.

We derived a contradiction with the minimality of $\Imc$.
\end{proof}
\end{proof}

   \end{toappendix}
	\paragraph{Data Complexity} We also look at the data complexity of \mms under acyclicity restrictions.
        \begin{theoremrep}
         \mms for weakly acyclic
          $\ELIO_\bot$ 
          is $\Sigma^P_2$-complete in
          data complexity. The lower bound applies already to $\EL$.
        \end{theoremrep}

\begin{proof}
  We use essentially the same reduction as in Example~\ref{example:flooding-technique}. Specifically,
  we need to simulate the inclusions $V\sqsubseteq T \sqcup F$ and
  $ N \sqsubseteq C_1\sqcup C_2\sqcup C_3$, which are not allowed in
  $\mathcal{EL}$. Instead, we can use inclusions of the form
   $A\sqsubseteq\exists r.B$, which contain a ``hidden''
  disjunction via existential quantification. 
   Dealing with $V\sqsubseteq T \sqcup F$ is not too difficult. We replace it with the following pair:
  \[V \sqsubseteq \exists\mathsf{hasValue}.\top  \quad \exists
     \mathsf{hasValue}.L \sqsubseteq L \quad \mbox{for }L\in \{T,F\}
  \]
  We further add the assertions $T(t_1),F(t_2)$ to $\A$, where
  $t_1,t_2$ are fresh individuals. We need to make sure that (in the
  relevant minimal models) the role $\mathsf{hasValue}$ points to $t_1$
  or $t_2$ for all individuals $v_{i,j}$. We will use a concept name $\mathsf{Ok}$ for this purpose. We add the following inclusion to $\T$:
  \[ \exists \mathsf{hasValue}.L \sqsubseteq \mathsf{Ok} \quad L\in \{T,F\}\]   
  Replacing  $ N \sqsubseteq C_1\sqcup C_2\sqcup C_3$ is a bit more tricky, because we need to do the ``flooding'' trick. We add the assertions $\mathsf{hascolor}(u_i,c_{i}^{\ell}) $ for all $0 \leq i < n$ and $\ell\in \{1,2,3\}$, where each $c_{i}^{\ell} $ is a fresh individual. Moreover, we add $C_1'(c_i^{1}),C_2'(c_i^{2}),C_3'(c_i^{3})$ for all $0 \leq i < n$. 
   Now we add the following inclusions to $\T$:  \[ N \sqsubseteq \exists \mathsf{hascolor}.\mathsf{Choice}\]
\[ \exists \mathsf{hascolor}.(\mathsf{Choice} \sqcap C_i')  \sqsubseteq C_i \qquad i\in \{1,2,3\}\]

We need to make sure that $\mathsf{Ok}$ holds for individuals that correspond to propositional variables and vertics.  For
this, we take any enumeration $w_1,\ldots w_k$ of the individuals of
the form $v_{i,j}$ and $u_j$. Further add assertions
$\mathsf{next}(w_1,w_2),\ldots,\mathsf{next}(w_{k-1},w_k)$
to $\A$. We require that $w_1$ corresponds to some vertex, i.e. it is
the form $u_j$.  We further add
$\mathsf{first}(w_1), \mathsf{last}(w_k)$ to $\A$. We can now ``propagate'' $\mathsf{Ok}$ backwards along this ordering:
\[ \mathsf{Last} \sqcap \mathsf{Ok} \sqsubseteq \mathsf{Marked}\]
\[\mathsf{Ok}\sqcap \exists \mathsf{next}.\mathsf{Marked} \sqsubseteq
  \mathsf{Marked}\]

The goal concept is defined as
\[D=\mathsf{First}\sqcap C_1 \sqcap C_2 \] Finally, we need to redefine the role
$s$. Now every individual $c_i^{k}$ can ``see'' all individuals
$e_{i,j}$: we add $s(c_k^{h},e_{i,j})$ to $\A$, for all
$0 \leq k,i,j < n$ and $u\in\{1,2,3\}$.

 Assume $G$ is a positive instance of \ccol. Then there
 exists a truth value assignment $t$ such that the induced subgraph
 $t(G)$ in not 3-colorable. Let $\Delta$ be the set of individuals in $\A$.
 Take an interpretation $\I$ such that:
 \begin{enumerate}
 \item $\Delta^{\I}=\Delta$ and $a^\I=a$ for all $a\in \Delta$, 
 \item $T^{\I} =\{v_{i,j}\mid v_{i,j} \mbox{ is true in } t\}$,
 \item $F^{\I}=\{v_{i,j}\mid v_{i,j} \mbox{ is false in } t\}$,
 \item $\mathsf{hasValue}^{\I}=T^{\I} \times \{t_1\}\cup F^{\I} \times \{t_2\} $,  
 \item  $ \mathsf{Ok}^{\I}= \mathsf{Marked}^{\I}=\{w_1,\ldots,w_k\}$,
 \item  $ \mathsf{Choice}^{\I}= \{ c_i^{q} \mid  0 \leq i < n, q\in \{1,2,3\}\}$
 \item $C_1^{\I}=C_2^{\I}=C_3^{\I}=\{u_0,\ldots,u_{n-1} \}$,
 \item
   $\mathsf{Sel}^{\I}=\{e_{i,j}\mid (i,j) \mbox{ is an edge in }t(G)
   \}$,
 \item the extension of the remaining predicates is exactly as given in
   $\A$.
 \end{enumerate}
 We can verify that $\I$ is a minimal model of $(\T,\A)$ with  $D^{\I}\neq\emptyset$. Indeed, after a basic inspection, we see that the only possibility for $\I$ to be non-minimal is in case we can remove something from $\mathsf{Choice}$ or one of $C_i$ while maintaining a model. But then it must be the case that the flooding-inducing inclusion never fires, which means that $\mathsf{Choice}$ and  $C_i$ can be minimized down to a proper coloring of $t(G)$, which would contradict the assumption that $t(G)$ is not 3-colorable.

 For the other direction, assume some minimal model $\I$ of $(\T,\A)$
 with $D^{\I}\neq\emptyset$. We let $t$ be the truth assignment that
 makes a proposition $v_{i,j}$ true iff $v_{i,j}\in T^{\I}$. It not
 difficult to see that $t(G)$ is not 3-colorable, and thus $G$ is a
 positive instance of \ccol. Towards a contradiction, assume that $t(G)$ is 3-colorable and this is witnessed by a color assignment $f:\{0,\ldots,n-1\}\rightarrow\{1,2,3\}$. Take the interpretation $\J$ such that:
 \begin{itemize}
 \item $\Delta^{\J}=\Delta^{\I}$,
 \item $a^{\J}=a^{\I}$, for all individuals $a$,
 \item $T^{\J}=\{v_{i,j}\mid v_{i,j} \mbox{ is true in } t\}$,
 \item $F^{\J}=\{v_{i,j}\mid v_{i,j} \mbox{ is false in } t\}$,

\item $\mathsf{hasValue}^{\I}=T^{\I} \times \{t_1\}\cup F^{\I} \times \{t_2\} $,  
\item  $ \mathsf{Ok}^{\I}= \mathsf{Marked}^{\I}=\{w_1,\ldots,w_k\}$,

   \item $C_j^{\J}=\{u_i^{\I}\mid f(i)=j\}$ for $j\in\{1,2,3\}$,

 \item  $ \mathsf{Choice}^{\I}= \{ c_i^{f(i)} \mid  0 \leq i < n\}$,

 \item
   $\mathsf{Sel}^{\I}=\{e_{i,j}\mid (i,j) \mbox{ is an edge in }t(G)
   \}$,
 \item the extension of the remaining predicates is exactly as given in
   $\A$.
    
 \end{itemize}
 It can be verified that $\J$ is a model of $(\T,\A)$. Moreover, since
 the initial interpretation $\I$ is ``flooded'' (i.e., since
 $D\neq\emptyset$), we can also verify that $\J \subset \I$, which
 contradicts the assumption that $\I$ is a minimal model of $(\T,\A)$.
\end{proof}
\begin{proofsketch}
  For the upper bound we use
  Lemma~\ref{lemma:exp-size-bound}. Assuming that the TBox is fixed, 
  if there exists a minimal model $\Imc$ of $(\T,\A)$ that
  satisfies a concept $C$ of interest, then there exists
  such an interpretation $\Jmc$ whose domain 
  is bounded by
  $c\times \sizeof{\inames(\kb)}$, where $c$ is a constant that only
  depends on $\T$. In other words, the size of $\Jmc$ that witnesses
  $C$ is polynomial in the size of $\A$. Note that,
  given a candidate $\Jmc$ as above, we can use NP oracle to check
  whether $\Jmc$ is (non-)minimal. This yields the $\Sigma^{P}_{2}$
  upper bound.

  For the lower-bound, we can mainly use the reduction that was
  described in Example~\ref{example:flooding-technique}. Specifically,
  we need to simulate the inclusions $V\sqsubseteq T \sqcup F$ and
  $ N \sqsubseteq C_1\sqcup C_2\sqcup C_3$, which are not allowed in
  $\mathcal{EL}$. Instead, we can use inclusions of the form
   $A\sqsubseteq\exists r.B$, which contain a ``hidden''
  disjunction via existential quantification. 
\end{proofsketch}

	\section{Minimal Models in Related Formalisms}\label{section:related}

We make a very brief excursion into the DL-Lite family, and briefly discuss our results in the setting of databases with \emph{tuple-generating dependencies (TGDs)}.
We also look at the impact of the UNA on minimal model reasoning in $\EL$. 

\subsection{DL-Lite} 
We did not study DL-Lite in this paper, and the feasibility of \mms in this family of DLs remains an intriguing question for future work. We only present one interesting result that hints that the problem will not be easy. In very stark contrast to the previously known 
$\NL$-membership for \mms  in \dllitecore\xspace \cite{BonattiD0S23}, already in \dllitehorn we have \ExpSpace-hardness.

	\begin{theoremrep}
		\label{theorem:dllite-horn-is-nexp-hard}
		$\mms$ in $\dllitehorn$ is $\ExpSpace$-hard.
	\end{theoremrep} 
	\noindent We believe that this bound, proved by reducing the acceptance problem of a Turing machine with exponential space, is likely to be tight, but leave the question for future work. 
	
	\begin{toappendix}
		
		\newcommand{\bit}{\cstyle{B}}
		\newcommand{\succbit}{\cstyle{S}}
		\newcommand{\predbit}{\cstyle{P}}
		\newcommand{\firstzero}{\cstyle{FZ}}
		\newcommand{\firstone}{\cstyle{FO}}
		\newcommand{\firstoftape}{\cstyle{FoT}}
		\newcommand{\lastoftape}{\cstyle{LoT}}
		\newcommand{\notfirstoftape}{\overline{\cstyle{FoT}}}
		\newcommand{\notlastoftape}{\overline{\cstyle{LoT}}}
		\newcommand{\address}{\cstyle{Pos}}
		\newcommand{\head}{\cstyle{H}}
		\newcommand{\nothead}{\overline{\cstyle{H}}}
		\newcommand{\generate}{\rstyle{gen}}
		\newcommand{\asc}{\cstyle{Asc}}
		\newcommand{\dsc}{\cstyle{Dsc}}
		\newcommand{\ascOK}{\cstyle{FullTape}}
		\newcommand{\dscOK}{\cstyle{HalfTape}}
		\newcommand{\report}{\rstyle{report}}
		\newcommand{\symb}{\cstyle{Symb}}
		\newcommand{\state}{\cstyle{Q}}
		\newcommand{\transition}{\rstyle{trans}}
		\newcommand{\copysymbol}{\cstyle{Copy}}
		\newcommand{\replaceby}{\cstyle{Replace}}
		\newcommand{\import}{\rstyle{get}}
		\newcommand{\export}{\rstyle{send}}
		\newcommand{\imported}{\cstyle{Got}}
		\newcommand{\exported}{\cstyle{OldSymb}}
		\newcommand{\moved}{\cstyle{M}}
		\newcommand{\towrite}{\cstyle{Write}}

		We reduce from the problem of deciding whether an exponentially space-bounded deterministic Turing machine (DTM) accepts a given input.
		
		To fix notation, we recall that an DTM $\Mmc$ is specified by a
		$6$-tuple $\Mmc = (Q, \Sigma, \Gamma, \delta, q_0, g)$ where:
		\begin{itemize}
			\item $Q$ is the finite set of states;
			\item $\Sigma$ is the finite input alphabet;
			\item $\Gamma \supseteq \Sigma$ is the finite tape alphabet
			with a special blank symbol $\textvisiblespace \in \Gamma
			\setminus \Sigma$;
			\item $\delta : Q \times \Gamma \rightarrow Q \times \Gamma \times \{ \triangleleft,  \triangleright \}$ is the transition function;
			\item $q_0 \in Q$ is the initial state;
			\item $F \subseteq Q$ is the set of final states.
		\end{itemize}
		Note that without loss of generality, we consider DTM that:
		\begin{itemize}
			\item never tries to go to the left of the initial position;
			\item never revisits the initial state.
		\end{itemize} 
		We say that $\Mmc$ is \emph{exponentially space-bounded} if there
		exists a polynomial $p$ such that on input $x$, $\Mmc$ visits only the
		first $2^{p(\sizeof{x})}$ tape cells.
		
		We construct a KB $\kb = (\tbox, \abox)$ and a concept $\goal$ such that $\goal$ is minimally satisfiable w.r.t.\ $\kb$ iff $\Mmc$ accepts (\emph{i.e.}\ reaches a final state) on input $x$.
		
		Our first challenge is to maintain, in our models, an exponentially-long tape as the run of the DTM progresses.
		Each step in the run is associated with its own copy of the tape; and each copy of the tape is represented using exponentially-many elements in the model.
		Positions of each cell along (a copy of) that tape are encoded using $2n$ concept names $\bit_i^b$, for $0 \leq i \leq n-1$ and $b \in \{0, 1\}$.
		Doing so, a binary integer on $n$ bits $b_{n-1}\dots b_0$ is represented by the conjunction of concepts $\bit_{n-1}^{b_{n-1}}\dots \bit_0^{b_0}$.
		We guarantee that each cell can only have one position with the following axioms:
		\[
		\bit_i^{0} \sqcap \bit_i^{1} \sqsubseteq \bot \text{ for } 0 \leq i \leq n-1
		\]
		The tape will always be generated starting from the cell of the head until both endpoints of the tape are reached.
		We use a concept to distinguish the position being First on the Tape ($\firstoftape$) from others ($\notfirstoftape$); and similarly, a concept to distinguish the position being Last on the Tape ($\lastoftape$) from others ($\notlastoftape$).
		These concepts are connected to the binary encoding by the following axioms:
		\begin{align*}
			\bigsqcap_{i = 0}^{n-1} \bit_i^0 & \sqsubseteq \firstoftape
			&
			\bit_i^1 & \sqsubseteq \notfirstoftape &  \text{for } 0 \leq i \leq n-1
			\\
			\bigsqcap_{i = 0}^{n-1} \bit_i^1 & \sqsubseteq \lastoftape
			&
			\bit_i^0 & \sqsubseteq \notlastoftape & \text{for } 0 \leq i \leq n-1
		\end{align*}
		We generate positions one by one and thus need to perform the usual increment/decrement on the binary encoding.
		The position of the succeeding cell is stored using $2n$ concepts $\succbit_i^b$ and is handled in the standard way with the following axioms, where $\firstzero_i$ identifies the First Zero (starting from the weakest bit) as being the $i^\text{th}$ bit $b_i$''.
		\begin{align*} 
			\bit_i^0 \sqcap \bigsqcap_{j = 0}^{i-1} \bit_j^1 & \sqsubseteq \firstzero_i 
			& & \text{for } 0 \leq i \leq n-1
			\\
			\bit_i^b \sqcap \firstzero_j & \sqsubseteq \succbit_i^{1-b} 
			& & \text{for } 0 \leq i  \leq j \leq n-1, b \in \{ 0, 1\}
			\\
			\bit_i^b \sqcap \firstzero_j & \sqsubseteq \succbit_i^{b} 
			& & \text{for } 0 \leq j < i \leq n-1, b \in \{ 0, 1\}
		\end{align*}
		We proceed as well to represent the position of the preceding cell with $2n$ concepts $\predbit_i^b$, concepts $\firstone_i$, this time for First One, and axioms:
		\begin{align*} 
			\bit_i^1 \sqcap \bigsqcap_{j = 0}^{i-1} \bit_j^0 & \sqsubseteq \firstone_i 
			& & \text{for } 0 \leq i \leq n-1
			\\
			\bit_i^b \sqcap \firstone_j & \sqsubseteq \predbit_i^{1-b} 
			& & \text{for } 0 \leq i  \leq j \leq n-1, b \in \{ 0, 1\}
			\\
			\bit_i^b \sqcap \firstone_j & \sqsubseteq \predbit_i^{b} 
			& & \text{for } 0 \leq j < i \leq n-1, b \in \{ 0, 1\}
		\end{align*}
		Now that the representation and manipulation of positions is clarified, we describe in full extend how the whole generation of the computation and associated copies of the tape works.
		We start with a basic encoding of the initial state, position of the head and initial part of the tape (encoding input $x$) using assertions in the ABox.
		The rest of the initial tape will then be generated with blank symbols using some extra rules.
		Once the end of the initial tape has been properly initialized, its last position will ``report'' back to cell of the head, which will trigger the first inductive step in the representation of the computation.
		Assume we are at step $s_k$ of the DTM computation, and that the cell of the head sees state $q$, reads symbol $c$, and receives confirmation that the tape of step $s_k$ is fully generated via a concept $\ascOK$, then we have three cases.
		If the transition function leads to a final state, then we immediately derive the goal concept, with the following axioms:
		\[
		\head \sqcap \state_q \sqcap \symb_c \sqcap \ascOK \sqsubseteq \goal
		\]
		for every $q \in Q$ and $c \in \Gamma$ such that $\delta(q, c) = (q', c', m)$ with $q' \in F$.
		Otherwise, we properly take the transition to step $s_{k+1}$ by generating a combination of roles $\transition_{q', c', m, i, b}$, where $\delta(q, c) = (q', c', m)$, that points to the new head position as encoded by the chosen combination of $i$'s and $b$'s.
		To this end, we differentiate on the value of $m \in \{ \triangleleft, \triangleright \}$ and rely either on the stored position of the preceding or succeeding cell accordingly.
		This is done via the following axioms:
		\[
		\head \sqcap \state_q \sqcap \symb_c \sqcap \ascOK \sqcap \predbit_i^b \sqsubseteq \exists \transition_{q', c', \triangleleft, i, b}
		\]
		for every $q \in Q$ and $c \in \Gamma$ such that $\delta(q, c) = (q', c', \triangleleft)$ with $q' \notin F$, and every $0 \leq i \leq n-1$ and $b \in \{ 0, 1 \}$.
		\[
		\head \sqcap \state_q \sqcap \symb_c \sqcap \ascOK \sqcap \succbit_i^b \sqsubseteq \exists \transition_{q', c', \triangleright, i, b}
		\]
		for every $q \in Q$ and $c \in \Gamma$ such that $\delta(q, c) = (q', c', \triangleright)$ with $q' \notin F$, and every $0 \leq i \leq n-1$ and $b \in \{ 0, 1 \}$.
		A cell receiving such a role inherits all the relevant information regarding the freshly taken transition, with the following axioms defined for every $q \in \state$, $c \in \{ 0, 1 \}$, $m \in \{ \triangleleft, \triangleright \}$, and $b \in \{ 0, 1 \}$:
		\begin{align*}
		\exists \transition_{q, c, m, i, b}^- & \sqsubseteq \head \sqcap \state_q \sqcap \towrite_c \sqcap \moved_m \sqcap \bit_i^b
		\end{align*}
		To prevent elements from having several states / symbols, we make all the relevant combinations disjoint:
		\begin{align*}
			\state_{q_1} \sqcap \state_{q_2} & \sqsubseteq \bot
			& \text{for } q_1, q_2 \in Q, q_1 \neq q_2
			\\
			\symb_{c_1} \sqcap \symb_{c_2} & \sqsubseteq \bot
			& \text{for } c_1, c_2 \in \Gamma, c_1 \neq c_2
			\\
			\moved_\triangleright \sqcap \moved_\triangleleft & \sqsubseteq \bot
			\\
			\head \sqcap \overline{\head} & \sqsubseteq \bot
		\end{align*}
		where $\overline{\head}$ is a concept name standing for `not the head' that will hold on all non-head cells, as will be clarified further.
		
		There is \emph{a priori} no guarantee that those $n$ generated roles agree to point to the same element, but this will be necessary for the (representation of the) computation to continue and eventually derive $\goal$.
		To ensure this, we require the concept $\address$, standing for ``the present element has a fully specified position'', to hold before continuing the process.
		We add the following axioms:
		\begin{align*}
			\bit_i^b & \sqsubseteq \address_i
			& \text{for } 0 \leq i \leq n-1, b \in \{ 0, 1\}
			\\
			\bigsqcap_{i = 0}^{n-1} \address_i & \sqsubseteq \address
		\end{align*}
		Every cell with a proper position and on a tape that is not the initial tape is then required to read the symbol that was present at the same position on the previous tape.
		This is managed by two successive generations of combinations of roles.
		First, a combination of $\import_{i, b}$ roles points to the desired position; we add the following axioms for every $0 \leq i \leq n-1$, $b \in \{ 0, 1 \}$ and $q \in Q \setminus \{ q_0 \}$:
		\begin{align*}
			\state_q \sqcap \address \sqcap \bit_i^b & \sqsubseteq \exists \import_{i, b}
		\end{align*}
		Then, if the full combination is received, a predicate $\imported$ is derived at the corresponding cell on the previous tape.
		\begin{align*}
			\exists \import^-_{i, b} & \sqsubseteq \imported_{i} \sqcap \bit_i^b
			& & \text{for } 0 \leq i \leq n-1, b \in \{ 0, 1 \}
			\\
			\bigsqcap_{i = 0}^{n - 1} \imported_{i} & \sqsubseteq \imported
		\end{align*}
		In turns, this triggers the same combination of role $\export_{i,b,c}$, this time also specifying the symbol $c$ to read.
		We add, for every $0 \leq i \leq n-1$, $b \in \{ 0, 1 \}$ and $c \in \Gamma$:
		\begin{align*}
			\imported \sqcap \symb_c \sqcap \bit_i^b & \sqsubseteq \exists \export_{i,b,c}
			& & 
		\end{align*}
		Finally, if the full combination of the above is received, then a predicate $\exported_c$ is derived.
		We add, for every $c \in \Gamma$:
		\begin{align*}
			\exists \export^-_{i, b, c} & \sqsubseteq \exported_{i, c} \sqcap \bit_i^b
			& & \text{for } 0 \leq i \leq n-1, b \in \{ 0, 1 \}
			\\
			\bigsqcap_{i = 0}^{n - 1} \exported_{i, c} & \sqsubseteq \exported_c
		\end{align*}
		Importantly, notice that minimality of considered models will ensure that each cell can only send its symbol once.
		This prevents the tape generated for step $s_{k+2}$ from reading symbols on the tape associated to step $s_k$ (as the latter will already have been read by cells on $s_{k+1}$'s tape).
		Now, whether to keep or overwrite this symbol from the previous step is dictated by dedicated concepts $\copysymbol$ and $\replaceby_{c'}$, where $\replaceby_{c'}$ specifies the symbol that is to be written instead:
		\begin{align*}
			\exported_c \sqcap \copysymbol & \sqsubseteq \symb_c
			& & \text{for } c \in \Gamma
			\\
			\exported_c \sqcap \replaceby_{c'} & \sqsubseteq \symb_c
			& & \text{for } c, c' \in \Gamma
			\\
			\symb_c & \sqsubseteq \symb
			& & \text{for } c \in \Gamma
		\end{align*}
		We explain where those predicates $\copysymbol$ and $\replaceby_{c'}$ come from.
		At the position of the head, we clearly want to copy the previous tape (recall that we just moved the head in one of the two directions), thus we add:
		\begin{align*}
			\head & \sqsubseteq \copysymbol
		\end{align*}
		Once a cell is done reading the previous symbol (identified by concept $\symb$), we start generating the missing cells in descending order with dedicated roles $\generate_{\triangleleft, q, i, b, \copysymbol}$, asking at each cell for the symbol of the previous tape to be read before moving on to the next cell.
		Once we reach the initial position, we generate a $\report_{\firstoftape}$ that is intended to collapse back on the cell where the head is located:
		\begin{align*}
			\firstoftape \sqcap \symb & \sqsubseteq \exists \report_{\firstoftape}
			\\
			\notfirstoftape \sqcap \head \sqcap \symb \sqcap \state_q \sqcap \moved_\triangleleft \sqcap \predbit_i^b & \sqsubseteq \exists \generate_{\triangleleft, q, i, b, \copysymbol} 
			\\
			\notfirstoftape \sqcap \head \sqcap \symb \sqcap \state_q \sqcap \moved_\triangleright \sqcap \towrite_c \sqcap \predbit_i^b & \sqsubseteq \exists \generate_{\triangleleft, q, i, b, \replaceby, c}
			\\
			\notfirstoftape \sqcap \nothead \sqcap \symb \sqcap \state_q \sqcap \predbit_i^b & \sqsubseteq \exists \generate_{\triangleleft, q, i, b, \copysymbol}
		\end{align*}
		Those predicates carry all the relevant information for the generated cell:
		\begin{align*}
			\exists \report_{\firstoftape}^- & \sqsubseteq \dscOK \sqcap \head
			\\
			\exists \generate_{\triangleleft, q, i, b, \copysymbol}^- & \sqsubseteq \nothead \sqcap \state_q \sqcap \copysymbol \sqcap \bit_i^b
			\\
			\exists \generate_{\triangleleft, q, i, b, \replaceby, c} ^- & \sqsubseteq \nothead \sqcap \state_q \sqcap \replaceby_{c} \sqcap \bit_i^b
		\end{align*}
		 Now once the concept $\dscOK$ is seen on the head, we restart the process, this times generating the missing cells in the increasing order with roles $\generate_{\triangleright, q, i, b, \copysymbol}$.
		 Once we reach the last position, we generate a $\report_{\lastoftape}$ that is intended to collapse back on the cell where the head is located:
		 \begin{align*}
		 	\lastoftape \sqcap \symb & \sqsubseteq \exists \report_{\lastoftape}
		 	\\
		 	\notlastoftape \sqcap \head \sqcap \symb \sqcap \state_q \sqcap \moved_\triangleright \sqcap \succbit_i^b & \sqsubseteq \exists \generate_{\triangleright, q, i, b, \copysymbol} 
		 	\\
		 	\notlastoftape \sqcap \head \sqcap \symb \sqcap \state_q \sqcap \moved_\triangleleft \sqcap \towrite_c \sqcap \succbit_i^b & \sqsubseteq \exists \generate_{\triangleright, q, i, b, \replaceby, c} 
		 	\\
		 	\notlastoftape \sqcap \nothead \sqcap \symb \sqcap \sqcap \state_q \succbit_i^b & \sqsubseteq \exists \generate_{\triangleright, q, i, b, \copysymbol}
		 \end{align*}
		 Those predicates again carry all the relevant information for the generated cell:
		 \begin{align*}
		 	\dscOK \sqcap \exists \report_{\lastoftape}^- & \sqsubseteq \ascOK \sqcap \head
		 	\\
		 	\exists \generate_{\triangleright, q, i, b, \copysymbol}^- & \sqsubseteq \nothead \sqcap \state_q \sqcap \copysymbol \sqcap \bit_i^b
		 	\\
		 	\exists \generate_{\triangleright, q, i, b, \replaceby, c} ^- & \sqsubseteq \nothead \sqcap \state_q \sqcap \replaceby_{c} \sqcap \bit_i^b
		 \end{align*}
		 It only remains to clarify how the initial input $\bar x = x_0, \dots, x_{n-1}$ is represented in the ABox.
		 For each $0 \leq k \leq n$, we add the assertion $\bit_i^b(\istyle{a}_k)$ if the $i^\text{th}$ bit in the binary encoding of $k$ is $b$.
		 For the symbols, recall that cells with the initial state $q_0$ do not attempt to read the symbol on the corresponding position on the previous tape and are also not subject to the generating rules we describe earlier.
		 We thus directly place the symbols and current state with assertions:
		 $\symb_{x_k}(\istyle{a}_k)$ for each $0 \leq k \leq n -1$ and $\symb_{\textvisiblespace}(\istyle{a}_n)$.
		 We also place the head and initial states by hand: $\head(\istyle{a}_0)$ and $\state_{q_0}(\istyle{a}_{k})$ for each $0 \leq k \leq n$.
		 For the missing cells, we produce generating rules that follow the same logic as the previous ones, relying on roles $\generate_{\textvisiblespace, i, b}$ and producing the missing cells in increasing order starting from the first blank cell that comes immediately after the last one encoding the input (represented by individual $\istyle{a}_n$).
		 \begin{align*}
		 	\address \sqcap \notlastoftape \sqcap \symb_{\textvisiblespace} \sqcap \state_{q_0} \sqcap \succbit_i^b & \sqsubseteq \exists \generate_{\textvisiblespace, i, b}
		 	\\
		 	\exists \generate_{\textvisiblespace, i, b}^- & \sqsubseteq \symb_{\textvisiblespace} \sqcap \state_{q_0} \sqcap \bit_i^b
		 \end{align*}
		
		We now prove that $\Mmc$ accepts on input $\bar x$ iff the concept $\goal$ is minimally satisfiable w.r.t.\ the above $\kb$.
		
		\newcommand{\Smc}{\mathcal{S}}
		To this end, we first describe the intended model of $\kb$: the one that does represent the computation of $\Mmc$ on input $\bar x$.
		Let $\Smc := s_0, \dots, s_k, \dots$ the (potentially infinite) successive configurations in the run of $\Mmc$ on input $\bar x$.
		If this sequence if finite, we denote $s_K$ the final configuration.
		For an integer $m$ between $0$ and $2^n -1$, we denote $b_i(m)$ its $i$-th bit in binary so that $m = \sum_{i = 0}^{n-1} b_i(m)2^i$.
		The symbol on the $m$-th cell on the tape at configuration $s_k$ is denoted $t_k(m)$.
		The position of the head at configuration $s_k$ is denoted $h(k)$.
		The state at configuration $s_k$ is denoted $Q(k)$.
		The movement $\{ \triangleright, \triangleleft\}$ taken on the transition from $s_k$ to $s_{k+1}$ is denote $mov(k)$.
		`The' model $\Imc$ that represents the run of $\Mmc$ on input $\bar x$ has domain $\Delta^\Imc := \Smc \times \{ 0, \dots, 2^n - 1 \}$.
		The interpretation of concept names $\address_i$, $\address$, $\symb$ is $\Delta^\Imc$.
		The interpretation of other concept names is defined according to the following rules: an element $(s_k, m) \in \Delta^\Imc$ satisfies:
		\[
		\begin{array}{c}
			\bit_i^b \text{\ ~iff\ ~} b_i(m) = b
			\qquad
			\succbit_i^b \text{\ ~iff\ ~} b_i(m+1) = b \text{ and } m \neq 2^n - 1
			\qquad
			\predbit_i^b \text{\ ~iff\ ~} b_i(m-1) = b \text{ and } m \neq 0
			\smallskip\\
			\firstzero_i \text{\ ~iff\ ~} b_i(m) = 1 \text{ for all } j < i \text{ and } b_i(m) = 0
			\qquad
			\firstone_i \text{\ ~iff\ ~} b_i(m) = 0 \text{ for all } j < i \text{ and } b_i(m) = 1
			\smallskip \\
			\lastoftape \text{\ ~iff\ ~} m = 2^n - 1
			\qquad
			\overline{\lastoftape} \text{\ ~iff\ ~} m \neq 2^n - 1
			\qquad
			\firstoftape \text{\ ~iff\ ~} m = 0
			\qquad
			\overline{\firstoftape} \text{\ ~iff\ ~} m \neq 0
			\smallskip \\
			\head \text{\ ~iff\ ~} h(k) = m
			\qquad
			\overline{\head} \text{\ ~iff\ ~} h(k) \neq m
			\qquad
			\dscOK \text{\ ~iff\ ~} h(k) = m 
			\qquad 
			\ascOK \text{\ ~iff\ ~} h(k) = m
			\smallskip \\
			\symb_c \text{\ ~iff\ ~} t_k(m) = c
			\qquad
			\state_q \text{\ ~iff\ ~} Q(k) = q
			\smallskip \\
			\moved_\triangleleft \text{\ ~iff\ ~} k > 0 \text{ and } h(k) = h(k-1) - 1
			\qquad
			\moved_\triangleright \text{\ ~iff\ ~} k > 0 \text{ and } h(k) = h(k-1) + 1
			\smallskip \\
			\imported_i \text{\ ~iff\ ~} k \neq K
			\qquad
			\imported \text{\ ~iff\ ~} k \neq K
			\qquad
			\exported_{i, c} \text{\ ~iff\ ~} k > 0 \text{ and } t_{k - 1}(m) = c
			\qquad
			\exported_c \text{\ ~iff\ ~} k > 0 \text{ and } t_{k - 1}(m) = c
			\smallskip \\
			\towrite_c \text{\ ~iff\ ~} k > 0  \text{ and } t_k(h(k)) = c
			\qquad
			\copysymbol \text{\ ~iff\ ~} k > 0 \text{ and } h(k - 1) \neq m
			\qquad
			\replaceby_{c} \text{\ ~iff\ ~} k > 0 \text{ and } h(k - 1) = m
			\smallskip \\
			\goal \text{\ ~iff\ ~} k = K \text{ and } m = h(k)
		\end{array}
		\]
		The interpretation of roles is given in the same manner: a pair $((s_\ell, m_1), (s_k, m_2)) \in (\Delta^\Imc)^2$ satisfies:
		\begin{align*}
			\transition_{q, c, m, i, b} & \text{\ ~iff\ ~} k = \ell + 1, Q(\ell) = q, m_1 = h(\ell), t_\ell(m_1) = c, b = b_i(m_1), m_2 = h(\ell + 1), m = mov(\ell)
			\\
			\generate_{\triangleright, q, i, b, \towrite_c} & \text{\ ~iff\ ~} k = \ell \geq 0, Q(\ell) = q, m_1 \geq h(\ell), m_2 = m_1 + 1, b = b_i(m_2), m_2 = h(\ell - 1), t_\ell(h(k)) = c
			\\
			\generate_{\triangleright, q, i, b, \copysymbol} & \text{\ ~iff\ ~} k = \ell \geq 0, Q(\ell) = q, m_1 \geq h(\ell), m_2 = m_1 + 1, b = b_i(m_2), m_2 \neq h(\ell - 1)
			\\
			\generate_{\triangleleft, q, i, b, \towrite_c} & \text{\ ~iff\ ~} k = \ell \geq 0, Q(\ell) = q, m_1 \leq h(\ell), m_2 = m_1 1 1, b = b_i(m_2), m_2 = h(\ell - 1), t_\ell(h(k)) = c
			\\
			\generate_{\triangleleft, q, i, b, \copysymbol} & \text{\ ~iff\ ~} k = \ell \geq 0, Q(\ell) = q, m_1 \leq h(\ell), m_2 = m_1 - 1, b = b_i(m_2), m_2 \neq h(\ell - 1)
			\\
			\generate_{\textvisiblespace, i, b} & \text{\ ~iff\ ~} k = \ell = 0, m_2 = m_1 + 1, m_1 \geq n, b = b_i(m_2)
			\\
			\import_{i, b} & \text{\ ~iff\ ~} \ell = k + 1, m_1 = m_2, b = b_i(m_1)
			\\
			\export_{i, b, c} & \text{\ ~iff\ ~} k = \ell + 1, m_1 = m_2, b = b_i(m_1), t_\ell({m_1}) = c
			\\
			\report_{\firstoftape} & \text{\ ~iff\ ~} k = \ell, m_1 = 2^n - 1, m_2 = h(k)
			\\
			\report_{\lastoftape} & \text{\ ~iff\ ~} k = \ell, m_1 = 0, m_2 = h(k)
		\end{align*}
		It is tedious but not hard to verify the following claim:
		\begin{claim}
			\label{claim:intended-model-is-minimal}
			$\Imc$ is a minimal model of $\kb$.
		\end{claim}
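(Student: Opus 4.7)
My plan is to split the verification into two parts: (1) showing $\Imc \models \kb$ axiom by axiom, and (2) showing no $\Jmc \subsetneq \Imc$ is still a model. For part (1), the bulk of the work is routine but has to be organized carefully. I would group the axioms into blocks and check each block: the position-arithmetic block ($\bit_i^b$, $\succbit_i^b$, $\predbit_i^b$, $\firstzero_i$, $\firstone_i$, $\firstoftape$, $\lastoftape$ and their complements) boils down to elementary facts about binary increments; the transition block exploits determinism of $\delta$ so that for each element of shape $(s_k, h(k))$ with $k < K$ there is exactly one $\transition_{q',c',m,i,b}$-successor, namely $(s_{k+1}, h(k+1))$; the tape-generation block ($\generate_\triangleleft$, $\generate_\triangleright$, $\generate_{\textvisiblespace, i, b}$) is verified by observing that these roles run along successor/predecessor cells until the boundaries $\firstoftape$ or $\lastoftape$ are reached, at which point the $\report$ roles collapse back to the head; the import/export block is checked by noting that every non-initial cell with fully specified $\address$ reads the symbol at the same tape index on the previous configuration. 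Finally, the disjointness axioms and the goal axioms $\head \sqcap \state_q \sqcap \symb_c \sqcap \ascOK \sqsubseteq \goal$ hold by construction of $\Imc$ and of $\Smc$.

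For part (2), the key conceptual point is a uniqueness-of-witness observation: since $\Jmc \subseteq \Imc$, any witness for an existential axiom inside $\Jmc$ must already be a witness in $\Imc$, and the construction of $\Imc$ is such that for every firing left-hand-side there is exactly one candidate witness available in $\Imc$. Consequently, the ``choice'' of witness in $\Jmc$ is forced, and existential axioms behave as if they were Horn. I would then run an induction on $k$, proving the following invariant: for every $k \in \{0, \dots, K\}$, all facts of $\Imc$ supported on $\Smc_{\leq k} \times \{0, \dots, 2^n -1\}$ together with the cross-configuration roles $\import$, $\export$ between configurations $s_{k-1}$ and $s_k$ belong to $\Jmc$.

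The base case $k = 0$ is immediate from the ABox. For the step from $k$ to $k+1$, I would proceed in stages: first, the head cell $(s_k, h(k))$ already has $\head, \state_{Q(k)}, \symb_{t_k(h(k))}, \ascOK$ in $\Jmc$ (the invariant for $k$ ensures this, because $\ascOK$ at $(s_k, h(k))$ is produced by the successful tape generation at step $k$, which requires both $\dscOK$ and the final $\report_{\lastoftape}$); next, the transition axiom generates a $\transition_{q',c',m,i,b}$-successor, which by uniqueness must be $(s_{k+1}, h(k+1))$, and the inverse axiom forces $\head, \state_{Q(k+1)}, \towrite_{t_{k+1}(h(k+1))}, \moved_{mov(k)}, \bit_i^b$ at that element in $\Jmc$; then the $\import/\export$ round-trip must take place, forcing the cross-configuration roles; finally the $\generate_\triangleleft$ and $\generate_\triangleright$ chains with the two $\report$ collapses rebuild the entire tape at step $k+1$ in $\Jmc$. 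At each stage the uniqueness-of-witness observation pins down the element chosen in $\Jmc$ to be the one used in $\Imc$.

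I expect the main obstacle to be the interplay between $\import/\export$ and the generation roles: one has to rule out ``shortcut'' $\Jmc$ that reuses an $\exported_c$ fact already produced for the previous tape instead of producing the fresh one required at step $k+1$. Luckily, because $\import_{i,b}$ is a \emph{fresh} role per configuration pair and each cell at step $k+1$ produces its own $\import$-chain, the minimality of each individual $\exists \export_{i,b,c}$ witness is preserved: the same element can legitimately carry several $\exported_{i,c}$ memberships in $\Imc$, but each is justified by a distinct $\export_{i,b,c}^-$ edge that cannot be removed without breaking the existential axiom at the corresponding cell of step $k+1$. A careful case distinction on the inverse axioms combined with the uniqueness-of-witness observation then yields that every membership and every tuple of $\Imc$ lies in $\Jmc$, contradicting $\Jmc \subsetneq \Imc$.
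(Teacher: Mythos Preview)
Your plan is essentially the paper's own approach: check modelhood axiom by axiom, then prove minimality by showing that every fact of $\Imc$ is forced, via an induction along the order in which facts are generated, using that each existential axiom has a \emph{unique} witness in $\Imc$. The paper organizes this with an explicit strict total order $<$ on $\Delta^\Imc$ (first by configuration index, then within a configuration: descending from the head to position $0$, then ascending from the head to position $2^n-1$), whereas you induct on the configuration index $k$ with informal ``stages'' inside each step; these are the same argument in different clothing.

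Two small points to tighten. First, the base case $k=0$ is not ``immediate from the ABox'': the ABox only populates positions $0,\dots,n$, and the remaining $2^n-n-1$ blank cells, together with $\dscOK$ and $\ascOK$ at $(s_0,0)$, require the same uniqueness-of-witness chaining along $\generate_{\textvisiblespace,i,b}$ and the $\report$ collapses that you describe for the inductive step. Second, your invariant as stated does not quite close: the concepts $\imported_i$ and $\imported$ at cells of configuration $s_k$ are triggered by $\import$-edges coming \emph{from} configuration $s_{k+1}$, so they belong to the step $k{+}1$ of your induction, not step $k$. The paper handles exactly this by listing $\imported$, $\imported_i$, $\dscOK$, $\ascOK$, and the $\report$ facts as temporary exceptions that get absorbed one step later; you should make the analogous adjustment to your invariant.
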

		Modelhood can be checked rule by rule, simply examining the definition of $\Imc$.
		For minimality, one can use the following order $<$ on elements of $\Delta^\Imc$ to verify that each fact is indeed necessary, so that the whole model is in fact minimal.
		Given $(s_\ell, p), (s_k, q) \in \Delta^\Imc$, we denote $(s_\ell, p) < (s_k, q)$ iff one of the following (mutually exclusive) conditions holds:
		\begin{itemize}
			\item $\ell < k$;
			\item $\ell = k$ and $q < p \leq h_k$;
			\item $\ell = k$ and $p \leq h_k < q$;
			\item $\ell = k$ and $h_k < p < q$.
		\end{itemize}
		It is not hard to verify that $<$ defines a strict total order on $\Delta^\Imc$.
		Intuitively, it roughly describes which elements a given one depends on (it depends on its predecessors w.r.t.\ $<$): the starting elements, $(s_0, 0)$ up to $(s_0, n)$ correspond to the ABox individuals, with $(s_0, 0)$ being the smallest element in $\Delta^\Imc$ w.r.t.\ $<$.
		Since the head is initially placed at position $0$, the other elements on the first tape are ordered according to the fourth bullet in the definition of $<$, which corresponds to how the initial blank cells are generated in increasing order w.r.t.\ their coordinate.
		Now, once a tape is complete, the current head may satisfy the predicate $\ascOK$ (obtained from the last cell pointing back to the head with role $\report_{\lastoftape}$), which triggers the generation of the head on the next tape (unless the $\goal$ is derived in case the represented computation reached a final state).
		This explains the first bullet in the definition of $<$.
		Now, in general, the new head will not be on the first cell of the tape (with coordinate $0$), and recall that we first generate the cells in descending order from the head, which corresponds to the second bullet in the definition of $<$.
		Once the first cell of the current tape has been reached, it can report back to the current head with concept $\dscOK$ (by pointing the role $\report_{\firstoftape}$ on the current head), which triggers the generation of cells in increasing order from the head position (hence the fourth bullet in the definition of $<$, and then the third is useful again).

		We now prove the $(\Rightarrow)$ direction of the claim: assume that $\Mmc$ accepts on input $\bar x$.
		Therefore the sequences $s_0, \dots, s_M$ of successive configurations is finite.
		It then follows that, by definition of $\Imc$, the element $(s_M, h(M)) \in \Delta^\Imc$ does not start generating the next tape but instead satisfies $\goal$.
		Claim~\ref{claim:intended-model-is-minimal} guarantees modelhood and minimality, which concludes the $(\Rightarrow)$ direction.

		To prove the $(\Leftarrow)$ is more technical; and we sketch the most important arguments.
		Assume that there exists a model $\Jmc$ that satisfies $\goal$.
		We aim to prove that $\Jmc$ is isomorphic to $\Imc$.
		The intuition is that `as soon as' a model $\Jmc$ stops representing the actual computation of $\Mmc$ on input $\bar x$, as described in $\Imc$, then it may produce up to $n$ extra elements (\emph{e.g.}\ corresponding to the endpoints of a combination of roles that failed to collapse together), but those non-properly-collapsed elements cannot possibly support $\goal$. 
		In other words, the model $\Imc$ is the \emph{only one} (up to isomorphism) that can possibly satisfy $\goal$.
		To formalize this, we iteratively construct an injective homomorphism $\rho : \Imc \rightarrow \Jmc$ using the assumption that $\Jmc$ satisfies $\goal$.
		It is then immediate that $\Jmc = \rho(\Imc)$ (as otherwise $\rho(\Imc)$ would contradict the minimality of $\Jmc$).
		From there, it follows that $\Imc$ satisfies $\goal$ and, since $\Imc$ represents the run of $\Mmc$ on $\bar x$, that $\Mmc$ accepts $\bar x$ as $\goal$ may only be inferred from an accepting state.
		
		We start from $\rho_0 : \{ (s_0, 0), \dots, (s_0, n) \} \rightarrow \Delta^\Jmc$ being defined as $(s_0, i) \mapsto \istyle{a}_k^\Jmc$ (recall that the $\istyle{a}_k$ are the individuals from the ABox).
		The domain of $\rho$ is then extended according to $<$: if $\rho$ is defined on $\Delta \subseteq \Delta^\Imc$, we take the next element w.r.t.\ $<$ that is not in $\Delta$ yet.
		As we start from $\{ (s_0, 0), \dots, (s_0, n) \}$ and that $<$ is a linear order, this next element is always well-defined.
		Given $e \in \Delta^\Imc$, let us denote $\Delta_e := \{ d \in \Delta^\Imc \mid d < e \}$, and we denote $\rho_e$ the mapping defined from $\Delta_e$ to $\Delta^\Jmc$.
		A subtle point in the construction in that we cannot guarantee that all intermediate $\rho_e$ are actually homomorphisms $\Imc\vert_{\Delta_e} \rightarrow \Jmc$.
		This is due to the predicates $\report_{\firstoftape}$, $\report_{\lastoftape}$, $\dscOK$, $\ascOK$, $\imported_{i}$, $\imported$ that are derived to comply with the existential needs of some elements further in the order.
		In fact, one could refine the order $<$ to work directly on the facts of $\Imc$, roughly following the one presented here, apart from those special predicates.
		We omit such a cumbersome construction and rather claim that the following invariant property can be proven using the presented definition of $<$:
		\begin{claim}
			\label{claim:invariant}
			For every element $(s_\ell, q) \in \Delta^\Imc$, the mapping $\rho_{(s_\ell, q)} : \Delta_{(s_\ell, q)} \rightarrow \Delta^\Jmc$ defines an injective homomorphism from $\Imc\vert_{\Delta_{(s_\ell, q)}} \rightarrow \Jmc$, except on the following facts satisfied by $\Imc$:
			\begin{itemize}
				\item $\dscOK((s_\ell, h(\ell)))$ and $\report_{\firstoftape}((s_\ell, 0), (s_\ell, h(\ell)))$ if $0 < q \leq h(k)$;
				\item $\ascOK((s_\ell, h(\ell)))$ and $\report_{\lastoftape}((s_\ell, 2^n - 1), (s_\ell, h(\ell)))$ if $q < 2^n - 1$;
				\item $\imported_{i}((s_{\ell - 1}, p))$ and $\imported((s_{\ell - 1}, p))$ if $\ell > 0$ and $(s_\ell, p) > (s_\ell, q)$;
				\item $\imported_{i}((s_{\ell}, p))$ and $\imported((s_{\ell}, p))$ if $s_\ell$ is not the last configuration and $(s_\ell, p) \leq (s_\ell, q)$;
				\item $\goal((s_{\ell}, p))$ if $s_\ell$ is the last configuration and $p < 2^n - 1$.
			\end{itemize}
		\end{claim}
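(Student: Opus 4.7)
\medskip
\noindent My approach is induction on the well-order $<$ over $\Delta^\Imc$, following the iterative construction of $\rho$. The base case verifies that $\rho_0$ restricted to $\{ (s_0, 0), \dots, (s_0, n) \}$ is an injective homomorphism into $\Jmc$; this follows directly from $\Jmc \models \abox$ together with the distinctness of the ABox individuals (simulable in $\dllitehorn$). For the inductive step, assume $\rho_e$ satisfies the invariant. I extend $\rho$ to the $<$-successor $e^+$ of $e$ as follows. By the definition of $\Imc$, the element $e^+$ exists in order to witness some existential axiom triggered by an earlier element $d \in \Delta_{e^+}$, via one of the roles $\generate_\cdot$, $\transition_\cdot$, $\report_\cdot$, $\import_\cdot$, or $\export_\cdot$. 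The induction hypothesis ensures that $\rho(d)$ already satisfies in $\Jmc$ all the concept facts on the left-hand side of that axiom---none of these are on the exception list---so $\Jmc \models \tbox$ yields a suitable witness $w \in \Delta^\Jmc$, and I set $\rho(e^+) := w$.

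\medskip
\noindent Injectivity is preserved thanks to the disjointness axioms built into the TBox: between $\bit_i^0$ and $\bit_i^1$, between distinct states $\state_q$, between distinct symbols $\symb_c$, between $\moved_\triangleleft$ and $\moved_\triangleright$, and between $\head$ and $\nothead$. A hypothetical collision $\rho(e^+) = \rho(d')$ with $d' \in \Delta_e$ would force $\rho(e^+)$ to satisfy the concept annotations of both $e^+$ and $d'$ in $\Jmc$; the definition of $\Imc$ together with the order $<$ guarantees that these annotations disagree on at least one such disjointness pair, contradicting $\Jmc \models \tbox$. The remaining homomorphism clauses are then checked by inspection: the only new facts in $\Imc|_{\Delta_{e^+}}$ compared to $\Imc|_{\Delta_e}$ involve $e^+$ (and the triggering element $d$), and each such fact is either realized at $\rho(e^+)$ by the choice of witness, or is one of the listed exceptions at step $e^+$.

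\medskip
\noindent The main obstacle is the careful bookkeeping of the exception set as $(s_\ell, q)$ evolves along $<$. Each listed exception corresponds to a fact in $\Imc$ whose sole purpose is to support an existential requirement of some strictly later element: $\dscOK$ and $\report_{\firstoftape}$ at the head arise only once $\rho$ has reached cell $(s_\ell, 0)$ at the end of the descending phase; $\ascOK$ and $\report_{\lastoftape}$ are analogous for the ascending phase; $\imported$-facts mirror $\import$-edges back from cells on tape $s_\ell$ not yet processed; and the premature $\goal$ facts on the final tape reflect a propagation that only kicks in once the head cell of that tape is processed last. Proving that the transition from the exceptions at $e$ to those at $e^+$ matches exactly the set of facts contributed by the chosen witness axiom requires a case analysis over the type of $e^+$ (endpoint of a tape, ordinary generated cell, head cell, or cross-tape neighbour), but in each case the relevant fact either leaves the exception set because its supporting element has now been processed, or remains in it because its supporting element is still ahead in $<$. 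This case analysis, rather than any deep argument, is where the bulk of the technical work lies.
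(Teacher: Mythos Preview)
Your inductive scheme along the order $<$ is the same as the paper's, and your treatment of the exception bookkeeping is sound in spirit. However, there is a genuine gap in the step where you extend $\rho$ to $e^+$.

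You write that ``$e^+$ exists in order to witness \emph{some} existential axiom triggered by an earlier element $d$'' and then set $\rho(e^+)$ to be \emph{a} witness $w$ of that one axiom in $\Jmc$. But in the construction of $\kb$, each new cell $e^+$ is the common target of a \emph{combination} of $n$ roles (one per bit index $i$), each arising from a \emph{separate} existential axiom of the form $\cdots \sqcap \predbit_i^b \sqsubseteq \exists \transition_{q',c',m,i,b}$ (or the analogous $\generate$/$\import$/$\export$ families). Modelhood of $\Jmc$ only gives you $n$ witnesses $w_0,\dots,w_{n-1}$, and a single $w_i$ is only guaranteed to satisfy $\bit_i^{b_i}$, not the full position encoding. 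Nothing in the TBox forces these $n$ witnesses to coincide: the disjointness axioms $\bit_i^0 \sqcap \bit_i^1 \sqsubseteq \bot$ prevent an element from carrying conflicting bits, but they do not force an element to carry \emph{all} bits, so the $w_i$ can simply be distinct elements each carrying one bit. For the same reason, your injectivity argument breaks down: you presuppose that $\rho(e^+)$ already satisfies in $\Jmc$ all concept annotations of $e^+$, but that is precisely what has not been established.

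The missing ingredient is the one the paper uses: the \emph{minimality} of $\Jmc$ together with the assumption that $\Jmc$ satisfies $\goal$. If the $n$ witnesses fail to collapse, one restricts $\Jmc$ to $\rho_{(s_\ell,q)}(\Delta_{(s_\ell,q)}) \cup \{w_0,\dots,w_{n-1}\}$ and observes that this yields a model $\Jmc' \subsetneq \Jmc$ in which $\goal$ is not derived (the combination of bits needed to trigger $\address$, and hence any further progress, is missing). Minimality then forces $\Jmc = \Jmc'$, contradicting $\goal^\Jmc \neq \emptyset$. Hence $w_0 = \cdots = w_{n-1}$, and this single element carries the full bit pattern, from which the homomorphism and injectivity conditions follow. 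Your proof never invokes minimality of $\Jmc$ or the hypothesis $\goal^\Jmc \neq \emptyset$; without them the argument cannot go through.
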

		It is not hard to check that this invariant is satisfied by $\rho_0$ (which is also $\rho_{(s_0, n)}$).
		We sketch the induction step.
		Assume constructed $\rho_{(s_\ell, q)}$ such that is satisfies Claim~\ref{claim:invariant}.
		There are 3 cases to distinguish:
		\begin{itemize}
			\item If $s_\ell$ is the last configuration and $q = 2^n - 1$, then we are done: the induction hypothesis actually guarantees that $\rho_{(s_\ell, q)} : \Imc \rightarrow \Jmc$ is an injective homomorphism.
			\item If $s_\ell$ is not the last configuration and $q = 2^n -1$, then, the induction hypothesis ensures that $\rho((s_\ell, h(\ell)))$ triggers the transition rules:
			\[
			\head \sqcap \state_{Q(\ell)} \sqcap \symb_{t_\ell(h(\ell))} \sqcap \ascOK \sqcap \predbit_i^{b_i(h(\ell))} \sqsubseteq \exists \transition_{Q(\ell + 1), t_{\ell + 1}(h(\ell + 1)), m, i, b_i(h(\ell))}
			\]
			for every $0 \leq i < n$, and for one $m \in \{ \triangleright, \triangleleft \}$.
			Since $\Jmc$ is a model, the element $\rho_{(s_\ell, p}((s_\ell, h(k)))$ must have $n$ corresponding successors in $\Jmc$.
			If those successors $e_1, \dots, e_n$ are not the very same element, then it can be verified that one can obtain a model $\Jmc' \subset \Jmc$ restricted to $\rho_{(s_\ell, q)}(\Delta_{(s_\ell, q)}) \cup \{ e_1, \dots, e_n \}$, that does not satisfies $\goal$.
			The minimality of $\Jmc$ then forces $\Jmc$ to be $\Jmc'$, which contradicts $\Jmc$ satisfying $\goal$.
			Therefore, $e_1 = e_2 = \dots = e_n$, and we use this single successor to define $\rho_{(s_{\ell + 1}, h(\ell + 1))}$ as the extension of $\rho_{(s_\ell, q)}$ that additionally maps $(s_{\ell + 1, h(\ell + 1)})$ to $e_1$.
			Verifying that $\rho_{(s_{\ell + 1}, h(\ell + 1))}$ now satisfies Claim~\ref{claim:invariant} can be argued as above, meaning that if the interpretation of missing roles does not behave exactly as in $\Imc\vert_{\Delta_{(s_{\ell + 1}, h(\ell + 1))}}$, then one can extract again some $\Jmc' \subseteq \Jmc$ that does not satisfy $\Jmc$. 
			\item If $q < 2^n -1$, then the argument is similar to the previous case, but this time involves a combination of generated roles $\generate_{m, q, i, b_i, o}$ for $0 \leq i < n$ and fixed state $q$, direction $m$, and `instruction' $o$ ($\copysymbol$ or $\replaceby_{c}$).
		\end{itemize}
		
		As every intermediate $\rho_e$ extends the previous ones, we obtain the desired $\rho$ by setting:
		\[
		\rho := \bigcup_{e \in \Delta^\Imc} \rho_e.
		\]
		Note that if the sequence $s_1, \dots, s_k, \dots$ is infinite, then $\rho$ is guaranteed to be a homomorphism are no fact can be `left-behind' by Claim~\ref{claim:invariant} for more than 1 step $s_k$.
		If $s_1, \dots, s_K$ is finite, then one can verify that Claim~\ref{claim:invariant} applied to $(s_K, 2^n - 1)$ covers all facts from $\Imc$.
		In both cases, $\rho : \Imc \rightarrow \Jmc$ is a injective homomorphism as desired.
		 
	\end{toappendix}

 \subsection{Tuple Generating Dependencies}

 $\EL$ without $\top$ can be seen as a small fragment of
 \emph{Tuple Generating Dependencies (TGDs)}, which are prominent in
 the Database Theory literature~(see, e.g., \cite{FAGIN200589,CALI201287}. Thus our lower bounds carry
 over to minimal model reasoning in TGDs, for problems like
 \emph{brave entailment} of an atom, or for checking non-emptiness of
 a relation in some minimal model of a database and input
 TGDs. Specifically, an $\EL$ TBox without $\top$ can be converted
 into the so-called \emph{guarded TGDs} with relations of arity at
 most 2. Minimal model reasoning over TGDs has been explored
 in~\cite{10.1145/3034786.3034794}, where an undecidability result was
 achieved using relations of arities up to 4 in the context of  the \emph{stable model semantics}.
 Our Theorem~\ref{theorem:el-is-undecidable-without-top} implies that
 checking the existence of a stable model for \emph{normal guarded} TGDs is
 undecidable already for theories of the form
 $\Sigma\cup \{ \neg g(\vec{t})\rightarrow \bot\}$, where $\Sigma$ has
 negation-free guarded TGDs with relations of arity $\leq 2$, and
 $g(\vec{t})$ is a ground atom. 
 Similarly, our $\Sigma^{P}_2$ lower bound in data complexity can be used to improve the $\Pi^{P}_2$ lower bound in~\cite{10.1145/3034786.3034794}, that relies on predicates of arity $> 2$, for weakly acyclic TGDs with stable negation.

\subsection{$\EL$ without the UNA}
Finally, we make an interesting observation about the role of the UNA in the presented results. 
Our hardness proofs all rely on the UNA, and use more than one individuals that must be interpreted as different objects in the domain. 
This is no coincidence: if we drop the UNA, \mms in $\ELIO$ and $\EL$ is not only decidable, it is even tractable.

    \begin{theoremrep}
		$\mms$ in $\ELIO$ is $\PTime$-complete; the lower bound holds already for $\EL$.
	\end{theoremrep}
\noindent 
Even without the UNA, some concepts may not be satisfiable in a minimal model, but an $\ELIO$ KB now has a `representative' minimal model with just one element.
This representative model can be computed in polynomial time via a fixpoint computation akin to building a minimal model of a propositional definite Horn logic program. 
The lower bound also follows easily from the latter setting.

	\begin{proof}
		We first establish $\PTime$-membership.
		Consider the following algorithm that takes as input an $\ELIO$ KB $\kb := (\tbox, \abox)$ and a concept $\cstyle{goal}$ of interest. 
		Consider the initial interpretation $\I_0$ whose domain is $e$ and that interpret every individual name $\istyle{a}$, concept name $\cstyle{A}$ and role name $\rstyle{p}$ as follows:
		\begin{align*}
			\istyle{a}^{\Imc_0} := ~ & e
			\\
			\cstyle{A}^{\Imc_0} := ~ &
			\{ e \mid \text{there exists } \cstyle{A}(\istyle{a}) \in \abox \}
			\\
			\rstyle{p}^{\Imc_0} := ~ &
			\{ (e, e) \mid \text{there exists } \cstyle{p}(\istyle{a}, \istyle{b}) \in \abox \}.
		\end{align*}
		Every concept inclusions in $\tbox$ is initially consider `unmarked'.
		Initialize the current interpretation $\Imc$ by setting $\Imc := \Imc_0$.
		Now, given a current interpretation $\Imc$, while there exists a concept inclusion $\cstyle{C} \sqsubseteq \cstyle{D} \in \tbox$ that is unmarked and such that $e \in (\cstyle{C} \sqcap \lnot \cstyle{D})^\Imc$, then: extend $\Imc$ by adding $e$ (resp.\ $(e, e)$) to $\cstyle{A}^\Imc$ (resp.\ $\rstyle{p}^\Imc$) for each concept name $\cstyle{A}$ (resp.\ role name $\rstyle{p}$) occurring in $\cstyle{D}$, and `mark' the concept inclusion $\cstyle{C} \sqsubseteq\cstyle{D}$.
		When there is no more concept inclusion $\cstyle{C} \sqsubseteq \cstyle{D} \in \tbox$ that is unmarked and such that $e \in (\cstyle{C} \sqcap \lnot \cstyle{D})^\Imc$, then let $\Imc_f$ be the last current interpretation.
		If $e \in \cstyle{goal}^{\Imc_f}$, accepts, otherwise rejects.
		
		Note that the procedure is fully deterministic and run in the worst case in quadratic time w.r.t.\ the input $\kb$.
		We claim that it accepts iff $\cstyle{goal}$ is satisfiable in a minimal model of $\kb$, which will conclude the proof of the upper bound.
		
		\noindent
		\emph{Soundness:} 
		Assume the procedure accepts.
		Then it is trivial that $\Imc_f$ is a minimal model of $\kb$ that satisfies $\cstyle{goal}$.
		
		\noindent
		\emph{Completeness:} 
		Consider a minimal model $\Imc$ satisfying $\cstyle{goal}$ and assume by contradiction that $\Imc_f$ does not satisfy $\cstyle{goal}$.
		Drop the interpretation of all predicates that are satisfied in $\Imc$ but not in $\Imc_f$.
		Doing so, we obtain an interpretation $\Jmc \subsetneq \Imc$.
		We prove that $\Jmc$ is a model, which contradicts the minimality of $\Imc$.
		Consider a concept inclusion $\cstyle{C} \sqsubseteq \cstyle{D}$ and $d \in \cstyle{C}^{\Jmc}$.
		In particular, $d \in \cstyle{C}^{\Imc}$ thus $d \in \cstyle{D}^\Imc$ as $\Imc$ is a model of $\kb$.
		Furthermore, by definition of $\Jmc$, every predicate it satisfies is also satisfied in $\Imc_f$, and thus $e \in \cstyle{C}^{\Imc_f}$.
		Therefore, by construction of $\Imc_f$, we obtain $e \in \cstyle{D}^{\Imc_f}$.
		Thus all predicates occurring in $\cstyle{D}$ are satisfied in $\Imc_f$ and thus have been kept when constructing $\Jmc$ from $\Imc$.
		Recalling that $d \in \cstyle{D}^\Imc$, it follows that $d \in \cstyle{D}^{\Jmc}$.
		
		We now proceed to prove $\PTime$-hardness and reduce the well-known $\PTime$-complete problem of deciding whether an input Boolean circuit $\Cmc$ with $n$ input gates accepts a Boolean input $b_1, \dots, b_n$.
		For each input gate $G_k$ of $\Cmc$, we add the assertion $G_k(\istyle{a})$ if $b_k = 1$ and add assertion $\overline{G_k}(\istyle{a})$ otherwise.
		For each conjunctive gate $G$ of $\Cmc$ that takes as input gates $G'$ and $G''$, we add axioms:
		\[
			G' \sqcap G'' \sqsubseteq G
			\qquad
			\overline{G'} \sqsubseteq \overline{G}
			\qquad
			\overline{G''} \sqsubseteq \overline{G}.
		\]
		For each disjunctive gate $G$ of $\Cmc$ that takes as input gates $G'$ and $G''$, we add axioms:
		\[
		G' \sqsubseteq G
		\qquad
		G'' \sqsubseteq G
		\qquad
		\overline{G'} \sqcap \overline{G''} \sqsubseteq \overline{G}.
		\]
		For each negation gate $G$ of $\Cmc$ that takes as input $G'$, we add axioms:
		
		\[
		\overline{G'} \sqsubseteq G
		\qquad
		G' \sqsubseteq \overline{G}.
		\]
		Let $G_f$ be the output gate of $\Cmc$.
		It is straightforward to verify that the concept $G_f$ is satisfiable in a minimal model of the above KB iff $\Cmc$ outputs true on input $b_1, \dots, b_n$.
	\end{proof}

	\section{Conclusion}

We have explored the challenges of reasoning with minimal models in description logics, and shown that enforcing minimality across all predicates leads to undecidability even in the lightweight $\EL$. This directly implies that minimal model reasoning in very restricted fragments of guarded TGDs with  predicate arities $\leq 2$ is also undecidable. 
Strong and weak acyclicity conditions allowed us to regain decidability and establish tight bounds on combined and data complexity in $\EL$, $\ELIO$, and even a fragment of $\ALCIO$.
Some of these bounds are inherited for the recently studied setting of pointwise circumscription, providing further evidence that local, pointwise minimization is about the best we can do if we are interested in minimal models in DLs. 
It remains to be explored whether acyclicity conditions and pointwise minimization might also be useful in the richer setting of TGDs. 
One of the most intriguing avenues left open for further investigation is DL-Lite: we know that minimization is almost for free in $\dllitecore$, but makes reasoning $\ExpSpace$-hard for its extension  $\dllitehorn$. 
We hope that this variant and even more expressive extensions like $\dllitebool$ may be decidable, and plan to look for tight matching complexity bounds.

	\section*{Acknowledgments}
	
	This work was partially supported by the Austrian Science Fund (FWF) projects PIN8884924, P30873 and
	10.55776/COE12.
	\newline\noindent
	The authors acknowledge the financial support by the Federal Ministry of Research, Technology and Space of Germany and by Sächsische Staatsministerium für Wissenschaft, Kultur und Tourismus in the programme Center of Excellence for AI-research ``Center for Scalable Data Analytics and Artificial Intelligence Dresden/Leipzig'', project identification number: ScaDS.AI

	\appendix

	\bibliographystyle{kr}
	\bibliography{main}

\end{document}